\documentclass[10pt,journal,compsoc]{IEEEtran}

\usepackage{amsmath,amssymb,mathtools,amsopn,amsfonts,pdfpages,datetime,setspace,bm}
\usepackage[nice]{nicefrac}
\usepackage{amsthm,thmtools,thm-restate}
\usepackage{array}
\newcolumntype{C}[1]{>{\centering\arraybackslash$}m{#1}<{$}}
\usepackage{algorithmic}
\usepackage{algorithm}
\usepackage{enumitem}
\usepackage{graphicx}
\usepackage{hyperref}
\usepackage{multicol}
\usepackage{multirow}
\usepackage{tikz}
\usetikzlibrary{arrows,shapes,snakes,automata,backgrounds,petri}
\tikzstyle{observation} = [shading=axis, shading angle=10]
\usepackage{subcaption}
\usepackage{todonotes}
\usepackage[nameinlink, noabbrev]{cleveref}
\usepackage{changes}
\raggedbottom

\newtheorem{lemma}{Lemma}

\newtheorem{definition}{Definition}

\newtheorem*{remark*}{Remark} 

\newtheorem*{lemma*}{Lemma}
\newtheoremstyle{break}
  {\topsep}{\topsep}%
  {\itshape}{}%
  {\bfseries}{}%
  {\newline}{}%
\theoremstyle{break}

\declaretheorem[name=Conjecture,]{conj}

\declaretheorem[
    name=Definition, 
    numberwithin=subsection,
    refname={definition,definitions},
    Refname={Definition,Definitions},
]{defi}

\declaretheorem[
    name=Proposition, 
    numberwithin=subsection,
    refname={proposition,propositions},
    Refname={Proposition,Propositions},
]{prp}

\declaretheorem[
    name=Proposition, 
    refname={proposition,propositions},
    Refname={Proposition,Propositions},
]{prop}

\newcommand{\argmin}{\operatornamewithlimits{argmin}}

\newcommand{\defeq}{\vcentcolon=}
\newcommand{\eqdef}{=\vcentcolon}
\newcommand{\ErgoVEC}{ErgoVEC}
\newcommand{\GramVEC}{GramVEC}
\newcommand{\GramPMI}{GramPMI}
\newcommand{\NucGramVEC}{NucGramVEC}
\newcommand{\GramErgoVEC}{GramErgoVEC}
\newcommand{\ErgoPMI}{ErgoPMI}
\newcommand{\GramErgoPMI}{GramErgoPMI}
\newcommand{\NucGramErgoVEC}{NucGramErgoVEC}

\ifCLASSOPTIONcompsoc
  \usepackage[nocompress]{cite}
\else
  \usepackage{cite}
\fi

\newif\ifcomments
\commentstrue

\ifcomments
    \usepackage{ulem}
    \usepackage{xcolor}
    \def\piedit#1{{$\!$\color{orange} [\textbf{PI:} #1]}}
    \def\dsedit#1{{$\!$\color{purple} [\textbf{DS:} #1]}}

    \definecolor{applegreen}{rgb}{0.55, 0.71, 0.0}
    \def\cledit#1{{$\!$\color{applegreen} [\textbf{CL:} #1]}}
	\definecolor{ballblue}{rgb}{0.13, 0.67, 0.8}
	\definecolor{lightred}{rgb}{0.8, 0.13, 0.97}    
	
\else 
    \def\piedit#1{}

     \def\cledit#1{}
     \def\dsedit#1{}
\fi

\allowdisplaybreaks

\begin{document}
\title{Ergodic Limits, Relaxations, and Geometric Properties of Random Walk Node Embeddings}

\author{Christy Lin,
        Daniel Sussman,
        and~Prakash Ishwar
\IEEEcompsocitemizethanks{\IEEEcompsocthanksitem Christy Lin is with the Division of Systems Engineering, College of Engineering, Boston University, 15 St Marys St, Boston, MA, 02215. \protect\\

cy93lin@bu.edu
\IEEEcompsocthanksitem Daniel Sussman is with the Department of Mathematics \& Statistics, College of Arts and Sciences, Boston University, 111 Cummington Mall, Boston, MA, 02215. 
sussman@bu.edu
\IEEEcompsocthanksitem Prakash Ishwar is with the Division of Systems Engineering and Department of Electrical \& Computer Engineering, College of Engineering, Boston University, 8 St Marys St, Boston, MA, 02215.
pi@bu.edu
}
}

\IEEEtitleabstractindextext{%
\begin{abstract}
Random walk based node embedding algorithms learn vector representations of nodes by optimizing an objective function of node embedding vectors and skip-bigram statistics computed from random walks on the network. They have been applied to many supervised learning problems such as link prediction and node classification and have demonstrated state-of-the-art performance. Yet, their properties remain poorly understood.
This paper studies properties of random walk based node embeddings in the unsupervised setting of discovering hidden block structure in the network, i.e., learning node representations whose cluster structure in Euclidean space reflects their adjacency structure within the network.
We characterize the ergodic limits of the embedding objective, its generalization, and related convex relaxations to derive corresponding non-randomized versions of the node embedding objectives. We also characterize the optimal node embedding Grammians of the non-randomized objectives for the expected graph of a two-community Stochastic Block Model (SBM). We prove that the solution Grammian has rank $1$ for a suitable nuclear norm relaxation of the non-randomized objective.
Comprehensive experimental results on SBM random networks reveal that our non-randomized ergodic objectives yield node embeddings whose distribution is Gaussian-like, centered at the node embeddings of the expected network within each community, and concentrate in the linear degree-scaling regime as the number of nodes increases.
\end{abstract}
}

\maketitle

\IEEEdisplaynontitleabstractindextext

\IEEEpeerreviewmaketitle


\IEEEraisesectionheading{\section{Introduction}
\label{Sec:introduction}}

\IEEEPARstart{M}{ost} statistical and computational tools originally developed for vector-valued data do not leverage the unique structured form of network data. Tools that exploit the graph-structure of network data could be custom-made for each network problem. A powerful alternative, however, is to develop a Euclidean-space embedding of a network that enables methods and tools developed for Euclidean-space data to effectively reason about various network properties. 

{\bf Node embedding} algorithms~\cite{cai2018comprehensive} aim to map nodes of a given graph into points in Euclidean space (i.e., vectors in $\mathbb{R}^d$) such that their relative positions capture their propensities for adjacency within the network. These embeddings 

make it possible to apply to network data, tools and algorithms from multivariate statistics and machine learning that were developed for Euclidean-space data. For example, with suitable embeddings, node classification, community detection, and vertex nomination problems reduce, respectively, to standard classification, clustering, and ranking problems. Therefore, developing new node embedding algorithms, establishing the theoretical properties of these embeddings, and demonstrating how connectivity properties are reflected in the embedding space is fundamental to developing principled network inference procedures.

{\bf Random walk embeddings}~\cite{perozzi2014deepwalk,yang2016revisiting,grover2016node2vec,ding2017node}  are a class of recently developed node embedding techniques which use random walks on graphs to capture notions of proximity between nodes. They may be viewed as network counterparts of techniques used for learning {\bf word embeddings}~\cite{mikolov2013distributed, pennington2014glove} in the field of natural language processing. In fact, by viewing samples of random walks in the network as sentences, with nodes playing the role of words, word embeddings can be directly applied to networks to yield node embeddings. Nodes which appear nearby within a sample of a random walk are analogous to words that appear nearby within a sentence.  
Word embeddings have been found to accurately capture the relationships between words and have been highly successful in several natural language processing tasks such as topic modeling, translation, and word analogy~\cite{bakarov2018survey}.
Random walk node embeddings too have been applied to a number of supervised and unsupervised learning problems such as link prediction, node classification and community detection and have demonstrated state-of-the-art performance~\cite{perozzi2014deepwalk,yang2016revisiting,grover2016node2vec,ding2017node}.

Unfortunately, despite excellent empirical performance in a number of supervised learning problems, random walk embeddings remain poorly understood. This is in stark contrast to the well-known spectral embeddings whose properties for the \textit{unsupervised} learning problem of community detection have been extensively studied and characterized under a variety of statistical network models, specifically the Stochastic Block Model (SBM) and its generalizations~\cite{rohe2011spectral,sussman2012consistent,qin2013regularized,Athreya2017-hn,chaudhuri2012spectral,cape2019spectral}. Attempts of theoretical analysis so far have focused on building connections between random walk embeddings algorithms and matrix factorization \cite{qiu2018network}. The properties of the resulting embedding vectors, however, still remain unexplored.

{\bf Contributions:} This paper proposes a framework for 
random walk based node-embedding algorithms for graphs. This is based on learning node embeddings by optimizing objective functions involving skip-bigram statistics computed from random walks on a graph. This framework subsumes several existing algorithms as special cases and introduces extensions and techniques that simplify theoretical analysis.
We establish ergodic limits of the proposed node-embeddings.
We analyze Grammian re-parameterized convex relaxations and characterize the solution for the expected graph of a two-community SBM and the unconstrained solution for any graph. We prove that the solution of the expected graph of a two-community SBM has rank at most $2$.
We develop algorithms for computing solutions to our proposed embedding objectives
for general graphs and conduct numerical experiments to understand the geometric structure  of embedding vectors (community clustering and separation properties) for SBM random graphs. We also empirically study the concentration properties of node embeddings for SBM random graphs in the linear and logarithmic scaling regimes. We find empirically that the distribution of embeddings are Gaussian-like, centered at the node embeddings of the expected graph within each community, and that they concentrate in the linear degree scaling regime as the number of nodes increases.

{\bf Paper organization:} 
Section~\ref{Sec:related_background} overviews recent work on random walk embeddings, sets up basic notation, and provides background on SBMs.
Section~\ref{Sec:alt_formulation} describes our proposed theoretical framework, results on ergodic limits (\Cref{Subsec:ErgoVEC}), various relaxations (\Cref{Subsec:PMI}), and the characterization of the solution for the expected graph of a two-community SBM  (\Cref{Subsec:expected_obj}).
\Cref{Sec:exp_setups} describes the setting of all our experiments in full detail. The geometric and concentration properties of the distribution of embedding vectors of our proposed algorithms under 2-community SBM are presented and discussed in \Cref{Sec:experiments_sbm}. Concluding remarks appear in Section~\ref{Sec:concl}.

\noindent{\bf Notation:} In this work we consider graphs that are undirected and simple with a node set $\mathcal{V} = [n] := \{1, 2,\dotsc,n\}$ and an edge set $\mathcal{E} \subset \{\{i,j\}: i,j\in V, i\neq j\}$. The edges may be possibly weighted. We denote such a graph by $\mathcal{G} = (\mathcal{V}, \mathcal{E})$ and its adjacency matrix by $A\in \{0,1\}^{n\times n}$, where $A_{ij}=0$ if, and only if, $\{i, j\} \in \mathcal{E}$. 

We denote the set of all real numbers by $\mathbb{R}$, the set of all natural numbers by $\mathbb{N}$, the set of all $n \times n$ real symmetric matrices by $\mathbb{S}^n$, the set of all real, symmetric, and positive semidefinite matrices by $\mathbb{S}^n_{+}$, and the natural logistic-loss function by $\sigma(t) \defeq \ln(1+e^{-t}), t \in \mathbb{R}$. Matrix transpose is denoted by $^\top$.

%

\section{Background and related work}
\label{Sec:related_background}

In this section we overview recent work on random walk node embeddings with a focus on the unsupervised algorithm VEC. We also summarize key aspects of the Stochastic Block Model (SBM) used in our experiments.

\subsection{Random walk node embedding algorithms} \label{Subsec:node_emb_algs}
A random walk node embedding algorithm typically consists of three steps: 
1) Generating multiple random walks over the graph via Markov chains 
with the set of nodes as the state space, 
specified probability transition matrices 
at each step, 
and specified initial distributions. 
2) Computing various statistics
from the sample paths of the random walks.
3) Generating embeddings by optimizing a function that only involves the computed statistics and node embedding variables of the input graph.

Among the random walk node embedding algorithms,
\cite{perozzi2014deepwalk,grover2016node2vec,tang2015line} make use of node embeddings within the context of \textit{supervised} learning problems such as node attribute prediction and link prediction and accordingly design probability transition matrices that depend on the supervised labels. 
In contrast, the VEC algorithm \cite{ding2017node} focuses on the \textit{unsupervised} community detection problem \cite{girvan-newman2002pnas}. 
The unsupervised setting of \cite{ding2017node} is ideal for studying random walk node embeddings that capture pure network connectivity properties unsullied by node labels. We therefore select VEC as our prototypical algorithm for analysis and introduce it in detail in the next subsection.

While our focus is on unsupervised setting, the general Markov-Chain based framework we develop can be used to analyze the supervised setting as well through transition matrices that are label dependent. 

In addition to the node embedding algorithms discussed above, the use of a random walks and their steady-state-distributions for graph clustering has been studied in \cite{spielman2004nearly} and \cite{andersen2006local}. Subsequent work \cite{lambiotte2014random} further proposed to exploit multi-step transition probabilities between nodes for clustering.

In terms of theoretical results, \cite{meng2020analysis} have analyzed the stationary distribution of second-order random walks in \cite{grover2016node2vec} for specific types of networks. 
We provide a complete characterization of the ergodic limits for general random walk node embedding objectives in Section~\ref{Sec:alt_formulation}.
For the task of community detection, \cite{zhang2021consistency} have provided large-sample error bounds for consistent community recovery from the perspective of matrix factorization. Their setting is a special, unconstrained case of our general problem stated in \Cref{Defi:GramErgoPMI} of \Cref{Subsec:PMI}.

\subsection{VEC: unsupervised random walk node embedding}
\label{Subsec:vec}
VEC learns a low-dimensional vector representation for each node of a graph such that the local neighborhood structures of the graph are encoded within the Euclidean geometry of node vectors. Specifically, the inner product between the embedding vectors of node pairs encode their propensity to appear nearby in random walks on the graph. 

VEC generates $r$ random walks on $\mathcal{G}$ of fixed length $\ell$ starting from each node. We let $\{X_s^{(m,p)}\}_{s=1}^{\ell}, p = 1, \ldots, r$, denote the $p$-th random walk starting from node $m$. All random walks follow the ``natural'' transition matrix $W$ where the next node is chosen from the immediate neighbors of the current node with probability proportional to the edge weight between them.

VEC learns node embedding using the negative-sampling framework of noise-contrastive estmation~\cite{mikolov2013distributed}.
The statistics used for learning node embeddings are based on two multisets of node pairs that are computed from the sample paths of the random walks as follows.
The positive multiset $\mathcal{D}_+$ consists of all node pairs $(X^{(m,p)}_{s},X^{(m,p)}_{s'})$, including repetitions, that occur within $w$ steps of each other, i.e., $|s-s'| \leq w$, in all the generated sample paths. Such node pairs are called $w$-skip bigrams in Natural Language Processing with words viewed as nodes and sentences as sample paths of random walks.
The algorithm parameter $w$ controls the size of the local neighborhood of a node in the given graph. 
The negative multiset $\mathcal{D}_-$ is constructed as follows. For each node pair $(i,j)$ in $\mathcal{D}_+$, we append $k$ node pairs $(i, j_1), \ldots, (i, j_k)$ to $\mathcal{D}_-$, where the $k$ nodes $j_1,\ldots,j_k$ are drawn in an IID manner from {\it all} the nodes according to the empirical unigram node distribution computed from all the sample paths.
Let  $n_{ij}^{+}$ and $n_{ij}^{-}$ denote the number of $(i,j)$ pairs, counting repetitions, in $\mathcal{D}_{+}$ and $\mathcal{D}_{-}$ respectively. 

VEC finds the embedding vector $\bm{u}_i \in \mathbb{R}^d$ for each node $i$ by solving the following minimization problem:
\begin{definition}[VEC optimization problem] \label{Defi:VEC}
\begin{eqnarray}
\underset{\{\mathbf{u}_{i} \in \mathbb{R}^d, i \in \mathcal{V} \}}{\arg\min}
\sum_{(i,j)\in \mathcal{V}^2 }\left[n_{ij}^+ \;\sigma(\mathbf{u}_{i}^{\top}\mathbf{u}_{j})
  + n_{ij}^{-}\; \sigma( -\mathbf{u}_{i}^{\top}\mathbf{u}_{j} )\right]
\label{Eq:VEC}
\end{eqnarray}
\end{definition}

One approach to solve Eq.~\eqref{Eq:VEC} is via stochastic gradient descent (SGD) \cite{bottou2010large,bottou-98x}. This approach is followed in \cite{mikolov2013distributed} and implemented in Python \texttt{gensim} package. Besides its conceptual simplicity, SGD  can be parallelized and nicely scaled to large datasets~\cite{recht2011hogwild}. The per-iteration computational complexity of the SGD algorithm used to solve Eq.~\eqref{Eq:VEC} is $O(d)$, i.e., linear in the emebdding dimension. The number of iterations is $O(r\ell wk)$.

\subsection{Stochastic Block Model}
\label{Subsec:SBM}
The Stochastic Block Model (SBM) \cite{white1976social, SBMoriginal, boppana1987eigenvalues} is a canonical generative probabilistic model for random graphs that reflects block (community) structures among the nodes wherein nodes within the same block have the same tendencies for connecting to all other nodes. 
Free of node or edge labels, it serves as a clean platform for generating graphs to empirically study and compare the properties of various node embedding algorithms and conduct a theoretical analysis. For example, SBM has helped in understanding the behavior of spectral embeddings \cite{Athreya2017-hn}.

For any given $K \in \mathbb{N}$, a $K$-block SBM is parameterized by the latent block membership labels $y_1, \dotsc, y_n \in [K]$,
and the edge probability matrix, a symmetric matrix $B\in [0,1]^{K \times K}$.
The latent labels $\{y_i\}$ partition the nodes into communities indexed by each $k\in [K]$. 
We note that there are versions of SBM in which the $y_i$'s are treated as random. This, however, poses minor additional difficulties. To ease the subsequent discussion, unless noted otherwise, the $y_i$'s will always be viewed as fixed deterministic unknowns throughout this work.
For a node in block $k_1$ and a \textit{different} node in block $k_2$ (where $k_2$ may equal $k_1$), the probability that an edge is present between the two nodes is $B_{k_1k_2}$, and all edges appear independently. We use this model for generating graphs in all our experiments.

The goal of any community detection algorithm is to learn the latent communities of nodes purely from the graph structure. Thus community detection is an unsupervised learning problem which aims to uncover the underlying block structure.
A series of work \cite{Abbe2016-bw, abbe2015, abbe2016nips, decelle2011KS, BP14,Mossel2015-na} characterizes the information-theoretic limits of community detection in SBMs in different degree-scaling regimes. Some of our experiments are designed to operate with respect to these information-theoretic limits.
%

\section{Analytical framework and results}
\label{Sec:alt_formulation}

There are three distinct challenges which complicate the analysis of VEC embedding vectors and their relationship to the latent graph community structure.
First, the objective function Eq.~(\ref{Eq:VEC}) is nonlinear due to the logistic loss function.
Second, even though the function $\sigma(t)$ is strictly convex, the overall objective is not convex with respect to the node embedding vectors. 
Finally, the objective function is itself random, partly due to intrinsic randomness in network connectivity, but also due to algorithmic randomness from the random walks and the Stochastic Gradient Descent algorithm.  

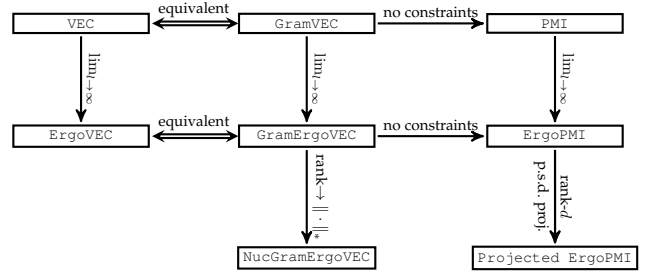
\begin{figure}[!hbt]
\centering 

\begin{tikzpicture}[scale=0.6,->,>=stealth', shorten >=1pt,auto,node distance=5cm,
  thick, column sep=1cm, every node/.style={scale=0.6},
  main node/.style={font=\sffamily, draw=none, fill=gray}, 
  p1 node/.style={font=\ttfamily,rectangle, draw=black, fill=none, minimum width = 30mm}, 
  p2 node/.style={font=\ttfamily,rectangle, draw=black, fill=none},
  edge_in/.style={<-,shorten <=1pt, thick},
  edge_out/.style={->,shorten >=1pt, thick},
  edge_eqv/.style={<->, double, >=stealth}]

  \node[p1 node] (Vec) {VEC};
  \node[p1 node] (GramVec) [right of=Vec] {GramVEC}
     edge [edge_eqv] node [above]{equivalent} (Vec);
  \node[p1 node] (PMI) [node distance = 5.5 cm, right of=GramVec] {PMI}
     edge [edge_in] node [above, sloped] {no constraints} (GramVec);
  \node[p1 node] (ErgoVec) [node distance = 2.5cm, below of=Vec] {ErgoVEC}
  	 edge [edge_in] node [above, sloped]{$\lim_{l\to\infty}$} (Vec);
  \node[p1 node] (GramErgoVec) [node distance = 2.5cm, below of=GramVec] {GramErgoVEC}
  	 edge [edge_in] node [above, sloped]{$\lim_{l\to\infty}$} (GramVec)
  	 edge [edge_eqv] node [above]{equivalent} (ErgoVec);
  \node[p1 node] (ErgoPMI) [node distance = 5.5 cm, right of=GramErgoVec] {ErgoPMI}
  	 edge [edge_in] node [above, sloped] {no constraints} (GramErgoVec)
  	 edge [edge_in] node [above, sloped]{$\lim_{l\to\infty}$} (PMI);
  \node[p1 node] (NucGramErgoVec) [node distance = 2.7 cm, below of=GramErgoVec] {NucGramErgoVEC}
  	 edge [edge_in] node [above, sloped] {rank$\to\|\cdot \|_*$}(GramErgoVec);
  \node[p1 node] (EigenErgoPMI) [node distance = 2.7 cm, below of=ErgoPMI] {Projected ErgoPMI}
  	 edge [edge_in] node [above, sloped]{rank-$d$} (ErgoPMI)
  	 edge [edge_in] node [below, sloped]{p.s.d. proj.} (ErgoPMI);


 \end{tikzpicture}
\caption{Relationships between analysis strategies. }
\label{fig:alg_cartoon}
\end{figure}
To tackle these challenges, in this section we introduce and develop techniques, generalized formulations, and their extensions which are more amenable to theoretical analysis. We leverage three distinct strategies whose inter-relationships are succinctly depicted in Fig.~\ref{fig:alg_cartoon}. These are:
\begin{itemize}
    \item[(1)] \textit{{\ErgoVEC}: Ergodic limits of random walks} ($\lim_{\ell\rightarrow \infty}$). We begin by noting that the sampled coefficients, $n_{ij}^+$'s and $n_{ij}^-$'s in Eq.~(\ref{Eq:VEC}), inherit the randomness of the random walks and depend on a number of algorithm parameters that are described in Sec.~\ref{Subsec:vec}. Previous empirical results \cite{ding2017node} demonstrate that the parameters such as the number of random walks $r$ and their length $\ell$ do not substantially impact performance. Motivated by this observation, as a first step, in Sec.~\ref{Subsec:ErgoVEC}, we eliminate algorithmic randomness by taking the ergodic limits ($\ell \rightarrow \infty$) of the coefficients. This gives rise to a more principled formulation, which we call {\it \ErgoVEC}, that removes dependence on sampled random walks and parameters $r$ and $\ell$.
    \item[(2)] \textit{{\GramErgoVEC} and PMI: reparameterize, unconstrain and project}. Like VEC, {\ErgoVEC} is a nonconvex optimization problem since the objective is a noconvex function of the embedding vectors $\mathbf{u}_i$. We leverage a re-parametrization trick which is similar in spirit to that used in \cite{levy2014neural} to arrive at an equivalent problem, named {\GramErgoVEC}, that has a convex objective function with respect to new matrix variables and additional constraints. 
    {\GramErgoVEC} has a convex objective, but is still a nonconvex optimization problem due to the rank constraint. In order to gain insight into the structure of the solution, we characterize the solution to {\GramErgoVEC} without any constraints and then project the unconstrained solution onto the constraint set. It turns out that the solution to the unconstrained {\GramErgoVEC} objective is directly related to the so-called Pointwise Mutual Information (PMI) matrix \cite{church1990word}. We study {\GramErgoVEC} and PMI in Sec.~\ref{Subsec:PMI}
    \item[(3)] \textit{{\NucGramErgoVEC}: reparameterize and convexify}. 
    Another strategy to convexify {\GramErgoVEC} is to replace the non-convex rank constraint by a convex nuclear norm constraint. We term the resulting optimization problem {\it \NucGramErgoVEC} and study its properties in the later part of Sec.~\ref{Subsec:PMI}
    \end{itemize}

In the rest of this section, we will formally study and establish important theoretical properties of these alternative formulations and their inter-relationships.

\subsection{Ergodic limits} \label{Subsec:ErgoVEC}

As described in Sec.~\ref{Subsec:vec}, $n_{ij}^+$ and $n_{ij}^-$ are the number of the $(i,j)$ node pairs in the positive and negative multisets, $\mathcal{D}_+$ and $\mathcal{D}_-$, respectively. These depend on $5$ algorithm parameters: $r$ (number of random walks per node), $\ell$ (length of each walk), $w$ (context window size) and $k$ (number of negative $w$-skip bigrams per positive $w$-skip bigram). Specifically, $n_{ij}^+$, as a $w$-skip bigram count over $r$ IID sets of $n$ random walks, increases proportionally with $r$ and $n$ and roughly proportionately with $\ell$, for large $\ell$, since
the number of segments of $w$ consecutive steps in a length-$\ell$ walk equals $(l-w+1)$.
They also increase as $w$ increases however their distribution can change substantially with $w$.
As for the negative multiset, note that $\vert\mathcal{D}_-\vert = k\vert\mathcal{D}_+\vert $, so $n_{ij}^-$ increases proportional to $k,r$, and $\ell$. Among these parameters, the results in \cite{ding2017node} show that $r$ and $\ell$ have little effect on the final performance of VEC, while $w$ plays a more important role.

Besides their dependence on the algorithm parameters, $n_{ij}^+$'s and $n_{ij}^-$'s inherit the randomness intrinsic to the random walks. Additionally, the number of negative $(i,j)$ pairs $n_{ij}^-$ also inherit randomness from the categorical sampling of appended nodes $j_1,\dots,j_k$ in the negative pairs. 
In order to gain an algorithmic-randomness-free understanding of network properties captured by $n_{ij}^+$ and $n_{ij}^-$, we study their {\it ergodic limits}.

\begin{definition}[Ergodic limits of $n_{ij}^+$ and $n_{ij}^-$] \label{Defi:Ergolim}
	{\noindent}Let $n_{ij}^+$ and $n_{ij}^-$ be defined as above. The (normalized) ergodic limits of $n_{ij}^+$ and $n_{ij}^-$ are defined as
	\begin{align}
		\label{Eq:Def_Ergolim+}
		\bar{n}_{ij}^+:={}&\frac{1}{rn}\lim\limits_{\ell\rightarrow\infty}\frac{n_{ij}^+}{\ell},\\
		\label{Eq:Def_Ergolim-}
		\bar{n}_{ij}^-:={}&\frac{1}{rn}\lim\limits_{\ell\rightarrow\infty}\frac{n_{ij}^-}{\ell},
	\end{align}
 whenever these limits exist in the almost sure sense.
\end{definition}

The Ergodic limits in \Cref{Defi:Ergolim} provide, for a given graph, a deterministic version of $n_{ij}^+$ and $n_{ij}^-$, normalized by the cumulative length of all random walks. We note that letting $\ell$ go to infinity may seem like incorporating global information about the entire graph instead of the more useful local connectivity patterns, but this is not the case. Regardless of the value of $\ell$, $\mathcal{D}_+$ only contains pairs of nodes which appear within $w$ steps from each other. Therefore, the positive pairs sampled still reflect local information. 

In VEC we launch $r$ random walks starting deterministically from each node which yields a total of $rn$ random walks. Dividing the $w$-skip bigram counts by $rn$ averages them across all random walks. The averaged counts can be loosely viewed as arising from a single random walk with a uniform initial distribution over nodes, i.e., with probability $1/n$ for each node. If the Markov Chain underlying the random walk is ergodic, as $\ell$ tends to infinity, the $n_{ij}^+$'s and $n_{ij}^-$'s, suitably normalized, will converge to their respective expected values under the sampling distribution of random walks.
This intuition is formalized in Theorem~\ref{Thm:Ergolim} below. The theorem encompasses disconnected graphs that consist of several connected components that are often encountered in practice. In such cases, the random walks can be launched within and confined to each connected component. The theorem also covers the case where edges in the graph have real-valued (non-binary) nonnegative weights. The theorem provides explict closed-form expressions for the ergodic limits $\bar{n}_{ij}^+$ and $\bar{n}_{ij}^-$.

\begin{restatable}[Ergodic limits of $n_{ij}^+$ and $n_{ij}^-$]{thm}{ThmErgolim} \label{Thm:Ergolim}
	Let $\mathcal{G}$ be a weighted graph with connected components $\{\mathcal{G}_t\}_{t = 1} ^ m$, where for each $t$, $\mathcal{G}_t$ has $n_t$ nodes and a nonnegative weighted adjacency matrix $A_t$. 
	Let the VEC algorithm be executed on  $\mathcal{G}$ with parameters $w$ and $k$ and transition matrix $W_t \defeq{}D_t^{-1}A_t$ in component $t$, where $D_t$ is a diagonal matrix with $i$th diagonal element $D_{t,ii} = \sum_j A_{t,ij} =: d_i$, i.e., the degree of node $i$.
	Then the ergodic limits $\bar{n}_{ij}^+$'s and $\bar{n}_{ij}^-$'s in \Cref{Defi:Ergolim} exist and are given by
	\begin{align}
		\label{Eq:Thm_Ergolim+}
		\bar{n}_{ij}^+={}& 
		\begin{cases}
			\pi_i\sum_{v=1}^{w}(W_t^v)_{ij}, & \text{if } i, j\in \mathcal{G}_t;\\
			0 & \text{otherwise}.
		\end{cases}, \\
		\label{Eq:Thm_Ergolim-}
		\bar{n}_{ij}^-={}& kw\pi_i\pi_j,
	\end{align}
	where $\mathbf{\pi}$ is a stationary distribution of the random walk with $\pi_i = \frac{n_t}{\sum_t{n_t}}\frac{D_{t,ii}}{\sum_i{D_{t,ii}}}$ for each $t$ and all $i \in \mathcal{G}_t$.
\end{restatable}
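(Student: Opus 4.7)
The plan is to reduce to the single-connected-component case and then invoke the pointwise ergodic theorem for irreducible finite-state Markov chains (no aperiodicity needed) applied to the lifted ``snake'' chain $(X_s, X_{s+1}, \ldots, X_{s+w})$. Since each random walk is confined to the component containing its start node $m$, the contribution to $n_{ij}^+$ is zero whenever $i, j$ lie in different components, and the claimed $\bar{n}_{ij}^+ = 0$ case is immediate. Within a component $t$, the transition matrix $W_t = D_t^{-1} A_t$ is irreducible on $\mathcal{G}_t$, and since the underlying graph is undirected and weighted, detailed balance identifies $\pi_{t,i} = D_{t,ii}/\sum_j D_{t,jj}$ as the unique stationary distribution.

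For the positive count, I would write
\begin{equation*}
n_{ij}^+ \;=\; \sum_{m=1}^n \sum_{p=1}^r \sum_{v=1}^w \sum_{s=1}^{\ell-v} \mathbb{1}\!\left[X_s^{(m,p)}=i,\; X_{s+v}^{(m,p)}=j\right].
\end{equation*}
For each fixed starting node $m \in \mathcal{G}_t$ and each $v$, the ergodic theorem applied to the functional $\mathbb{1}[X_s = i, X_{s+v} = j]$ of the Markov chain on $\mathcal{G}_t$ gives almost sure convergence of the normalized inner sum to $\pi_{t,i} (W_t^v)_{ij}$ when $i,j\in\mathcal{G}_t$ and to $0$ otherwise; the truncation $s \le \ell - v$ contributes an $O(w/\ell)$ correction that vanishes. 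Since exactly $r n_t$ of the $rn$ walks start in component $t$, dividing the aggregate by $rn$ yields $\bar{n}_{ij}^+ = (n_t/n)\,\pi_{t,i} \sum_{v=1}^w (W_t^v)_{ij}$ for $i,j \in \mathcal{G}_t$, which factors as $\pi_i \sum_{v=1}^w (W_t^v)_{ij}$ under the stated definition of $\pi_i$.

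For the negative count, I would condition on the random walks. Given the walks, $n_{ij}^-$ is a sum of $k\, n_{i\cdot}^+$ i.i.d.\ Bernoulli$(p_j^{\text{emp}})$ variables, where $n_{i\cdot}^+ := \sum_{j'} n_{ij'}^+$ and $p_j^{\text{emp}} = N_j/(rn\ell)$ is the empirical unigram. From the previous paragraph, summing over $j'$ and using that $W_t$ is row-stochastic yields $n_{i\cdot}^+/\ell \to rn\cdot w\pi_i$ almost surely, so $n_{i\cdot}^+ \to \infty$ a.s. Applying a standard conditional strong law to the Bernoulli sum gives $n_{ij}^-/(k\, n_{i\cdot}^+) \to p_j^{\text{emp}}$ almost surely, and a direct ergodic-theorem application to the single-coordinate indicator $\mathbb{1}[X_s = j]$ gives $p_j^{\text{emp}} \to \pi_j$ a.s. Chaining these, $n_{ij}^-/\ell \to rn\cdot kw\,\pi_i\pi_j$ a.s., whence $\bar{n}_{ij}^- = kw\,\pi_i\pi_j$.

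The main technical obstacle is the last argument: making rigorous the interchange of the conditional SLLN over the IID negative samples with the almost sure ergodic limit of the underlying walks, since the Bernoulli success probability $p_j^{\text{emp}}$ itself depends on (and converges with) the same randomness indexing the Bernoulli trials. The cleanest route is a Borel--Cantelli-based deviation bound for the conditional Bernoulli sum at each $\ell$ (uniform in the success probability), combined with the a.s.\ linear growth $n_{i\cdot}^+ \asymp \ell$, so that both convergences can be asserted on a single probability-one event. A minor additional care point is that some components may be bipartite, making the chain periodic; this is harmless because the Cesàro-form ergodic theorem used above only requires irreducibility and positive recurrence on a finite state space.
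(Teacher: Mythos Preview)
Your proposal is correct and follows essentially the same route as the paper: reduction to connected components, a lifted ``snake'' chain plus the Ces\`aro ergodic theorem for the positive counts, and for the negative counts a conditional concentration bound combined with Borel--Cantelli to handle the coupling between the empirical unigram and the Bernoulli draws. The paper's version of your ``deviation bound for the conditional Bernoulli sum'' is McDiarmid's inequality applied to the conditionally i.i.d.\ indicators, yielding summable tail probabilities and hence complete (thus almost sure) convergence; your identification of this as the main technical obstacle and your proposed resolution match the paper's argument.
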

The proof of \Cref{Thm:Ergolim} is based on convergence results for irreducible Markov chains and is presented in Appendix~\ref{app:ErgoLim}. The key ideas are as follows. For the positive pairs we expand the state-space of the Markov chain and show that it is irreducible. This implies that the long term average of distributions converges to the stationary distribution. For the negative pairs the major obstacle is to deal with the second-layer of randomness conditioned on the positive samples. We overcome this difficulty by applying McDiarmid's inequality conditionally to establish almost complete convergence.

\Cref{Thm:Ergolim} states that the ergodic limits $\bar{n}_{ij}^+$ and $\bar{n}_{ij}^-$ can be evaluated directly without having to actually launch any random walks. The additional randomness from the random walks and dependence on the algorithm parameters $r$ and $\ell$ are removed. As a result, the coefficients, in the form of ergodic limits, are deterministic functions of the graph adjacency matrix and two algorithm parameters $w$ and $k$. 
Replacing the coefficients in Eq.~\eqref{Eq:VEC} with their limiting values (scaled down by the factor $\nicefrac{1}{(rn\ell)}$) yields the following optimization problem that we name \ErgoVEC:
\begin{definition}[{\ErgoVEC} optimization problem] \label{Defi:ErgoVEC}
\begin{eqnarray}
\underset{\{\mathbf{u}_{i} \in \mathbb{R}^d, i \in \mathcal{V} \}}{\arg\min} 
\sum_{(i,j) \in \mathcal{V}^2 }\left[\bar{n}_{ij}^+ \; \sigma( +\mathbf{u}_{i}^{\top}\mathbf{u}_{j})
  + \bar{n}_{ij}^{-} \; \sigma( -\mathbf{u}_{i}^{\top}\mathbf{u}_{j})\right]
\label{Eq:ErgoVEC}
\end{eqnarray}
\end{definition}
A practical approach to compute the embedding vectors of {\ErgoVEC} can be described as follows. Given a graph and algorithm parameters $w$ and $k$, first calculate $\bar{n}_{ij}^+$'s and $\bar{n}_{ij}^-$'s using \Cref{Thm:Ergolim}. Then use them to solve the {\ErgoVEC} optimization problem in \Cref{Eq:ErgoVEC} via stochastic gradient descent to find embedding vectors $\mathbf{u}_i$'s. 
A neural-network implementation is described in \Cref{Subsec:implementation} and Appendix~\ref{appendix_exp_details}.

{\ErgoVEC} calculates the coefficients of the optimization objective in a more principled way compared to VEC and completely bypasses the random walk sampling process. The $r$ and $\ell$ algorithm parameters of VEC are not needed at all in {\ErgoVEC}. 
However, when the graph is dense or $w$ is large, evaluating $\bar{n}_{ij}^+$ from Eq.~\eqref{Eq:Thm_Ergolim+} can be computationally very expensive. In these cases, $n_{ij}^+$ computed from random walks could serve as an approximation. Thus VEC can may be viewed as a practical approximation to the more principled {\ErgoVEC}. 

{\noindent \bf Relationship to modularity maximization.} When the graph is connected and we set $w=1$, Eq.~\eqref{Eq:ErgoVEC} reduces to
\begin{eqnarray}
\label{Eq:relaxed-mod}
\underset{\{\mathbf{u}_{i}, i \in \mathcal{V} \}}{\arg\min} 
\sum_{(i,j) \in \mathcal{V}^2}\left[A_{ij} \; \sigma(+\mathbf{u}_{i}^{\top}\mathbf{u}_{j})
  +k\frac{d_id_j}{\sum_k d_k} \; \sigma(-\mathbf{u}_{i}^{\top}\mathbf{u}_{j})\right]\!,\!\!\!\!
\end{eqnarray}
where
$d_i$ denotes the degree of node $i$. If instead we set $\sigma(t)\defeq t$, and constrain the embedding vectors so that for all $i,j$, $\mathbf{u}_{i}^{\top}\mathbf{u}_{j} \in \{0, 1\}$, 
then the minimization becomes equivalent to the  modularity maximization problem \cite{newman2004finding} for two communities defined by
\begin{eqnarray}
\underset{\{ y_i \in \{0,1\}, i \in \mathcal{V} \}}{\arg\max} 
\sum_{(i,j) \in \mathcal{V}^2 }\left(A_{ij}-\frac{d_id_j}{2|\mathcal{E}|}\right)
1(y_i = y_j) \nonumber
\end{eqnarray}
where $y_i$ denotes the community assignment for node $i$, 
$1(\cdot)$ is the indicator function,
and $|\mathcal{E}|$ is the number of edges.
This is often relaxed to \eqref{Eq:relaxed-mod} and solved via spectral approaches followed by clustering~\cite{Newman2013-ex}.

\subsection{Walk-distance weighting and large  \texorpdfstring{$r$}{r} asymptotics} 
\label{Subsec:3.2}

{\noindent \bf Walk-distance-weighted count statistics:} 
In VEC, $n_{ij}^+$ is the count of all instances where node $i$ appears within $w$ steps of node $j$ in all the random walks. Instances where nodes $i$ and $j$ appear exactly $1$ step from each other and instances where they appear in exactly $w$ steps from each other, both contribute a count of $1$ to the value of $n_{ij}^+$. A nuanced alternative must account for the number of steps between appearances of nodes.  

As a general approach to construct such a statistic, we propose associating a walk-distance weight $\alpha_v$ to the counts of instances of node pairs that occur \textit{exactly} $v$ steps from each other. With this modification, the walk-distance-weighted positive-pair counts will become $n_{ij}^+ := \sum_{v=1}^\infty \alpha_v n_{ij}^+(v)$, where  $n_{ij}^+(v)$ is the count of instances where node $i$ appears \textit{exactly} $v$ steps from node $j$ in all the random walks.
The original count statistic for positive pairs can be recovered as a special case of our proposed general framework by choosing $\alpha_v = 1$ for all $v \leq w$ and $\alpha_v = 0$ for all $v > w$.
Choosing a nonnegative decreasing sequence of walk-distance weights $\alpha_v$ can be viewed as providing a ``soft cutoff'' for the bigram counts when compared to the ``hard cutoff'' of the original counts. 

To compute walk-distance weighted counts for negative-pairs, we propose the following modification to the original negative sampling process. For each positive pair of nodes that occur exactly $v$ steps apart, we append $k$ node pairs drawn in an IID manner exactly as in the original sampling process. However, these $k$ negative node pairs will now contribute the value $\alpha_v$ to the walk-distance weighted negative-pair counts as opposed to the value of $1$ previously.

{\noindent \bf Large $r$ asymptotics:}  The effect of increasing $\ell$ is similar to that of increasing $r$. In a random walk on a graph, the choice of the next node depends only on the current node. From this point of view, we may loosely visualize a long random walk as being formed by joining many shorter segments which are nearly independent random walks. In this sense, an infinitely long random walk is similar to an infinite sequence of short random walks with each starting node chosen from the stationary distribution of the Markov chain.
Thus, in addition to the large $\ell$ asymptotics characterized in \Cref{Thm:Ergolim}, we can also study other types of asymptotics such as $r\rightarrow\infty$ or, more generally,  $\ell$ and $r$ both going to infinity together in some manner. 

The counterpart of \Cref{Thm:Ergolim} for the proposed walk-distance-weighted counts is the following general result which is proved in Appendix~\ref{app:ErgoLim_alpha}.
\begin{restatable}[Limits of walk-distance weighted counts]{thm}{ThmWtdErgodic}
	\label{Thm:Ergolim_full}
Let $\mathcal{G}$ be a weighted connected graph with $n$ nodes and $W$ be the probability transition matrix of the natural random walk on $\mathcal{G}$ with stationary distribution $\bm{\pi}$. Let the VEC algorithm be executed with $\mathcal{G}$ as input, walk-distance weights $\{\alpha_v\}_{v=1}^{\infty}$, and negative sampling rate $k$. If $\{\alpha_v\}_{v=1}^{\infty}$ is absolutely convergent, i.e., $\sum_{v=1}^{\infty}\vert\alpha_v\vert < \infty$, the following limits of $\bar{n}_{ij}^+$'s and $\bar{n}_{ij}^-$'s exist in the almost sure sense:
	\begin{enumerate}
	    \item 	When $r$ is fixed and $\ell\rightarrow\infty$ (ergodic limits):
    	    \begin{align} 
        		\label{Eq:Thm3_Ergolim_l+}
				\frac{1}{rn}\lim\limits_{\ell\rightarrow\infty}\frac{n_{ij}^+}{\ell}={}& 
						\pi_i\sum_{v=1}^{\infty}\alpha_v(W^v)_{ij},\\
        		\label{Eq:Thm3_Ergolim_l-}
        		\frac{1}{rn}\lim\limits_{\ell\rightarrow\infty}\frac{n_{ij}^-}{\ell}={}& k\pi_i\pi_j\sum_{v=1}^{\infty}\alpha_v.
        	\end{align}
        \item   When $\ell$ is fixed and $r\rightarrow\infty$:
            \begin{align} 
        		\label{Eq:Thm3_Ergolim_r+}
        		\frac{1}{\ell n}\lim\limits_{r\rightarrow\infty}\frac{n_{ij}^+}{r}
        		={}&\frac{1}{\ell n}\sum_{m=1}^n \sum_{v=1}^{\infty} \alpha_v(W^v)_{ij}\sum_{s=1}^{\ell-v}(W)^{s-1}_{mi},\\
        		\label{Eq:Thm3_Ergolim_r-}
        		\frac{1}{\ell n}\lim\limits_{r\rightarrow\infty}\frac{n_{ij}^-}{r}={}& \frac{k\pi^{(\ell)}_j}{\ell n}\sum_{m=1}^n \sum_{v=1}^{\infty} \alpha_v\sum_{s=1}^{\ell-v}(W)^{s-1}_{mi} 
        	\end{align}
        	where $\pi^{(\ell)}_j = \frac{1}{\ell}\sum_{u=1}^{\ell} \frac{1}{n}\bm{1}_n^\top W^{u-1}\bm{e}_j$.\footnote{We follow the convention that when the upper limit of a summation is smaller than its lower limit, the sum is $0$.}
        \item   Double limits:
            \begin{align} 
        		\label{Eq:Thm3_Ergolim_lr+}
        		\frac{1}{n}\lim\limits_{r\rightarrow\infty}\lim\limits_{\ell\rightarrow\infty}\frac{n_{ij}^+}{r\ell}={}\frac{1}{n}\lim\limits_{\ell\rightarrow\infty}\lim\limits_{r\rightarrow\infty}\frac{n_{ij}^+}{r\ell}={}& \pi_i\sum_{v=1}^{\infty}\alpha_v(W^v)_{ij},\\
        		\label{Eq:Thm3_Ergolim_lr-}
				\frac{1}{n}\lim\limits_{r\rightarrow\infty}\lim\limits_{\ell\rightarrow\infty}\frac{n_{ij}^-}{r\ell}={}\frac{1}{n}\lim\limits_{\ell\rightarrow\infty}\lim\limits_{r\rightarrow\infty}\frac{n_{ij}^-}{r\ell}={}& k\pi_i\pi_j\sum_{v=1}^{\infty}\alpha_v.
        	\end{align}
	\end{enumerate}
\end{restatable}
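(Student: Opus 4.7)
The plan is to bootstrap from Theorem~\ref{Thm:Ergolim} by first splitting the weighted counts by exact walk-distance. Define $n_{ij}^+(v)$ as the number of positive pairs of nodes $i,j$ appearing at exact distance $v$ across all random walks, and $n_{ij}^-(v)$ as the weighted negative count generated from those distance-$v$ positive pairs, so that $n_{ij}^+ = \sum_{v\ge 1}\alpha_v\,n_{ij}^+(v)$ and likewise for $n_{ij}^-$. Since $n_{ij}^+(v)/\ell \le 1$ and the negative contribution per distance-$v$ positive pair is bounded by $k$, the hypothesis $\sum_v |\alpha_v|<\infty$ provides a summable dominating sequence, so the dominated convergence theorem reduces each part of the theorem to (i) proving the per-$v$ limit and (ii) passing the limit under the sum in $v$.

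For Part~(1), the per-$v$ positive count is handled by exactly the state-expansion argument in the proof of Theorem~\ref{Thm:Ergolim}: the lifted chain $(X_s, X_{s+v})$ is irreducible within each connected component and has stationary mass $\pi_i(W^v)_{ij}$ at $(i,j)$, so Birkhoff's ergodic theorem yields $n_{ij}^+(v)/(rn\ell)\to \pi_i(W^v)_{ij}$ almost surely; summing over $v$ against $\alpha_v$ gives Eq.~\eqref{Eq:Thm3_Ergolim_l+}. For the negatives, conditional on the realized positive multiset, $n_{ij}^-(v)$ is a weighted sum of i.i.d. bounded indicators (one per drawn negative partner). The same conditional McDiarmid estimate used in Theorem~\ref{Thm:Ergolim} adapts almost verbatim, with the only change being that the weights $\alpha_v$ multiply bounded quantities and enter the bounded-difference constant summably, so $n_{ij}^-(v)/(rn\ell)\to k\pi_i\pi_j$ almost surely, yielding Eq.~\eqref{Eq:Thm3_Ergolim_l-}.

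For Part~(2), with $\ell$ fixed the $rn$ random walks are independent, and the $r$ walks from any given start node $m$ are i.i.d. The SLLN applied to bounded i.i.d. summands gives $n_{ij}^+(v)/r$ converging almost surely to the expected number of $v$-apart $(i,j)$ pairs across the $n$ start nodes, which by direct enumeration equals $\sum_{m=1}^n\sum_{s=1}^{\ell-v}(W^{s-1})_{mi}(W^v)_{ij}$, giving Eq.~\eqref{Eq:Thm3_Ergolim_r+} after weighting by $\alpha_v$ and invoking dominated convergence in $v$. For the negatives, the empirical unigram distribution $\hat\pi_j$ used to draw the $j_1,\ldots,j_k$'s is itself an SLLN average of indicators along the $rn$ walks that converges almost surely to $\pi^{(\ell)}_j$; conditioning jointly on the positive multiset and on $\hat\pi_j$, then applying the continuous mapping theorem together with a conditional McDiarmid bound mirroring the one used in the proof of Theorem~\ref{Thm:Ergolim}, yields Eq.~\eqref{Eq:Thm3_Ergolim_r-}.

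Part~(3) follows from the previous two parts plus a Cesaro-type step. Taking $\ell\to\infty$ first produces the deterministic limit of Part~(1), after which $r\to\infty$ is trivial. For the reverse order, starting from the Part~(2) expression, the only nontrivial step is the Cesaro convergence $\tfrac{1}{\ell}\sum_{s=1}^{\ell-v}(W^{s-1})_{mi}\to\pi_i$ for each $m,v$, which is the ergodic theorem for irreducible Markov chains; one more invocation of dominated convergence in $v$ (again using $|\alpha_v|$ as the dominator) matches the two orders of limits and gives Eqs.~\eqref{Eq:Thm3_Ergolim_lr+}--\eqref{Eq:Thm3_Ergolim_lr-}. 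I expect the main obstacle to be carrying the conditional McDiarmid argument through an infinite sum over distances in the negative-pair analysis, both in Part~(1) and (more delicately) in Part~(2) where the unigram distribution is itself random; absolute summability of $\{\alpha_v\}$ is exactly what keeps the bounded-difference constants summable and the conditional tail probabilities uniformly small in $v$, allowing a Borel--Cantelli conclusion after unioning over $v$.
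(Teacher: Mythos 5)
Your proposal is correct and follows essentially the same route as the paper's proof: the per-distance positive counts are handled by the same lifted-chain ergodic argument from \Cref{Thm:Ergolim}, the negatives by the same conditional McDiarmid device (\Cref{Prop:product_as_convergence}), fixed-$\ell$ limits by averaging over the i.i.d.\ walks, the double limits via Cesaro convergence of $\frac{1}{\ell}\sum_s (W^{s-1})_{mi}$ to $\pi_i$, and absolute summability of $\{\alpha_v\}$ used exactly as in the paper to justify exchanging the limit with the infinite sum over $v$ by dominated convergence. The only cosmetic difference is your Borel--Cantelli union over $v$ for the negative counts, where the paper simply applies its proposition per $v$ and then dominated convergence; both are fine.
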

\Cref{Thm:Ergolim_full} is stated for a connected graph, but it holds for each connected component of a disconnected graph.
The main changes are that the expressions $\pi_i\sum_{v=1}^{w} (W^v)_{ij}$ and $k\pi_i \pi_j w$ in \Cref{Lemma:Ergolim} change to $\pi_i\sum_{v=1}^{\infty} \alpha_v (W^v)_{ij}$ and $k\pi_i \pi_j (\sum_{v=1}^{\infty} \alpha_v$) respectively
and the large-$r$ asymptotic limits of the normalized count statistics are also characterized.
To establish these results, we need to assume that the walk-distance weight series is absolutely convergent.
Walk-distance weighting makes it possible to realize different nonlinear functions of the transition matrix as the ergodic limit of count statistics, not just polynomial functions. For instance, if we choose $\alpha_v = \nicefrac{1}{v!}$ for all $v$, then  for all $i,j$, $\bar{n}_{ij}^{+} = \pi_i (\exp\{W\})_{ij}$, where $\exp\{W\}$ denotes matrix exponential, and $\bar{n}_{ij}^{-} = k \pi_i \pi_j e$.
We note that the large-$\ell$ asymptotic limits are independent of $r$ but the large-$r$ asymptotic limits depend on $\ell$. Yet, the double limits where $r$ is sent to infinity first before $\ell$ equal the corresponding large-$\ell$ limits.

\subsection{Reparameterized relaxations and their properties} \label{Subsec:PMI}

In this subsection, we study matrix re-parameterizations and convex relaxations of the VEC and {\ErgoVEC} optimization problems. We follow \cite{levy2014neural} and begin by defining the $n\times n$ matrix $X$ to be the Gram matrix of the node embedding vectors, i.e., for all $i,j$, $X_{ij} := \mathbf{u}_i^{\top}\mathbf{u}_j$. Let $N^+$ and $N^-$ denote the $n \times n$ matrices of the positive-pair and negative-pair counts respectively, i.e., for all $i,j$,  $[N^+]_{ij} = n_{ij}^+$ and $[N^-]_{ij} := n_{ij}^-$. If
\begin{align*} 
f(N^+, & N^-, X) \defeq{} \!\!\!\!
\sum_{(i,j) \in \mathcal{V}^2 }n_{ij}^{+} \sigma(+X_{ij}) + 
\sum_{(i,j) \in \mathcal{V}^2 }n_{ij}^- \sigma(-X_{ij}),
\label{Eq:GramVEC}
\end{align*}
then the VEC optimization problem (Eq.~\eqref{Eq:VEC}) reduces to the following equivalent optimization problem in the matrix variable $X$ named {\GramVEC}:
\begin{definition}[{\GramVEC} optimization objective] \label{Defi:GramVEC}
\begin{equation} 
\argmin_{X \in \mathbb{S}^n_{+},\  \mathrm{rank}(X)\leq d } 
f(N^+, N^-, X).
\label{Eq:GramVEC}
\end{equation}
\end{definition}
In Eq.~\eqref{Eq:GramVEC}, the constraint $X \in \mathbb{S}^n_{+}$ arises from the fact that the Gram matrix of embedding vectors is real, symmetric, and positive semi-definite.  The rank constraint comes from the fact that $\mathbf{u}_i\in\mathbb{R}^d$. 
The equivalence of the VEC (Eq.~\eqref{Eq:VEC}) and {\GramVEC} (Eq.~\eqref{Eq:GramVEC}) optimization problems can be seen as follows. For any set of feasible $\mathbf{u}_i$'s in \eqref{Eq:VEC}, setting $X_{ij} = \mathbf{u}_i^{\top}\mathbf{u}_j$ for all $i,j$, makes $X$ a rank-$d$ matrix in $\mathbb{S}^{n}_{+}$ and yields the same cost as in \eqref{Eq:GramVEC}.
In the other direction, for any feasible choice of $X$ in \eqref{Eq:GramVEC}, 
let $X = V^{\top}_d \Sigma_d V_d$
denote its rank-$d$ reduced SVD with diagonal $\Sigma_d \in \mathbb{S}^{d}_{+}$ and define $U = [\mathbf{u}_1,\ldots,\mathbf{u}_n] := \sqrt{\Sigma_d} V_d$. Then, $X = U^{\top} U$ and for all $i,j$, $X_{ij}=\mathbf{u}_i^{\top}\mathbf{u}_j$ and $\mathbf{u}_i \in \mathbb{R}^d$, and we  obtain the same cost in \eqref{Eq:VEC}.
The choices for the $\mathbf{u}_i$'s are not unique since 
$X = U^{\top} F^{\top} F U$ 
for any real orthonormal matrix $F$. The $\mathbf{u}_i$'s are unique only up to a real orthonormal transformation, just as in \eqref{Eq:VEC}.

The same re-parameterization can also be applied to {\ErgoVEC} (Eq.~\eqref{Eq:ErgoVEC}). Let $\bar{N}^+$ and $\bar{N}^-$ be $n\times n$ matrices such that for all $i,j$, $[\bar{N}^+]_{ij} := \bar{n}_{ij}^+$ and $[\bar{N}^-]_{ij} := \bar{n}_{ij}^-$. Then the {\GramErgoVEC} optimization problem is defined as follows.
\begin{definition}[{\GramErgoVEC} optimization problem]
	\label{Defi:GramErgoVEC}\hfill
	
\begin{equation} 
\argmin_{X \in \mathbb{S}^n_{+},\ \mathrm{rank}(X)\leq d} 
	f(\bar{N}^+, \bar{N}^-, X).
\label{Eq:GramErgoVEC}
\end{equation}
\end{definition}

Although Eq.~\eqref{Eq:GramVEC} and Eq.~\eqref{Eq:GramErgoVEC} are equivalent to Eq.~\eqref{Eq:VEC} and Eq.~\eqref{Eq:ErgoVEC}, respectively, they are more convenient to work with and analyze. The matrix re-parameterization transfers the non-convexity from the objective function to the constraint set which makes it possible to relax or convexify the problem as we do next. 

Relaxing all constraints on $X$ in {\GramErgoVEC} leads to the following optimization problem named {\GramErgoPMI} (relaxing constraints in {\GramVEC} similarly will yield a corresponding optimization problem {\GramPMI}):
\begin{definition}[{\GramErgoPMI} optimization problem] \label{Defi:GramErgoPMI}\hfill	
\begin{equation} 
\argmin_{X \in \mathbb{R}^{n\times n}}f(\bar{N}^+, \bar{N}^-, X).
\label{Eq:PMI}
\end{equation}
\end{definition}
In general, {\GramErgoPMI} is not equivalent to {\GramErgoVEC}. The relaxation enlarges the feasible set and the optimal solution may not satisfy the constraints in Eq.\eqref{Eq:GramErgoVEC}. Nonetheless,  Eq.~\eqref{Eq:PMI} admits a closed-form solution:
\begin{prop}
\label{Prop:PMI_solution}
Let $X^*$ be the solution to Eq.~\eqref{Eq:PMI}. Then, $X^*$ is unique, symmetric, and given by 	%
\begin{equation} \label{Eq:PMI_solution}
    X_{ij}^* = X_{ji}^* =
	  \begin{cases}
        \ln\left(\frac{\bar{n}_{ij}^+}{\bar{n}_{ij}^-}\right) & \text{if } \bar{n}_{ij}^+\neq 0;\\
				-\infty & \text{if } \bar{n}_{ij}^+= 0.
      \end{cases}	
\end{equation}
Let $p_{\ell}(i, j)$ denote the probability that a randomly sampled pair from the positive set $\mathcal{D}_{+}$ equals $(i,j)$ and let $p_{\ell 1}(i)$ and $p_{\ell 2}(j)$ denote, respectively, the first- and second-component marginal probabilities of $i$ and $j$ that are consistent with the joint pmf $p_{\ell}(i,j)$.\footnote{That is, $p_{\ell 1}(i) := \sum_{j\in\mathcal{V}} p_{\ell}(i,j)$ and $p_{\ell 2}(j) := \sum_{i\in\mathcal{V}} p_{\ell}(i,j)$. Note that $p_{\ell}(i,j)$ may not be symmetric when $\ell$ is finite.}
Let $\text{PMI}_{\ell}(i,j) := \ln\left(\frac{p_{\ell}(i,j)}{p_{\ell 1}(i)p_{\ell 2}(j)}\right)$ denote the {\textit {\textbf{Pointwise Mutual Information (PMI)}}} of $(i,j)$ \cite{church1990word}. 
Then for all $i,j$,
	  \begin{equation}
			X_{ij}^* = \lim_{\ell\rightarrow\infty}\text{PMI}_{\ell}(i,j) - \ln{k}.
	  \end{equation} 
\end{prop}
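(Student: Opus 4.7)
The plan rests on a separability observation: since the feasible set $\mathbb{R}^{n\times n}$ imposes no coupling between different entries of $X$, the unconstrained objective $f(\bar{N}^+,\bar{N}^-,X)$ splits across the $n^2$ entries, reducing the problem to minimizing
\[
g_{ij}(x) := \bar{n}_{ij}^+\,\sigma(x) + \bar{n}_{ij}^-\,\sigma(-x)
\]
independently over $x\in\mathbb{R}$ for each pair $(i,j)$. I would compute $g_{ij}'$ using $\sigma'(x)=-1/(1+e^x)$ and solve the first-order condition $\bar{n}_{ij}^- e^x = \bar{n}_{ij}^+$, yielding $x^*=\ln(\bar{n}_{ij}^+/\bar{n}_{ij}^-)$ in the non-degenerate case. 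Uniqueness follows from strict convexity: $\sigma'' > 0$ and by \Cref{Thm:Ergolim} we have $\bar{n}_{ij}^- = kw\pi_i\pi_j > 0$ for every $(i,j)$, so $g_{ij}$ is strictly convex. In the degenerate case $\bar{n}_{ij}^+=0$, the function $g_{ij}$ collapses to $\bar{n}_{ij}^-\sigma(-x)$, which is strictly decreasing with infimum attained only as $x\to -\infty$, matching the piecewise formula.

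For the symmetry claim $X_{ij}^*=X_{ji}^*$, I would verify that $\bar{N}^+$ and $\bar{N}^-$ are themselves symmetric. The matrix $\bar{N}^-$ is manifestly symmetric from its formula $kw\pi_i\pi_j$. For $\bar{N}^+$, reversibility of the natural random walk, $\pi_i W_{ij}=\pi_j W_{ji}$, extends by a short induction on $v$ to $\pi_i(W^v)_{ij}=\pi_j(W^v)_{ji}$; summing over $v\le w$ gives symmetry of $\bar{N}^+$. Since each $g_{ij}=g_{ji}$ is then identical to its transpose counterpart, the pointwise minimizers coincide.

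To identify $X^*$ with the PMI, I would pass to the limit inside $p_\ell(i,j)=n_{ij}^+/|\mathcal{D}_+|$ by dividing numerator and denominator by $rn\ell$ and invoking \Cref{Thm:Ergolim}, obtaining $\lim_\ell p_\ell(i,j)=\bar{n}_{ij}^+/S$ with $S:=\sum_{i,j}\bar{n}_{ij}^+$. Row-stochasticity of each $W^v$ gives $\sum_j\bar{n}_{ij}^+=w\pi_i$, and stationarity $\pi^\top W=\pi^\top$ gives $\sum_i\bar{n}_{ij}^+=w\pi_j$, from which $S=w$. Hence the marginal limits are $\pi_i$ and $\pi_j$. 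Substituting into the definition of $\text{PMI}_\ell$ and using $\bar{n}_{ij}^-=kw\pi_i\pi_j$ to reintroduce the $-\ln k$ factor yields
\[
\lim_{\ell\to\infty}\text{PMI}_\ell(i,j) \;=\; \ln\frac{\bar{n}_{ij}^+}{w\pi_i\pi_j} \;=\; \ln\frac{\bar{n}_{ij}^+}{\bar{n}_{ij}^-} + \ln k,
\]
which rearranges to the claimed identity $X_{ij}^*=\lim_\ell\text{PMI}_\ell(i,j)-\ln k$.

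The essential content is concentrated in the first paragraph; the other steps are short algebraic bookkeeping once \Cref{Thm:Ergolim} is in hand. The only mild subtlety I anticipate is interchanging the $\ell\to\infty$ limit with the finite sums defining $|\mathcal{D}_+|$ and the marginals, which is legitimate because for a fixed finite graph only finitely many ergodic limits are involved, and consistently handling the $\bar{n}_{ij}^+=0$ degeneracy so that both the minimizer and $\text{PMI}_\ell(i,j)$ evaluate to $-\infty$ simultaneously.
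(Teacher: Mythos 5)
Your proposal follows essentially the same route as the paper's proof: entrywise separability of the unconstrained objective plus strict convexity and the first-order condition $e^x=\bar{n}_{ij}^+/\bar{n}_{ij}^-$ for the closed form, symmetry of $\bar{N}^+$ and $\bar{N}^-$ via the reversibility identity $\pi_i(W^v)_{ij}=\pi_j(W^v)_{ji}$ proved by induction, and the PMI identification by normalizing $n_{ij}^+$ by $\vert\mathcal{D}_+\vert$ and using row-stochasticity and stationarity to obtain the limiting marginals $\pi_i$ and $\pi_j$ (the paper also explicitly checks the cross-component case where $p_\ell(i,j)=0$ so both sides are $-\infty$). One trivial slip: in the degenerate case $\bar{n}_{ij}^+=0$ the surviving term $\bar{n}_{ij}^-\,\sigma(-x)=\bar{n}_{ij}^-\ln(1+e^{x})$ is strictly \emph{increasing}, not decreasing, which is precisely why its infimum is approached only as $x\to-\infty$, as you correctly conclude.
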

The proof of \Cref{Prop:PMI_solution} is presented in Appendix~\ref{app:PMI_solution}. 
Although $X^*$ is symmetric, there is no guarantee that $X^*$ will satisfy the positive semi-definiteness constraint of {\GramErgoVEC}, let alone the rank constraint.
Without positive semi-definiteness,
the square root of $X^*$'s nonzero singular values 
will be imaginary and there will not exist any real-valued embedding vectors, even in $\mathbb{R}^n$, whose Gram matrix equals $X^*$. 
A practical solution then is to compute the $\ell_2$ projection of $X^*$ into the rank-$d$ real positive semi-definite cone of $n \times n$ matrices  and factorize the projected matrix $\text{Proj}(X^*,d)$ to get embeddings. Still, there is no guarantee that the projected matrix $\text{Proj}(X^*,d)$ or the embeddings $\mathbf{u}_i$'s obtained from it will be a solution to {\GramErgoVEC}. We compare the {\GramErgoVEC} and {\GramErgoPMI} embedding vectors experimentally in \Cref{Sec:experiments_sbm}.

An alternative approach to deal with the non-convexity of the {\GramErgoVEC} is to replace the non-convex rank constraint with a nuclear norm constraint which is convex. The nuclear norm $\|X\|_{*}$ of a matrix $X$ is defined as the sum of its singular values. Its relationship with the rank of a matrix has been extensively studied in the literature. For example, nuclear norm level sets have been shown to be the convex-envelope of rank level sets\cite{fazel2004rank}. 
A bounded nuclear norm constraint has been used as a proxy for a bounded rank constraint in a number of problems such as low rank matrix completion \cite{dong2018low}, tensor robust PCA \cite{lu2019tensor}, and compressed sensing \cite{recht2010guaranteed}.
For some problems there exists an exact equivalence between these constraints but the conditions under which this occurs varies from problem to problem. Relaxing the rank constraint of {\GramErgoVEC} via a bound on the nuclear norm leads to the following optimization problem that we term {\NucGramErgoVEC} (we can similarly define {\NucGramVEC}): 
\begin{definition}[{\NucGramErgoVEC} optimization problem] \label{Defi:NucGramErgoVEC}\hfill
    \begin{equation} \label{Eq:NucGramErgoVEC}
        \argmin_{X \in \mathbb{S}^n_{+},\ \|X\|_{*} \leq \nu_n} f(\bar{N}^+, \bar{N}^-, X)
    \end{equation}
\end{definition}

A larger $\nu_n$ implies a looser nuclear norm constraint. When $\nu_n$ goes to $\infty$, the solution to {\NucGramErgoVEC} will approach the $\ell_2$ projection of the {\GramErgoPMI} solution onto the real positive semi-definite cone of $n \times n$ matrices. However, when $\nu_n$ goes to $0$, the solution to {\NucGramErgoVEC} will reduce to the all zeros matrix which has rank $0$. In general, we should expect smaller values of $\nu_n$ to yield solutions with approximately lower rank. This is corroborated by our experiments in \Cref{Subsec:exp_geometry}. Thus the rank of the solution matrix, and consequently the dimension of the embedding vectors, can be controlled by the value of $\nu_n$. 
Note that we allow the nuclear norm threshold $\nu_n$ to depend on $n$. In \Cref{Subsec:expected_obj} we show that the nuclear norms of the solutions to {\GramErgoPMI} and {\GramErgoVEC} for idealized graphs that have community structure scale with $n$ as $\nu_n = \Theta(n)$.

There are numerous algorithms available to numerically compute a solution to the {\NucGramErgoVEC} optimization problem. We modified and implemented Hazan's algorithm~\cite{hazan2008sparse} to generate all our experimental results. 
    
\subsection{Embeddings of expected SBM graphs} \label{Subsec:expected_obj}

As an important step toward analyzing concentration properties of embeddings, in this section we study the embedding solutions of different optimization problems focusing on the expected graph of a two-community SBM. Such graphs have an ideal community structure: all edges between nodes belonging to any specified pair of communities have the same edge weight. Our main result is summarized in the following theorem:
\begin{restatable}[Embeddings of an expected SBM graph]{thm}{ThmExpectedSolution} \label{Thm:expected_solution}
Let $\mathcal{G}$ be an SBM graph with $n=2m$, $m\geq2$, nodes and two balanced communities. For all $i\in \mathcal{V}$, let $y_i \in \{0,1\}$ denote the community label of node $i$. Let $a$ and $b$ be the edge forming probabilities for within- and cross-community edges, respectively, with $a > \frac{m}{m-1}b$. Let $\mathbf{E}[\mathcal{G}]$ denote the expected graph and $\bar{n}_{ij}^+$'s and $\bar{n}_{ij}^-$'s the ergodic limits for $\mathbf{E}[\mathcal{G}]$ as in \Cref{Defi:Ergolim}, with $k \geq 1$ and $w \geq 1$. Let 
\begin{align}
X^*(\mathcal{H}) := \argmin_{X \in \mathcal{H}} f(\bar{N}^+, \bar{N}^-, X),
\label{Eq:Thm2_optimization}
\end{align}
where $\mathcal{H} \subset \mathbb{R}^{n\times n}$. Let $\mathcal{E}_0 := \left\{(i,j) \in \mathcal{E}: y_i = y_j \right\}$ and  $\mathcal{E}_1 := \mathcal{E} \backslash \mathcal{E}_0$.
Then:
\begin{enumerate}
	\item \label{Thm2:Coeffs}
	Structure of ergodic limits: The values of $\bar{n}_{ij}^+$ and $\bar{n}_{ij}^-$ depend only on the community  membership of $(i,j)$, i.e.,
		\begin{align*}
 			\bar{n}_{ij}^+ &= \begin{cases}
 								\alpha_1, &\text{if $(i,j)\in\mathcal{E}_0$} \\
 								\alpha_2, &\text{if $(i,j)\in\mathcal{E}_1$} \\
 								\alpha_3, &\text{if $i=j$}
 						 	  \end{cases} \\
 			\bar{n}_{ij}^- &= \beta,\quad \forall (i,j),
 		\end{align*}
 	where $\beta = \frac{kw}{n^2}$ and $\alpha_i=C_i/n^2+o(1/n^2)$ for $i=1,2,3$, where $C_i$'s are functions of only $a$, $b$ and $w$. 
	\item \label{Thm2:Unconstrained} {\GramErgoPMI} solution: Let $\mathcal{H} = \mathbb{R}^{n\times n}$. Then $X^*(\mathcal{H})$ has full rank with the same structure as $\bar{n}_{ij}^+$ with
		\begin{align*}
 			X^*(\mathbb{R}^{n\times n}) &= 
 			            \begin{cases}
 							\ln\left(\frac{\alpha_1}{\beta}\right), &\text{if $(i,j)\in\mathcal{E}_0$} \\
 							\ln\left(\frac{\alpha_2}{\beta}\right), &\text{if $(i,j)\in\mathcal{E}_1$} \\
 							\ln\left(\frac{\alpha_3}{\beta}\right), &\text{if $i=j$}
 						\end{cases}
 		\end{align*}
 	\item \label{Thm2:PSD_constrained} {\NucGramErgoVEC} solution for $\nu_n = \infty$: Let $\mathcal{H} = \mathbb{S}^n_{+}$ and $\nu_1 := \ln\left(\frac{\bar{\alpha}_{13}+\beta}{\alpha_2+\beta}\right)$ where $\bar{\alpha}_{13} \defeq \frac{m-1}{m}\alpha_1 + \frac{1}{m}\alpha_3$.
 	Then $X^*(\mathcal{H})$ has rank $1$.  Moreover, if $\nu_n = \nu_1 n$, then 
		\begin{align*}
 			X^*(\mathbb{S}^n_{+}) &= 
 			    \begin{cases}
 				    \phantom{-}\nu_1, &\text{if $(i,j)\in\mathcal{E}_0$ or $i=j$} \\
					-\nu_1, &\text{if $(i,j)\in\mathcal{E}_1$}.
 				\end{cases}
 		\end{align*}
 	\item The nuclear norms of $X^*(\mathbb{R}^{n\times n}) $ and $X^*(\mathbb{S}^n_{+})$ scale with $n$ as $\Theta(n)$. 
\end{enumerate}
\end{restatable}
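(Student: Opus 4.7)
The plan is to handle the four parts in order. Parts 1, 2, and 4 follow by applying \Cref{Thm:Ergolim} and \Cref{Prop:PMI_solution} together with spectral computations on two-block-structured matrices; Part 3, the PSD-constrained optimum, is where the substantive work lies and is the main obstacle.

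\textbf{Parts 1 and 2.} In the expected graph every node has identical degree $(m{-}1)a+mb$, so \Cref{Thm:Ergolim} gives $\pi_i = 1/n$ and hence $\bar{n}_{ij}^- = kw/n^2 \eqdef \beta$. The transition matrix $W$ takes only three distinct entry values (determined by whether $(i,j)\in\mathcal{E}_0$, $(i,j)\in\mathcal{E}_1$, or $i=j$), and matrix multiplication preserves this block structure, so $(W^v)_{ij}$ and hence $\bar{n}_{ij}^+$ take only three values $\alpha_1,\alpha_2,\alpha_3$. Letting $\bm{s}\in\{\pm 1\}^n$ be the community-sign vector, $W$ is simultaneously diagonalized on the mutually orthogonal subspaces spanned by $\mathbf{1}$, by $\bm{s}$, and by their joint orthogonal complement, with eigenvalues $1$, $\lambda_2 = ((m{-}1)a-mb)/((m{-}1)a+mb)$ (positive under the hypothesis $a > \tfrac{m}{m-1}b$), and $-p$ with $p = a/((m{-}1)a+mb)$. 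Summing the resulting closed form for $(W^v)_{ij}$ over $v=1,\ldots,w$ yields the stated $\alpha_i = C_i/n^2 + o(1/n^2)$ with $C_i = C_i(a,b,w)$. Part 2 then follows from \Cref{Prop:PMI_solution}: $X^*_{ij} = \ln(\bar n_{ij}^+/\beta)$ inherits the three-value block structure, and the corresponding three distinct eigenvalues (with multiplicities $1,1,n{-}2$) are nonzero under the hypothesis, yielding full rank.

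\textbf{Part 3 (main step).} The hurdle is the high-dimensional PSD cone; the strategy is symmetry reduction followed by KKT verification on a three-scalar convex problem. The objective is invariant under the group generated by within-community permutations and community swap, and the PSD cone is invariant under the corresponding conjugations. By convexity of $f$ on $\mathbb{S}^n_+$, group-averaging any feasible $X$ yields a block-structured PSD matrix of no-larger cost, so some optimum has the three-value structure. Every such PSD matrix can be written $X = (\mu_1/n)\mathbf{1}\mathbf{1}^\top + (\mu_2/n)\bm{s}\bm{s}^\top + \mu_3 P$, where $P$ is the orthogonal projector onto $\{\mathbf{1},\bm{s}\}^{\perp}$ and $\mu_1,\mu_2,\mu_3 \ge 0$ are the three distinct eigenvalues; the reduced objective $F(\mu_1,\mu_2,\mu_3)$ is strictly convex. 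I then verify that the candidate $(\mu_1,\mu_2,\mu_3) = (0,\nu_1 n,0)$, which is rank one and realizes the stated $\pm\nu_1$ entry pattern of $\nu_1\bm{s}\bm{s}^\top$, is a KKT point; by convexity this suffices for unique global optimality. The stationarity condition $\partial F/\partial\mu_2 = 0$ simplifies algebraically to $e^{\nu_1}(\beta+\alpha_2) = \beta + \bar\alpha_{13}$, matching the claimed $\nu_1$. Using this identity, the remaining KKT inequalities $\partial F/\partial\mu_1 \ge 0$ and $\partial F/\partial\mu_3 \ge 0$ reduce respectively to $\bar\alpha_{13}\alpha_2 \le \beta^2$ and $\alpha_1 \ge \alpha_3$. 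The first is verified from the leading-order forms of Part 1 together with $k \ge 1$ (it reduces to $\alpha_1\alpha_2 \le (\alpha_1+\alpha_2)^2/4 \approx w^2/n^4 \le k^2w^2/n^4 = \beta^2$, together with $\bar\alpha_{13}\le\alpha_1$). The second follows from the Part 1 spectral computation, which yields $\alpha_1 - \alpha_3 = -(1/n)\sum_{v=1}^w(-p)^v \ge 0$ since the geometric sum is nonpositive for $p \in (0,1)$.

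\textbf{Part 4.} The scaling is immediate from the spectral structures above. For $X^*(\mathbb{R}^{n\times n})$, the two eigenvalues on $\{\mathbf{1}, \bm{s}\}$ are $\Theta(n)$ while the $(n{-}2)$-fold eigenvalue $|x_3 - x_1|$ is $\Theta(1)$, giving $\|X^*\|_* = \Theta(n) + \Theta(n) + (n{-}2)\cdot\Theta(1) = \Theta(n)$. For $X^*(\mathbb{S}^n_+) = \nu_1\bm{s}\bm{s}^\top$, the unique nonzero singular value equals $\nu_1 n = \Theta(n)$ since $\nu_1$ is a positive constant independent of $n$ under the hypothesis $a > \tfrac{m}{m-1}b$.
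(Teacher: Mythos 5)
Your overall route is sound and, for the substantive Part 3, genuinely different from the paper's. The paper first replaces an arbitrary feasible $X$ by the entry-averaged DBC matrix (proving its positive semi-definiteness by a trace inequality on the diagonal blocks), then eliminates variables in two stages: it shows $c_1^*=c_3^*$ via an integral-comparison argument using $\alpha_1\geq\alpha_3$, and shows $c_1^*=-c_2^*$ by proving the unconstrained two-variable minimizer is infeasible (via $\bar{\alpha}_{13}\alpha_2<\beta^2$) together with a line-segment argument, before solving a one-variable problem. Your version--group-averaging to get the blockwise-constant reduction (which makes feasibility of the averaged matrix automatic), reparameterizing by the eigenvalues $(\mu_1,\mu_2,\mu_3)$ on $\mathbf{1}$, $\bm{s}$, $\{\mathbf{1},\bm{s}\}^\perp$, and verifying the rank-one candidate by KKT--is more direct, and I checked that your stated reductions are exactly right: stationarity in $\mu_2$ gives $e^{\nu_1}(\alpha_2+\beta)=\bar{\alpha}_{13}+\beta$, and the complementary inequalities $\partial F/\partial\mu_1\geq 0$ and $\partial F/\partial\mu_3\geq 0$ reduce to $\bar{\alpha}_{13}\alpha_2\leq\beta^2$ and $\alpha_1\geq\alpha_3$, the same two scalar facts the paper's argument hinges on. Your proof of $\alpha_1\geq\alpha_3$ (the geometric sum in $-p$ is nonpositive) is correct, as are Parts 1, 2 and 4.

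There is, however, one step that does not go through as written: your justification of $\bar{\alpha}_{13}\alpha_2\leq\beta^2$ via $\bar{\alpha}_{13}\alpha_2\leq\alpha_1\alpha_2\leq\bigl(\tfrac{\alpha_1+\alpha_2}{2}\bigr)^2\approx w^2/n^4\leq\beta^2$. The theorem is an exact statement for all $m\geq2$, $k\geq1$, $w\geq1$, and when $k=1$ there is no slack between $w^2/n^4$ and $\beta^2$ to absorb the "$\approx$": since $\alpha_1+\alpha_2=\tfrac{1}{n^2}\bigl(2w-2\sum_{v=1}^w\lambda_3(W)^v\bigr)$ and $\lambda_3(W)<0$, one has $\tfrac{\alpha_1+\alpha_2}{2}>w/n^2$ (e.g.\ $w=1$), and indeed $\alpha_1\alpha_2>\beta^2$ can occur (take $w=k=1$ and $a$ slightly above $\tfrac{m}{m-1}b$, so $\lambda_2(W)\approx0$ while $-\lambda_3(W)$ is bounded away from $0$). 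So the intermediate bound through $\alpha_1$ is genuinely false in part of the parameter range, even though the inequality you actually need is true. The fix is immediate and avoids any approximation: from your Part 1 formulas, $\bar{\alpha}_{13}=\tfrac{1}{n^2}\bigl(w+\sum_{v=1}^w\lambda_2(W)^v\bigr)$ and $\alpha_2=\tfrac{1}{n^2}\bigl(w-\sum_{v=1}^w\lambda_2(W)^v\bigr)$, so $\bar{\alpha}_{13}\alpha_2=\tfrac{1}{n^4}\bigl(w^2-(\sum_{v=1}^w\lambda_2(W)^v)^2\bigr)<w^2/n^4\leq k^2w^2/n^4=\beta^2$, which is precisely how the paper establishes this inequality. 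With that substitution your KKT verification is complete.
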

The full proof of \Cref{Thm:expected_solution} is presented in Appendix \ref{app:proof_expected_solution}, but the key ideas are as follows. Part $1)$ holds as a result of \Cref{Thm:Ergolim} which characterizes the ergodic limits of normalized bigram counts in terms of the random walk transition matrix. Since the transition matrix of the expected graph has block-wise constant entries, this property is carried forward to the normalized ergodic counts. Part $2)$ follows directly from part $1)$ and \Cref{Prop:PMI_solution}. Part $3)$ is the major piece of this theorem and is proved via an intricate analysis of the structure of the solution. 

The $\Theta(n)$ scaling of nuclear norms of $X^*(\mathbb{R}^{n\times n})$ and $X^*(\mathbb{S}^n_{+})$ arises from the block structure and the fact that all entries are of constant order.
\Cref{Thm:expected_solution} also shows that $\bar{N}^{+}$ and $X^*(\mathbb{R}^{n\times n})$, the solution  to {\GramErgoPMI}, have the structure of a rank $2$ matrix minus a scalar multiple of the identity matrix making them full rank. In contrast, $X^*(\mathbb{S}^n_{+})$, the solution to {\NucGramErgoVEC}, has rank $1$ due to the positive semi-definite constraint.

Part $3)$ of \Cref{Thm:expected_solution} may seem surprising on first glance because we are getting a rank $1$ solution without any rank or nuclear norm constraints. 
The surprise dissipates when we note that the solution is for an expected graph which has an ideal community structure. Such a result would not hold true for a random graph realization.
Nonetheless, we can directly obtain the {\GramErgoVEC} and {\NucGramErgoVEC} solutions for the expected graph from part $3)$ of \Cref{Thm:expected_solution}:
\begin{restatable}{corollary}{Thm2_corollary} \label{Cor1:Thm_sol_expectation_under_rank_nuc}
  Under the assumptions of \Cref{Thm:expected_solution},
  \begin{enumerate}
    \item {\GramErgoVEC} solution for any positive rank: Let $\mathcal{H} = \{X \in \mathbb{S}^n_{+}: \text{rank}(X) \leq d \}$ with $d \geq 1$. Then $X^*(\mathcal{H})$ has rank $1$ and equals the {\GramErgoVEC} solution for $\nu_n = \infty$ characterized in part 3) of \Cref{Thm:expected_solution}.
	\item {\NucGramErgoVEC} solution for all $\nu_n \geq \nu_1 n$: 
	Let $\mathcal{H} = \{X \in \mathbb{S}^n_{+}: \|X\|_* \leq \nu_n \}$. If $\nu_n \geq \nu_1 n$, then $X^*(\mathcal{H})$ has rank $1$ and equals the {\GramErgoVEC} solution for $\nu_n = \infty$ characterized in part 3) of \Cref{Thm:expected_solution}.
  \end{enumerate}
\end{restatable}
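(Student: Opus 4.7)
The plan is to derive both parts of the corollary almost immediately from part~3) of Theorem~\ref{Thm:expected_solution} via the elementary principle that when a minimizer over a superset happens to lie in a subset, it is also the minimizer over the subset.

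First I would record a strict-convexity fact. The objective $f(\bar{N}^+,\bar{N}^-,X) = \sum_{i,j}\bigl(\bar{n}_{ij}^+\sigma(X_{ij}) + \bar{n}_{ij}^-\sigma(-X_{ij})\bigr)$ is strictly convex in $X$: $\sigma$ is strictly convex, and part~1) of Theorem~\ref{Thm:expected_solution} gives $\bar{n}_{ij}^- = \beta = kw/n^2 > 0$ for every $(i,j)$, so every coordinate $X_{ij}$ appears inside a strictly convex term with positive weight. Consequently the rank-$1$ positive semi-definite matrix $X^\star := \nu_1 v v^\top$ identified in part~3) of Theorem~\ref{Thm:expected_solution} (with $v \in \{\pm 1\}^n$ the community-indicator vector and $\nu_1 > 0$ by the SBM assumption $a > \frac{m}{m-1}b$) is the \emph{unique} minimizer of $f$ over $\mathbb{S}^n_+$.

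For part~1, I would observe that $\{X \in \mathbb{S}^n_+ : \mathrm{rank}(X)\le d\} \subset \mathbb{S}^n_+$ for every $d \ge 1$ and that $X^\star$, having rank $1 \le d$, lies in this subset; since $X^\star$ already minimizes $f$ over the larger set, it also minimizes $f$ over the smaller rank-constrained set. Uniqueness transfers because any competing minimizer on the rank-$d$ PSD set would also minimize $f$ over $\mathbb{S}^n_+$, contradicting the strict-convexity uniqueness above. For part~2, I would first compute $\|X^\star\|_*$ directly from the rank-$1$ factorization: since $X^\star = \nu_1 v v^\top$ with $\|v\|_2^2 = n$, its single nonzero singular value is $\nu_1 n$, so $\|X^\star\|_* = \nu_1 n$. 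Hence whenever $\nu_n \ge \nu_1 n$ the matrix $X^\star$ is feasible for \NucGramErgoVEC, and the same superset/subset argument on the now-convex nuclear-norm-constrained PSD set identifies it as the unique minimizer.

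There is no real obstacle: all heavy lifting sits in part~3) of Theorem~\ref{Thm:expected_solution}; the remaining content reduces to the identity $\|v v^\top\|_* = \|v\|_2^2 = n$ plus the nested-feasible-set observation. The only mildly delicate step is the uniqueness claim in part~1, since the rank-$d$ PSD feasible set is nonconvex, but strict convexity of $f$ on all of $\mathbb{R}^{n\times n}$ together with the fact that the unique $\mathbb{S}^n_+$-minimizer lies inside that subset settles it cleanly.
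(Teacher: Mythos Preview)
Your proposal is correct and matches the paper's intended approach: the paper presents this corollary as following ``directly'' from part~3) of Theorem~\ref{Thm:expected_solution} without an explicit proof, and your nested-feasible-set argument together with the nuclear-norm computation $\|X^\star\|_* = \nu_1\|v\|_2^2 = \nu_1 n$ is precisely the missing routine verification. Your handling of uniqueness on the nonconvex rank-constrained set via strict convexity of $f$ on all of $\mathbb{R}^{n\times n}$ is a clean touch that the paper leaves implicit.
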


When given the expected SBM graph as input, {\GramErgoVEC} and {\NucGramErgoVEC} will return a Gram matrix of rank $1$ or $2$ which when factorized will provide two distinct embedding vectors, each representing one community in the original graph. In short, the algorithms will give embeddings that are perfectly separated across communities and perfectly concentrated within communities.

In part $2)$ of \Cref{Cor1:Thm_sol_expectation_under_rank_nuc}, with $\nu_n \geq \nu_1 n$, the nuclear norm constraint becomes inactive. Suppose that  $\nu_n = \nu_0 n$. If $\nu_0 \leq \nu_1$, we conjecture that the solution will scale proportionally with $\nu_0$:
\begin{conj}
\label{conj:Thm_sol_expectation_small_nu}
Under the assumptions of \Cref{Thm:expected_solution}, let $\mathcal{H} = \{X \in \mathbb{S}^n_{+}: \|X\|_*\leq \nu_0 n \}$. If $\nu_0 < \nu_1$, then
		\begin{align*}
 			X^*(\mathcal{H}) &= \begin{cases}
 							\phantom{-}\nu_0, &\text{if $(i,j)\in\mathcal{E}_0$ or $i=j$} \\
							-\nu_0, &\text{if $(i,j)\in\mathcal{E}_1$}.
 						 								\end{cases} \\
 		\end{align*}     
\end{conj}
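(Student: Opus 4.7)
My plan is to follow the template used to establish part~3) of \Cref{Thm:expected_solution}, combining a symmetry reduction with a matrix-KKT verification at the candidate $X^{\dagger} := \nu_0\,v v^{\top}$, where $v\in\{\pm 1\}^n$ is the community-indicator vector ($v_i=+1$ if $y_i=0$ and $v_i=-1$ if $y_i=1$). By construction $X^{\dagger}$ is rank one and PSD with $\|X^{\dagger}\|_* = \operatorname{tr}(X^{\dagger}) = n\nu_0$, and its entries match exactly the values prescribed by the conjecture. As a first step, the strict convexity of $f$ (since $\sigma''>0$) together with the automorphism group $G$ of the expected two-community SBM (permutations that fix both communities or swap them) and uniqueness of the minimizer force $X^*$ to take only three distinct entries: a diagonal value $d$, a within-community value $c_0$, and a cross-community value $c_1$. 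In the basis $\{u=\mathbf{1}_n,\,v,\,\mathrm{span}\{u,v\}^{\perp}\}$, such an $X^*$ has eigenvalues $\lambda_u = d+(m-1)c_0+mc_1$, $\lambda_v = d+(m-1)c_0-mc_1$, and $\lambda_w = d-c_0$ of multiplicity $n-2$, so the feasible set becomes $\{\lambda_u,\lambda_v,\lambda_w\ge 0,\ d\le \nu_0\}$ and $X^{\dagger}$ corresponds to $\lambda_v = n\nu_0$, $\lambda_u=\lambda_w=0$.

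Next I would write the matrix KKT conditions for $\min_{X\succeq 0,\ \operatorname{tr}X \le \nu_0 n} f(X)$. With Lagrangian $L(X,\mu,S)=f(X)+\mu(\operatorname{tr}X-\nu_0 n)-\langle S,X\rangle$, $\mu \ge 0$, $S\succeq 0$, stationarity yields $S = \nabla f(X^{\dagger})+\mu I$, where
\[
[\nabla f(X^{\dagger})]_{ij} \;=\; \frac{\bar{n}_{ij}^{-}\,e^{X^{\dagger}_{ij}} - \bar{n}_{ij}^{+}}{1+e^{X^{\dagger}_{ij}}}
\]
inherits the same three-block structure by \Cref{Thm:expected_solution}(1); denote its within-/cross-/diagonal entries by $A,B,D$. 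The same spectral calculation gives $\nabla f(X^{\dagger})$ eigenvalues $(m-1)A\pm mB + D$ on $u,v$ and $D-A$ on $\mathrm{span}\{u,v\}^{\perp}$. Complementary slackness $\langle S, X^{\dagger}\rangle = \nu_0\,v^{\top}Sv = 0$ forces $Sv=0$, and using the identity $(m-1)\alpha_1+\alpha_3 = m\bar{\alpha}_{13}$ this reduces to
\[
\mu \;=\; \frac{m\bigl[(\bar{\alpha}_{13}+\beta)-(\alpha_2+\beta)e^{\nu_0}\bigr]}{1+e^{\nu_0}},
\]
which is nonnegative exactly when $e^{\nu_0}\le e^{\nu_1}$, i.e., precisely under the hypothesis $\nu_0 < \nu_1$. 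Primal feasibility and complementary slackness for the trace constraint hold trivially.

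The main obstacle is certifying $S\succeq 0$. Adding $\mu$ to the eigenvalues of $\nabla f(X^{\dagger})$ on $u$ and on $\mathrm{span}\{u,v\}^{\perp}$ gives $2mB$ and $m(B-A)$ respectively, so the remaining KKT conditions reduce to $B \ge 0$ and $B \ge A$, equivalently $e^{\nu_0}\le \beta/\alpha_2$ and $e^{\nu_0}\le (\alpha_1+\beta)/(\alpha_2+\beta)$. To derive both from $\nu_0 < \nu_1$, I would invoke the explicit leading-order form of the ergodic limits, obtained by diagonalizing the expected-SBM transition matrix as $W = \delta I + (1-\delta) uu^{\top}/n + (\gamma - \delta) vv^{\top}/n$ with $\gamma = (a-b)/(a+b) + O(1/n)$ and $\delta = O(1/n)$. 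A short computation then gives $\alpha_1 - \alpha_3 = O(1/n^3)$ (so $\bar\alpha_{13}\to \alpha_1$) and $\alpha_2\bar{\alpha}_{13} = (w^2 - G^2)/n^4 + o(1/n^4)$ with $G := \gamma(1-\gamma^w)/(1-\gamma)$, while $\beta^2 = k^2 w^2/n^4$; an AM--GM-type estimate then gives $\alpha_2\bar{\alpha}_{13}\le\beta^2$ for $k\ge 1$, equivalently $\nu_1 \le \ln(\beta/\alpha_2)$, which makes both secondary inequalities vacuous whenever $\nu_0 < \nu_1$. The genuinely delicate step is making this finite-$n$ rigorous rather than asymptotic: I expect the conjecture as stated to require the mild extra hypothesis $k\ge 1$ (which is the standard regime) plus a careful bookkeeping of the $o(1/n^2)$ remainders in the ergodic limits, after which the KKT certificate above completes the proof.
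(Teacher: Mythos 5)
This statement is left unproved in the paper---it is stated explicitly as a conjecture, and the text only discusses its consequences---so there is no in-paper proof to compare against; your proposal is an attempt at genuinely new material. That said, your route is sound and, with one repair, it actually settles the conjecture. The reduction to a trace constraint (nuclear norm $=$ trace on $\mathbb{S}^n_+$), the symmetry/strict-convexity argument forcing a three-valued solution, and the KKT certificate at $X^{\dagger}=\nu_0 vv^{\top}$ are all correct: I verified your multiplier $\mu = m\bigl[(\bar{\alpha}_{13}+\beta)-(\alpha_2+\beta)e^{\nu_0}\bigr]/(1+e^{\nu_0})$, the eigenvalues $2mB$, $0$, $m(B-A)$ of $S=\nabla f(X^{\dagger})+\mu I$, and the resulting dual-feasibility conditions $e^{\nu_0}\le \beta/\alpha_2$ and $e^{\nu_0}\le(\alpha_1+\beta)/(\alpha_2+\beta)$; since the problem is convex and $f$ is strictly convex, this certificate plus feasibility of $X^{\dagger}$ does give the unique global minimizer, with no constraint qualification needed.

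The step you flag as ``genuinely delicate''---deriving those two inequalities from $\nu_0<\nu_1$ at finite $n$---is in fact not delicate, and your asymptotic detour (with its $o(1/n^2)$ bookkeeping and proposed extra hypothesis) should simply be deleted. The expressions \Cref{alpha_1,alpha_2,alpha_3} are exact for every $m\ge 2$, and the two facts you need are already available exactly: (i) $\alpha_1>\alpha_3$ (\Cref{Eq:Thm2_proof_alpha_relation}) gives $\bar{\alpha}_{13}<\alpha_1$, hence $e^{\nu_1}=(\bar{\alpha}_{13}+\beta)/(\alpha_2+\beta)<(\alpha_1+\beta)/(\alpha_2+\beta)$, which yields $B\ge A$; (ii) the exact computation in Step~3 of the proof of part 3) of \Cref{Thm:expected_solution} shows $\bar{\alpha}_{13}\alpha_2\le\beta^2$ for $k\ge 1$ (\Cref{Eq:Thm2_proof_2to1_goal}), which is equivalent to $e^{\nu_1}\le\beta/\alpha_2$ and yields $B\ge 0$. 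Note that $k\ge 1$ is already an assumption of \Cref{Thm:expected_solution}, which the conjecture inherits, so no additional hypothesis is required. Two small slips in the detour itself: $\alpha_1-\alpha_3=\Theta(1/n^2)$, not $O(1/n^3)$ (it is $\alpha_1-\bar{\alpha}_{13}=(\alpha_1-\alpha_3)/m$ that is $O(1/n^3)$), and an asymptotic bound would in any case only cover large $n$ rather than all $m\ge 2$. Replacing that paragraph with the exact inequalities above turns your proposal into a complete proof of the conjecture.
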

Conjecture~\ref{conj:Thm_sol_expectation_small_nu} makes an assertion about the solution to the {\NucGramErgoVEC} optimization problem when the nuclear norm constraint is active. For a suitable nuclear norm constraint, we conjecture that the solution will be a scaling of the solution in part 3) of \Cref{Thm:expected_solution}. If this conjecture holds, then, we can conclude that the solution to {\NucGramErgoVEC} is always rank $1$ with perfect separation of the communities regardless of the sparsity level of the graph. This would provide a solid starting point for analyzing the concentration properties of solutions as $n$ increases to $\infty$.

%

\section{Experimental setup} \label{Sec:exp_setups}

In the following sections, we compare the node embeddings from different algorithms qualitatively and quantitatively through extensive experiments. This section details our experimental setup, including the generation of random graphs, details of implementation and parameter choices for algorithms, and evaluation metrics for embedding vectors. Section~\ref{Subsec:exp_geometry} explores the geometric properties of embedding vectors for a fixed graph size. Specifically, we study how the nuclear norm linear scaling factor $\nu_0$ of the nuclear norm limit $\nu_n = \nu_0 n$ impacts the embedding geometry in {\NucGramErgoVEC}. In Section~\ref{Subsec:exp_asymptotic} we investigate how embedding vectors change as the graph size $n$ increases and whether they tend to concentrate.

\subsection{SBM graph generation}
For simplicity, we focus on assortative, equal-sized, planted-partition SBM graphs with 2 communities. We generate random graphs with $n = 100, 200, 500, 1000$ nodes. 
We consider two scaling regimes for the within-community edge forming probability $p$ and the cross-community edge forming probability $q$:
1) \textit{Linear regime:} Here, $p$ and $q$ are held constant, with values $p = 0.6$ and $q = 0.06$, for all graph sizes. As a result, the expected node degree scales linearly with $n$.
2) \textit{Logarithmic regime:} Here, $p$ and $q$ diminish with increasing graph size $n$ as a multiple of $(\ln{n})/n$, specifically as $p = 9 (\ln{n}) / n$ and $q = 2 (\ln{n}) / n$. The expected node degree then increases proportionally with $\ln{n}$. Our choices of scaling factors $\tilde{p} = 9$ and $\tilde{q} = 2$ in the logarithmic regime ensure that the information-theoretic threshold for exact community recovery for two communities, given by $\sqrt{\tilde{p}} - \sqrt{\tilde{q}} > \sqrt{2}$~ \cite{abbe2015}, is slightly surpassed.

To ensure graph connectivity and improve community detection performance, we follow the prescription  in \cite{joseph2016impact} and apply $\varepsilon$-smoothing to all generated graphs. 
For a given $\varepsilon\geq 0$, the $\varepsilon$-smoothing of $\mathcal{G}$, denoted by $\mathcal{G}^{\varepsilon}$, is the weighted complete graph with adjacency matrix $A^{\varepsilon}$, where for all $i,j$, $A^{\varepsilon}_{ij} := A_{ij} + \varepsilon$.
In addition to analytical convenience, graph-smoothing also improves the performance of spectral clustering in practice \cite{joseph2016impact}.
The ergodic limit of the coefficients under $\varepsilon$-smoothed graphs can be computed by 
\Cref{Thm:Ergolim}
with a modified probability transition matrix $W_\varepsilon$ that corresponds to the new graph.

The optimal choice of $\varepsilon$ for various performance measures such as signal-to-noise ratio, community detection accuracy, etc., varies across different random graph realizations and embedding algorithms.
Since our focus is on embedding algorithms, we apply $\varepsilon$-smoothing to all random graphs that we generate with the fixed choice $\varepsilon  = \nicefrac{1}{10n}$ instead of optimizing $\varepsilon$ for each algorithm and each performance measure.
This is a relatively small value of $\varepsilon$ as it changes the degrees from the original graph by at most $\nicefrac{1}{10}$ whereas the expected degrees in the linear and logarithmic regimes scale as $n$ and $\ln(n)$, respectively.

\subsection{Algorithm parameter choices and implementation}
\label{Subsec:implementation}
We implement and compare  {\ErgoVEC}, {\GramErgoPMI}, and {\NucGramErgoVEC}, that were proposed in Section~\ref{Sec:alt_formulation}, with VEC and Spectral Clustering (SC). SC serves as a classical benchmark due to its widespread usage.

{\bf \noindent Parameter choices}. For ease of visualization, we compute and plot 2-dimensional embedding vectors for all algorithms, i.e. $d=2$. 

For all algorithms other than SC, we set the window size to $w = 8$, and the negative sampling rate to $k = 5$.
Settings that are specific to SC, VEC, and {\NucGramErgoVEC} are as follows:  

\begin{enumerate}
	\item {\bf SC:} We use the first two eigenvectors of the symmetrically normalized Laplacian, i.e., $D^{-1/2} A D^{-1/2}$, \cite{von2007tutorial}.
	Since the unit-norm eigenvectors are in $\mathbb{R}^n$, their components, and therefore also the node embedding vectors, scale as $O(1/\sqrt{n})$. In order to simultaneously visualize and compare embedding vectors across different values of $n$, we scale them by $\sqrt{n}$.
	\item {\bf VEC:} We launch $r = 10$ walks starting from each node, each of length $\ell = 100$.
	\item {\bf NucGramErgoVEC:} In light of the linear scaling of the nuclear norm of the gram matrix of node embedding vectors for an expected SBM graph, (\textit{cf.}~\Cref{Thm:expected_solution} and \Cref{Cor1:Thm_sol_expectation_under_rank_nuc}), we set $\nu_n = \nu_0 n$ and sweep $\nu_0$ over the range $0.018$ through $0.216$, in steps of $0.018$, in order to illustrate changes in the geometric structure of embedding vectors (\textit{cf.} Fig.~\ref{fig:exp_geometry_scatter_nuc} and Fig.~\ref{fig:exp_metrics_vs_nn}).
\end{enumerate}

{\bf \noindent Implementation of VEC and {\ErgoVEC}.} Both VEC and {\ErgoVEC} have non-convex objectives for which there is no optimization procedure available which guarantees convergence to a global minimizer. A practical way forward is to use Stochastic Gradient Descent (SGD). We can consider two distinct approaches for implementing SGD in VEC or {\ErgoVEC} 1) Map them to an equivalent Word2Vec problem by identifying nodes as words and random walks as sentences and then obtain word embeddings using a Word2Vec package such as Gensim~\cite{rehurek_lrec}. 
2) Reformulate each optimization problem as the training of a neural network with an appropriate architecture and cost and then train the neural network using a neural network package such as Keras~\cite{chollet2015keras}. 
Since the Gensim package cannot handle non-integer coefficients that arise in {\ErgoVEC}, we use Keras to implement {\ErgoVEC} and VEC.
Details of our neural network implementation are presented in Appendix~\ref{appendix_exp_details}.
In \cref{Subsec:exp_asymptotic} we also compare the Gensim and Keras implementations of VEC.

Since both algorithms involve optimizing non-convex objectives, convergence is not guaranteed. We assess the convergence of the objective function value and the solution by evaluating the change in the objective function value and the embeddings after each epoch. To quantify the change in embeddings, we first perform a Procrustes alignment of the embedding solutions from successive epochs and then compute the Frobenius norm of the difference between the aligned sets of embeddings. 
We found that the change in the objective function value diminishes as the number of epochs increases, but the change in the corresponding embeddings retains a small fluctuation after diminishing initially
(\textit{cf.}~Appendix~\ref{appendix_exp_details}).
This suggests that although the objective function value converges, the embeddings may be oscillating around a local optimizer. 
Note that the Keras implementation, which implements SGD, is not designed to minimize changes in the solution (arguments), but rather in the objective function.

{\bf \noindent Implementation of NucGramErgoVEC}. We use Hazan's algorithm~\cite{hazan2008sparse} (suitably modified to handle inequality constraints) to solve the {\NucGramErgoVEC} optimization problem. The algorithm is iterative and is guaranteed to converge to the global minimum. We also empirically confirmed the convergence of both the objective function value and the solution matrix using the approach used for VEC and {\ErgoVEC} described above.

We note, however, that even though Hazan's algorithm is guaranteed to converge to a global minimum,  its convergence rate is slow. In order to improve convergence speed, we initialize with the {\GramErgoPMI} solution suitably scaled to fit the nuclear norm limit. We also terminate the algorithm after $1000$ iterations which is adequate for all our experiments.

\subsection{Visualization and performance evaluation}

\textbf{Representation and alignment of embeddings:} The absolute positions and orientations of embedding vectors may vary across algorithms, graph realizations, and graph sizes. Even for a given graph and algorithm the embedding vectors are not unique due to invariance of the objective function under orthogonal transformations.
However clustering and separation properties of embeddings only depend on the \textit{relative} positions and orientations of embedding vectors. Thus, in order to visualize and simultaneously compare different embeddings qualitatively and quantitatively, we first represent the embedding vectors using their SVD coordinates and then align them with Procrustes analysis.
Specifically, let $U$ be an $n \times d$ matrix whose $i$-th row $\bm{u}_i^\top$ is the embedding of node $i$. Let $U = \tilde{U}\Sigma\tilde{V}^\top$ be the SVD decomposition of $U$. Then, the SVD coordinates of the embedding vectors are given by $U\tilde{V}$. 
To align two sets of embedding points $U_1$ and $U_2$, we do Procrustes analysis, which finds the orthogonal matrix $P$ that minimizes $\Vert U_1 - U_2 P \Vert^2_{\text{F}}$. The aligned points are given by $U_1$ and $U_2P$.

{\bf\noindent Quantifying community separation}. 
We quantify the separation of nodes belonging to different communities using a signal-to-noise ratio (SNR) measured along the line joining the embedding centroids of the two communities.  Specifically, for embeddings of nodes in community $i$ ($i = 1,2$), let $\hat{\bm{\mu}}_i$ denote their empirical mean and $\hat{K}_i$ their empirical covariance matrix. Then we define SNR-1D as follows: 
\begin{align*}
	\text{SNR-1D} &\defeq \frac{\Vert \hat{\bm{\mu}}_1 - \hat{\bm{\mu}}_2\Vert_2^2}{\frac{1}{2} ( \hat{\eta}_1^2 + \hat{\eta}_2^2)}
\end{align*}
where $\hat{\eta}^2_i := (\hat{\bm{\mu}}_1-\hat{\bm{\mu}}_2)^{\top} \hat{K}_i(\hat{\bm{\mu}}_1 - \hat{\bm{\mu}}_2)$ is the empirical variance of the embeddings of nodes in community $i$ when projected onto the line joining the embedding centroids of the two communities.
%

%

\section{Node embedding geometry of SBM graphs}
\label{Sec:experiments_sbm}

In this section, we present and compare embeddings for SBM graphs produced by different algorithms and how they are positioned relative to the embeddings of the expected graph (indicated by black crosses in all our plots). Section~\ref{Subsec:exp_geometry} focuses on the comparing the geometry of embeddings across different algorithms and parameter choices whereas  Section~\ref{Subsec:exp_asymptotic} focuses on the large graph asymptotic behavior of embeddings. 

\subsection{Geometry of embeddings}
\label{Subsec:exp_geometry}

The geometry of node embedding vectors from SC, VEC, {\GramErgoPMI} and {\ErgoVEC} are shown in Fig.~\ref{fig:exp_geometry_scatter_other}. We plot the 2D 95\% confidence ellipse for each embedding cluster (red-colored elliptical curves) based on a maximum likelihood Gaussian fit to the data. 
For SC, {\ErgoVEC}, and {\GramErgoPMI}, the embeddings of the expected SBM graph are two distinct points (characterized in \Cref{Thm:expected_solution} and \Cref{Cor1:Thm_sol_expectation_under_rank_nuc}) which are marked as black crosses in Fig.~\ref{fig:exp_geometry_scatter_other}.
Since the VEC objective is based on empirical skip bigram counts from random walks, for a finite random walk length $\ell$, the embedding vectors of the expected graph will not collapse to two just distinct points, but will be distributed 
around the embedding vectors of {\ErgoVEC} which are indicated as black crossses in the VEC subplot of Fig.~\ref{fig:exp_geometry_scatter_other}.

\begin{figure}[ht]
\captionsetup[subfigure]{labelformat=empty}
\centering
    \begin{subfigure}{0.5\linewidth}
		\includegraphics[trim = 70pt 20pt 85pt 35pt, clip, width=\linewidth]{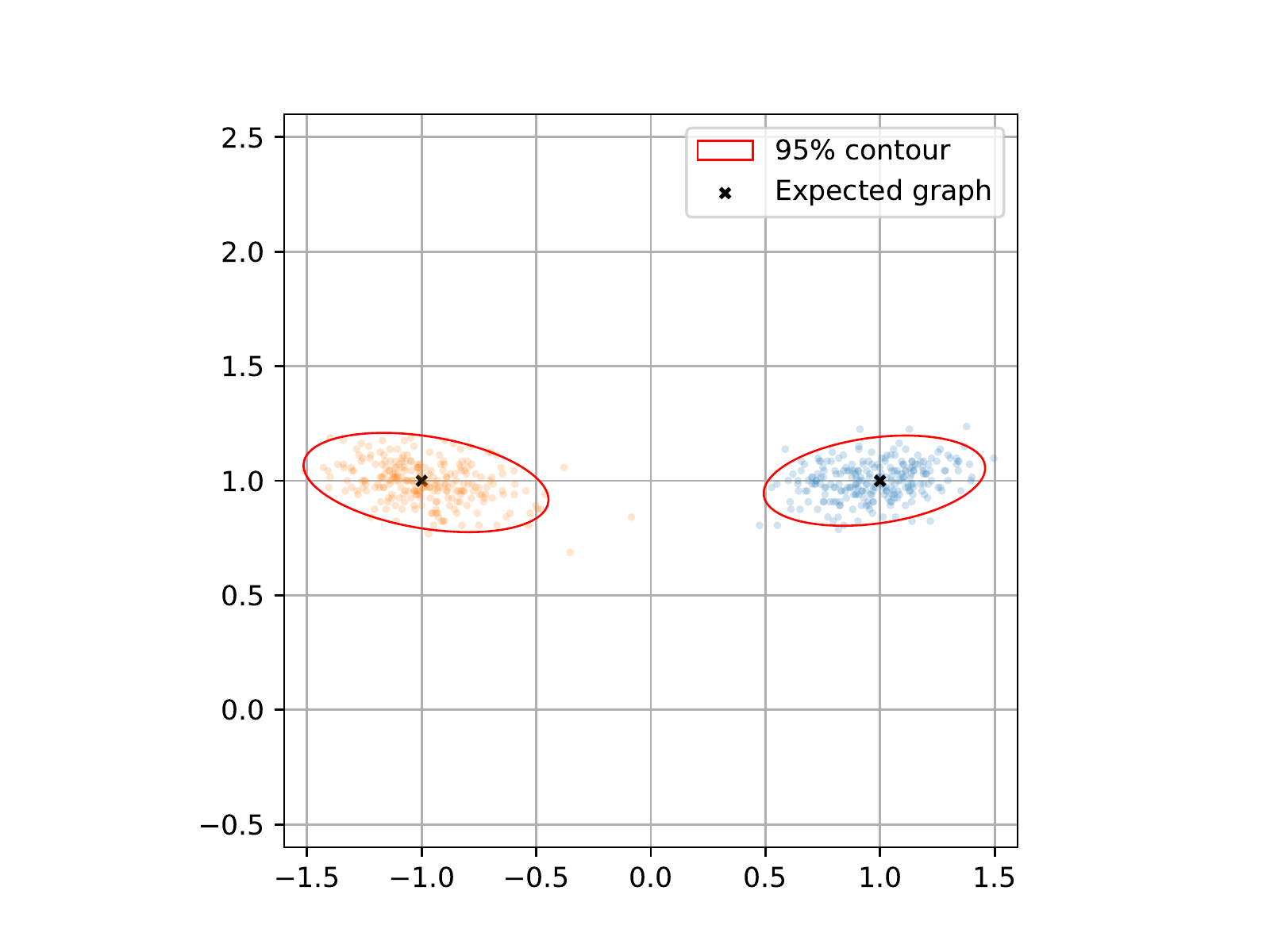}
        \caption{\small ~~~~~SC}
    \end{subfigure}%
    \begin{subfigure}{0.5\linewidth}
		\includegraphics[trim = 70pt 20pt 85pt 35pt, clip, width=\linewidth]{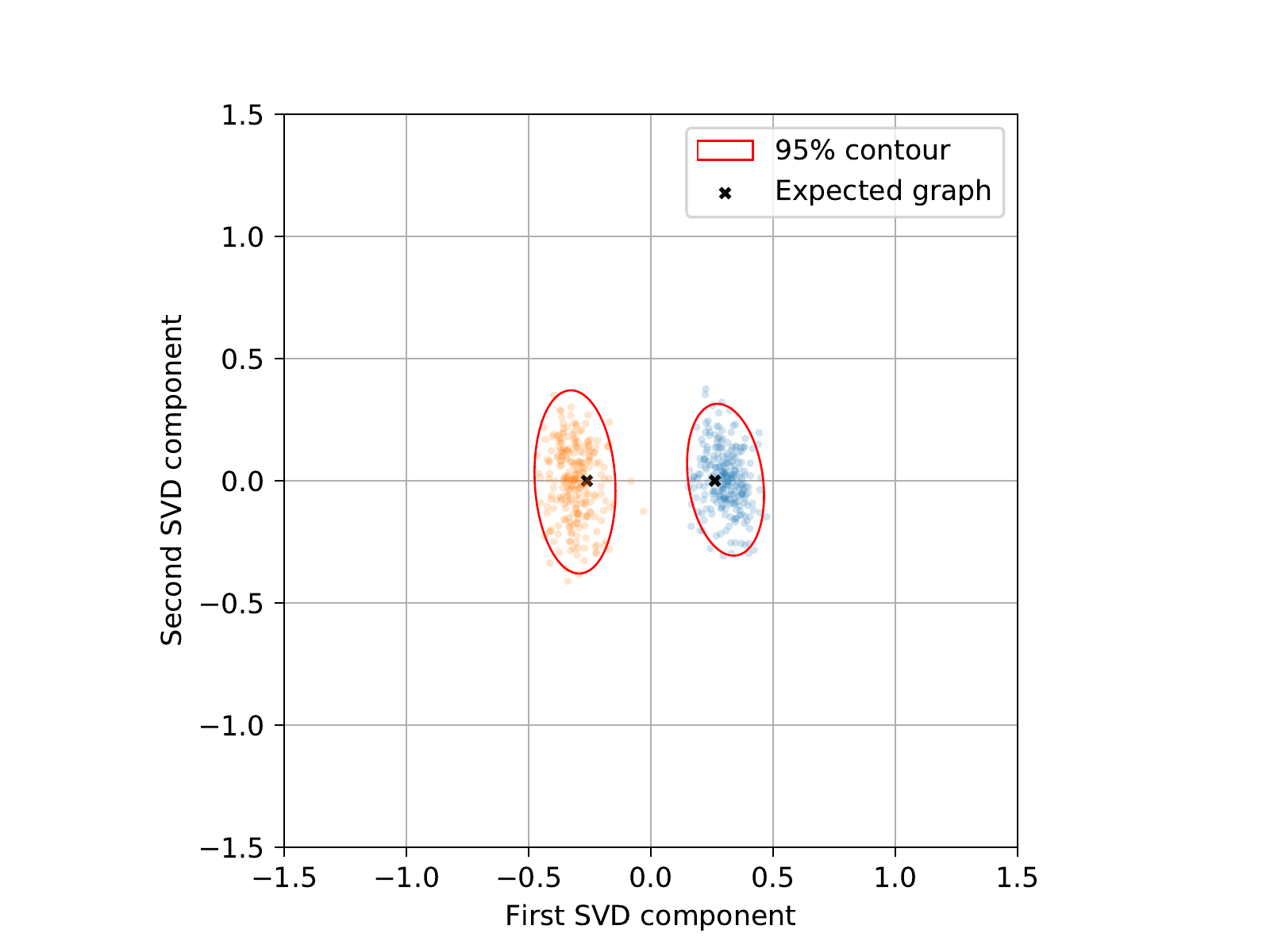}
        \caption{\small ~~~~~VEC}
    \end{subfigure}
    \begin{subfigure}{0.5\linewidth}
		\includegraphics[trim = 70pt 20pt 85pt 35pt, clip, width=\linewidth]{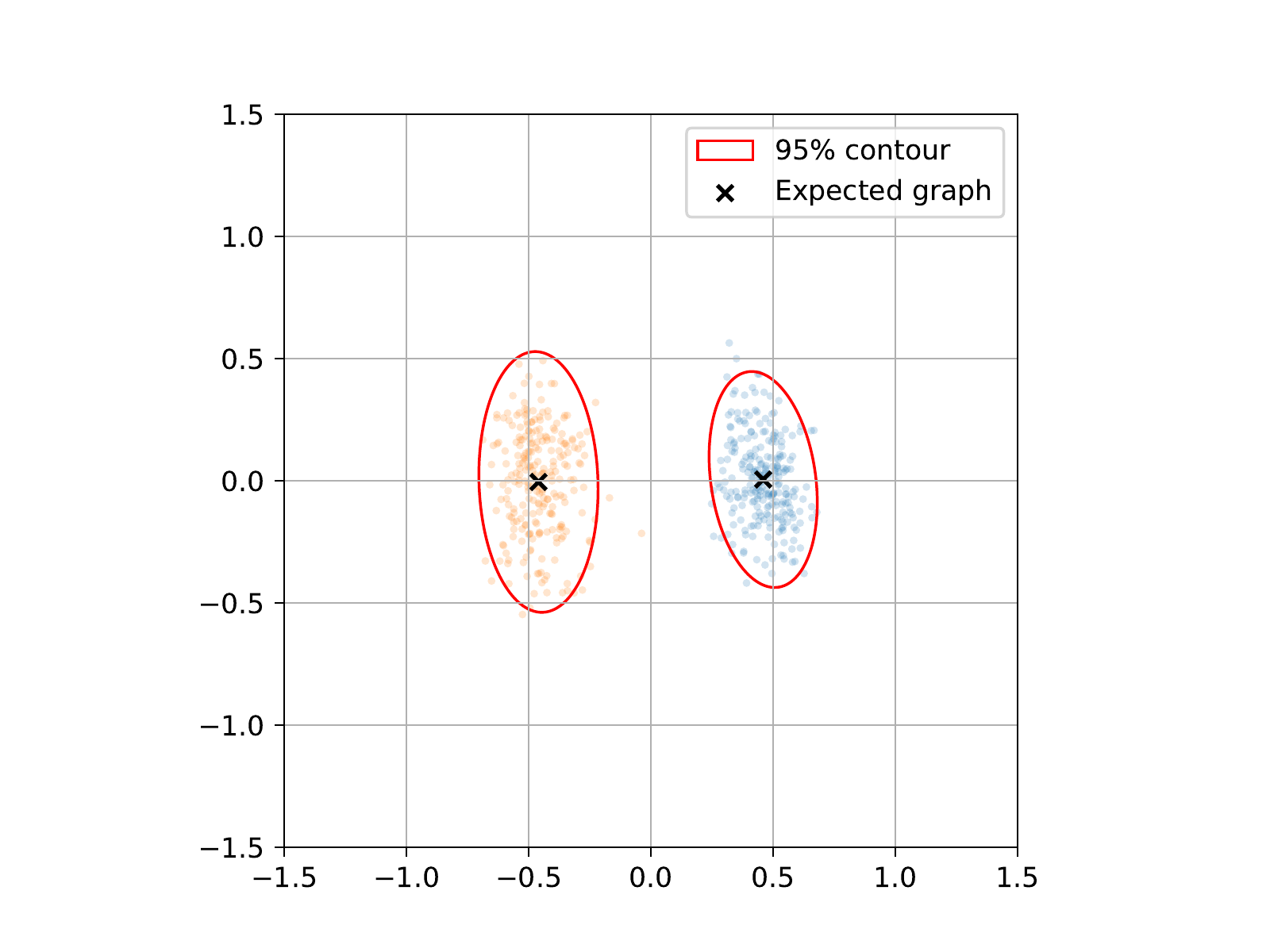}
        \caption{\small ~~~~~ErgoPMI}
    \end{subfigure}%
    \begin{subfigure}{0.5\linewidth}
		\includegraphics[trim = 70pt 20pt 85pt 35pt, clip, width=\linewidth]{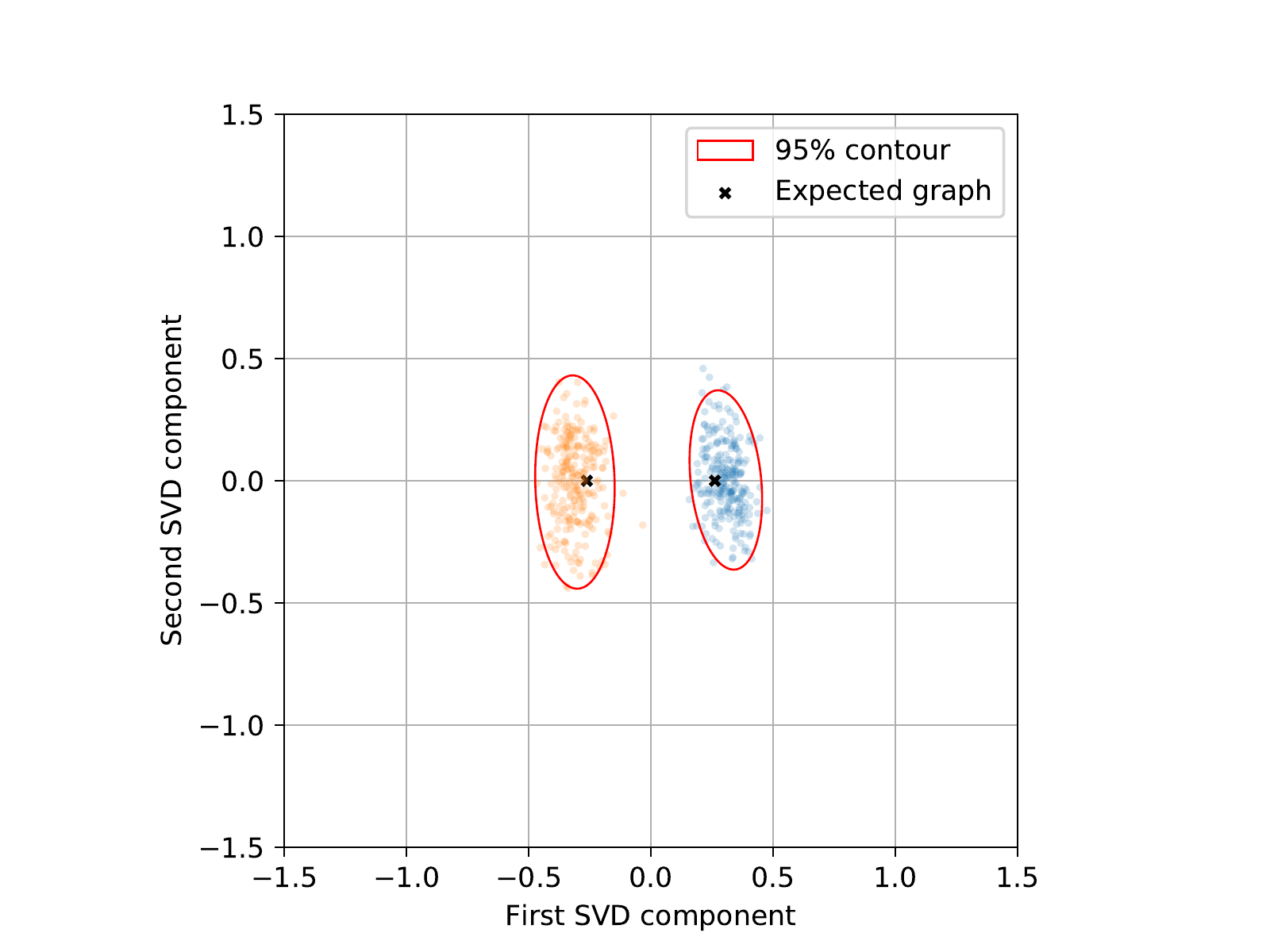}
        \caption{\small ~~~~~ErgoVEC}
    \end{subfigure}
\vglue -1ex
    \caption{\small 2D-visualization of embeddings for SC, VEC, ErgoPMI and ErgoVEC. All four algorithms receive the same graph input with $n=500$ nodes generated using within community edge probabilities $p = 9 \ln{n} / n$ and across community edge probabilities $q = 2 \ln{n} / n$.}
    \label{fig:exp_geometry_scatter_other}
\end{figure}

In Fig.~\ref{fig:exp_geometry_scatter_other}, we observe that in all four algorithms, the node embeddings in each cluster have an elliptical distribution around the cluster centroid and they can be perfectly separated linearly by the bisector of the line joining the two cluster centroids. However, the major axes of the SC embedding ellipses are nearly parallel to their inter-centroid line whereas the major axes of embedding ellipses in the other three algorithms are nearly perpendicular to their respective inter-centroid lines.

We also notice that the embedding ellipses of VEC and {\ErgoVEC} in Fig.~\ref{fig:exp_geometry_scatter_other} look very similar. This is to be expected as the {\ErgoVEC} objective is exactly the large $\ell$ ergodic limit of the VEC objective introduced in Section~\ref{Subsec:ErgoVEC}. To empirically confirm that the {\ErgoVEC} embeddings converge to the VEC embeddings in the large $\ell$ limit, in Fig.~\ref{fig:exp_walk_length_metric} we plot the distance between VEC and {\ErgoVEC} embeddings for increasing values of $\ell$ and different graph sizes.

\begin{figure}[!ht]
\centering
        \centering
		\includegraphics[trim = 25pt 20pt 45pt 40pt, clip, width=\linewidth]{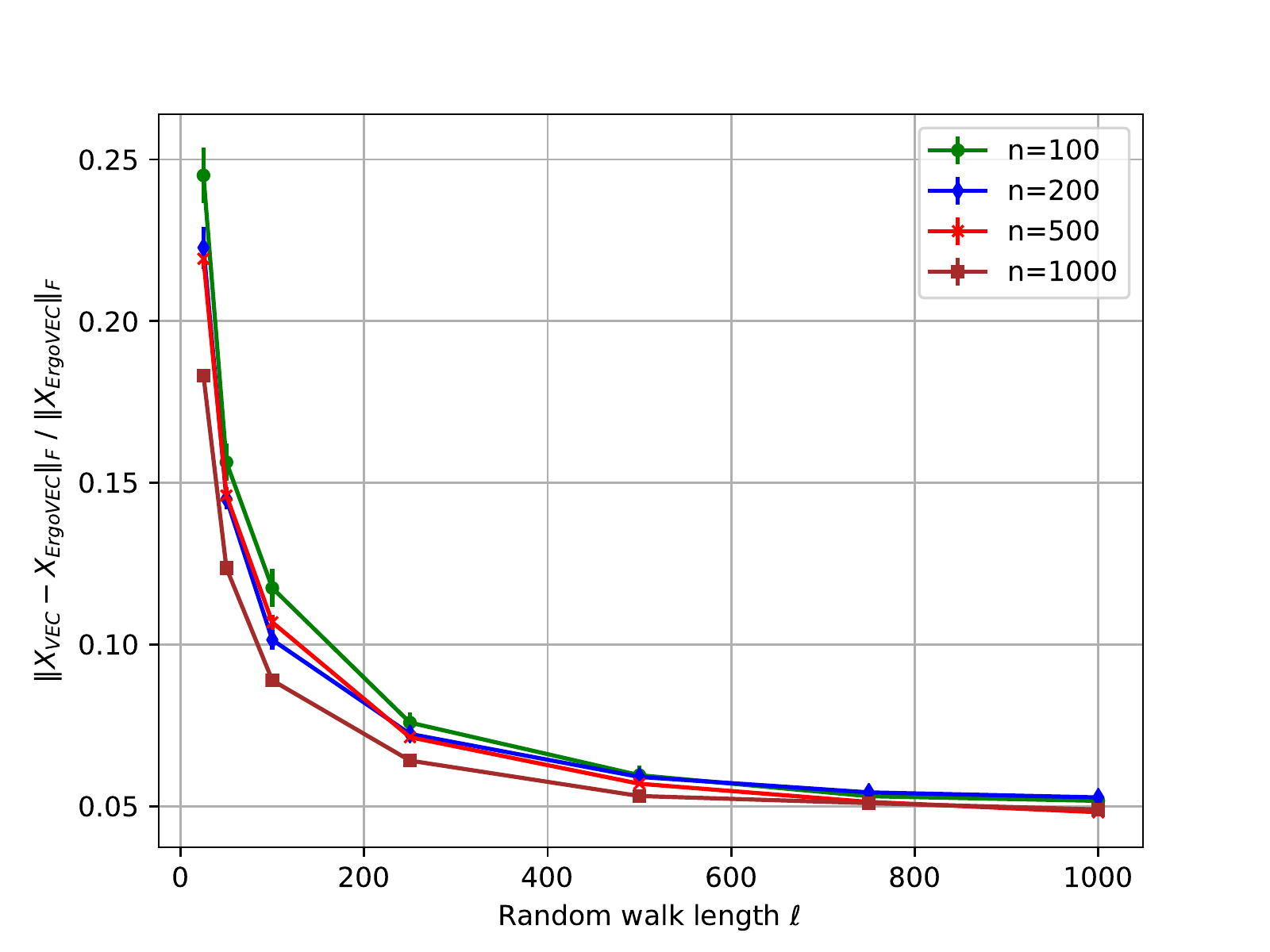}
\vglue -1ex
\caption{\small Convergence of VEC embeddings: Frobenius norm distance between the Gram matrices of VEC and {\ErgoVEC} versus random walk length $\ell$. 
}
\label{fig:exp_walk_length_metric}
\end{figure}

In order to measure the distance between embeddings up to any orthogonal transformation, we use the normalized Frobenius norm distance between the Gram matrices of the embeddings. 
For each $n$, graphs are generated using within-community edge probabilities $p = 9 \ln{n} / n$ and across-community edge probabilities $q = 2 \ln{n} / n$. The plot depicts the mean distance and associated error bar averaged over $5$ independent graph realizations.
Observe that for all graph sizes $n=100, 200, 500, 1000$, as the length of the random walk increases, the distance between VEC and {\ErgoVEC} Gram matrices shrinks. However, due to the non-convexity of the VEC and and lack of global convergence guarantees for SGD methods used to optimize VEC and {\ErgoVEC} objectives ({\it cf.} Section~\ref{Subsec:implementation}), the distance seems to be strictly bounded away from zero even at $\ell = 1000$.
However, the positive and negative $w$-skip bigram counts and the objective function of VEC do converge to their respective {\ErgoVEC} counterparts as $\ell$ increases to infinity.

We now discuss the embedding geometry of {\NucGramErgoVEC}. We separated this discussion from the previous four algorithms because although the embeddings of {\NucGramErgoVEC} are also elliptically distributed and separate well into two clusters, the specific shape depends on the nuclear norm linear scaling factor $\nu_0$ as shown in Fig.~\ref{fig:exp_geometry_scatter_nuc}. 
\begin{figure}[ht]
\captionsetup[subfigure]{labelformat=empty}
\centering
   	\begin{subfigure}{0.5\linewidth}
		\includegraphics[trim = 70pt 20pt 85pt 35pt, clip, width=\linewidth]{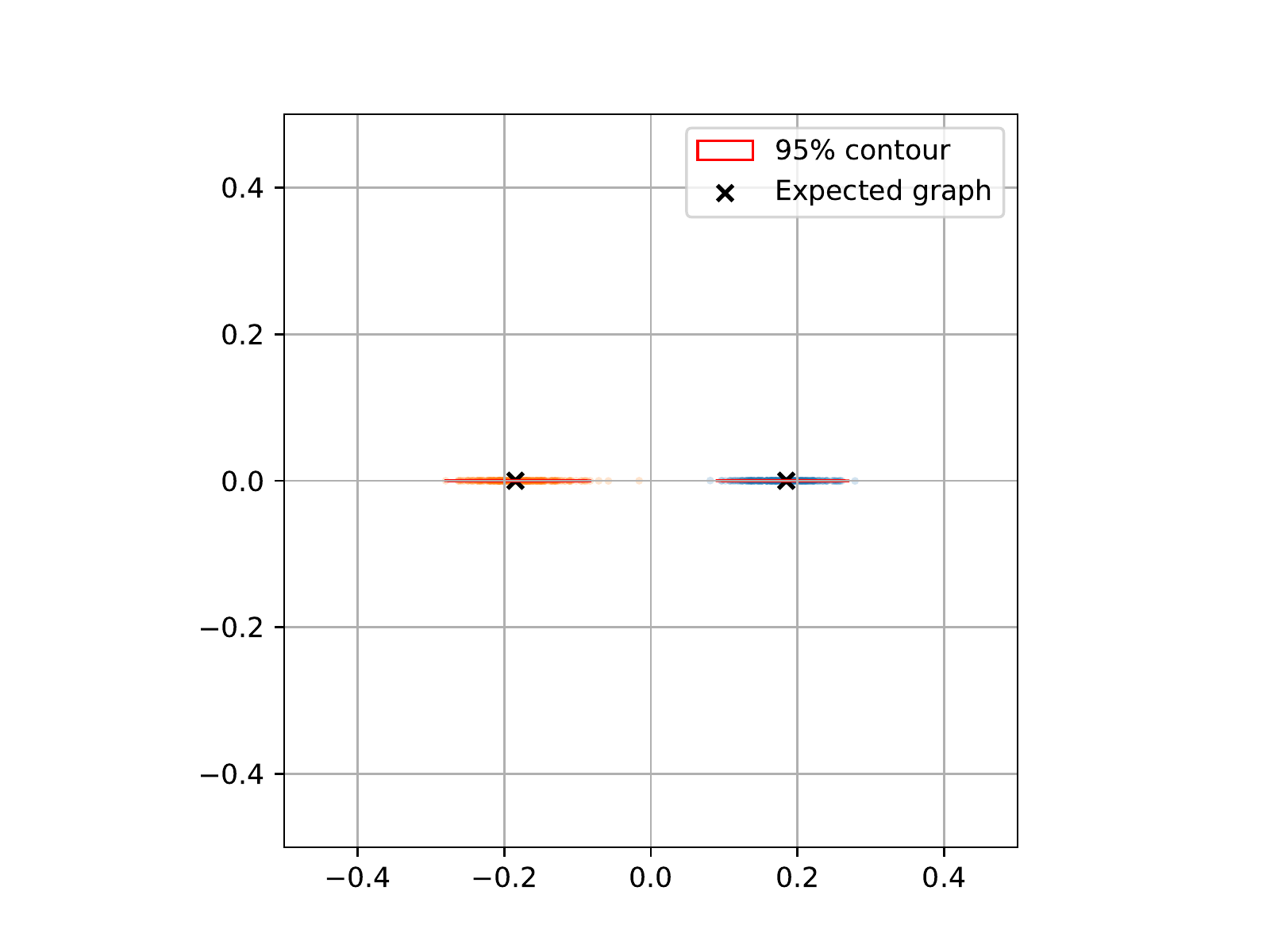}
        \caption{\small ~~~~~$\nu_0 = 0.036$}
    \end{subfigure}%
    \begin{subfigure}{0.5\linewidth}
		\includegraphics[trim = 70pt 20pt 85pt 35pt, clip, width=\linewidth]{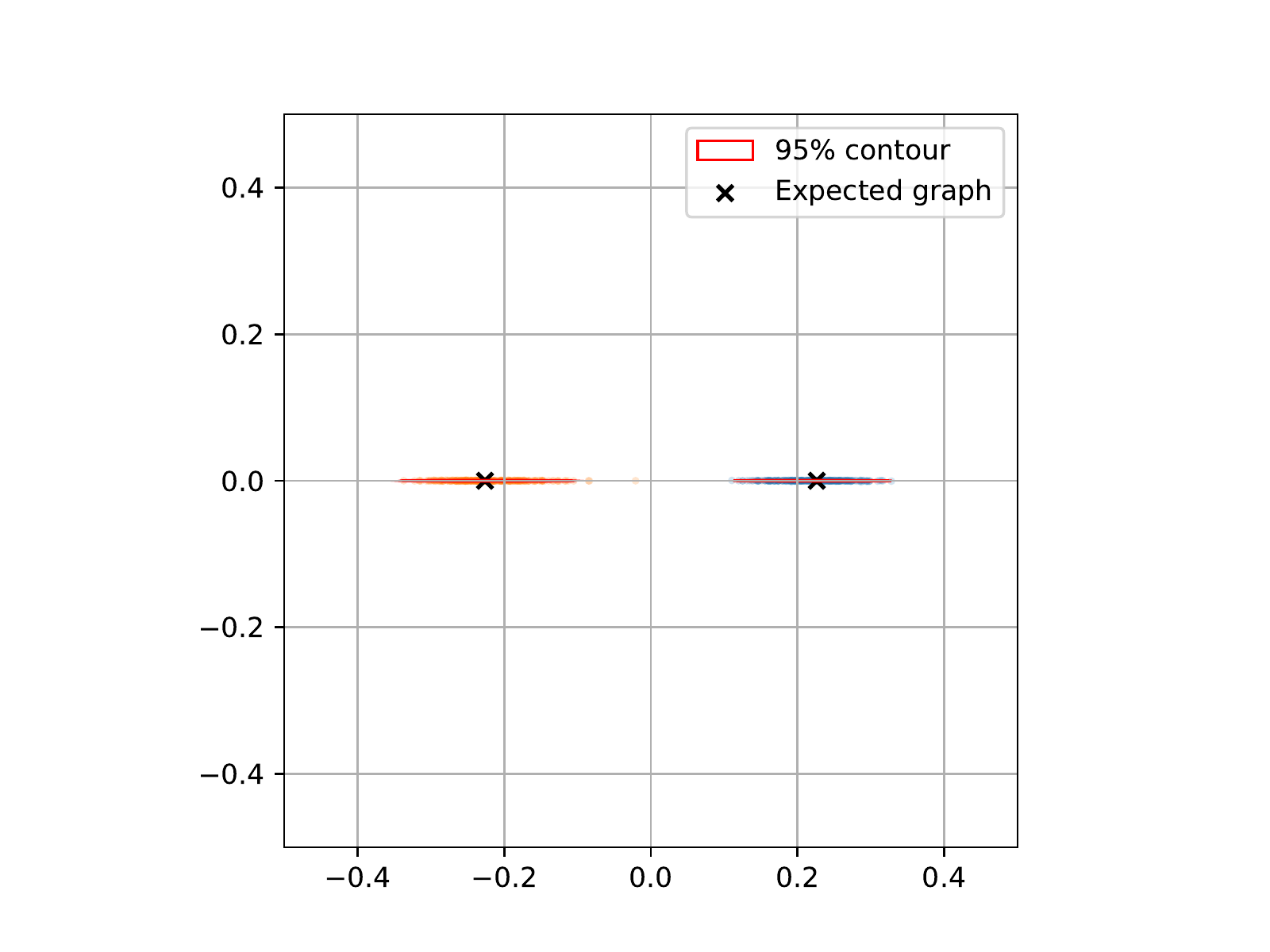}
        \caption{\small ~~~~~$\nu_0 = 0.054$}
    \end{subfigure}
	\begin{subfigure}{0.5\linewidth}
		\includegraphics[trim = 70pt 20pt 85pt 35pt, clip, width=\linewidth]{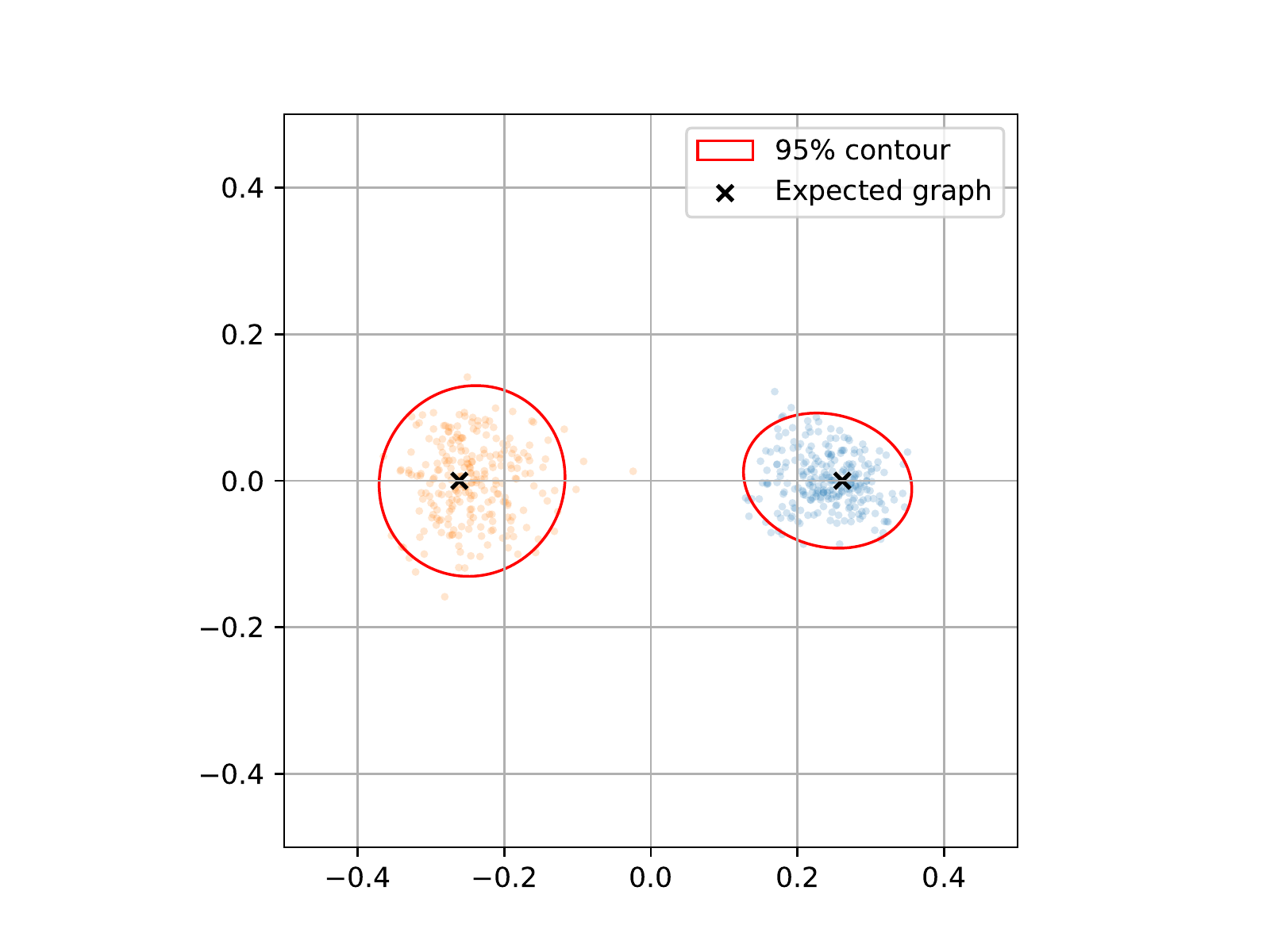}
        \caption{\small ~~~~~$\nu_0 = 0.072$}
    \end{subfigure}%
    \begin{subfigure}{0.5\linewidth}
		\includegraphics[trim = 70pt 20pt 85pt 35pt, clip, width=\linewidth]{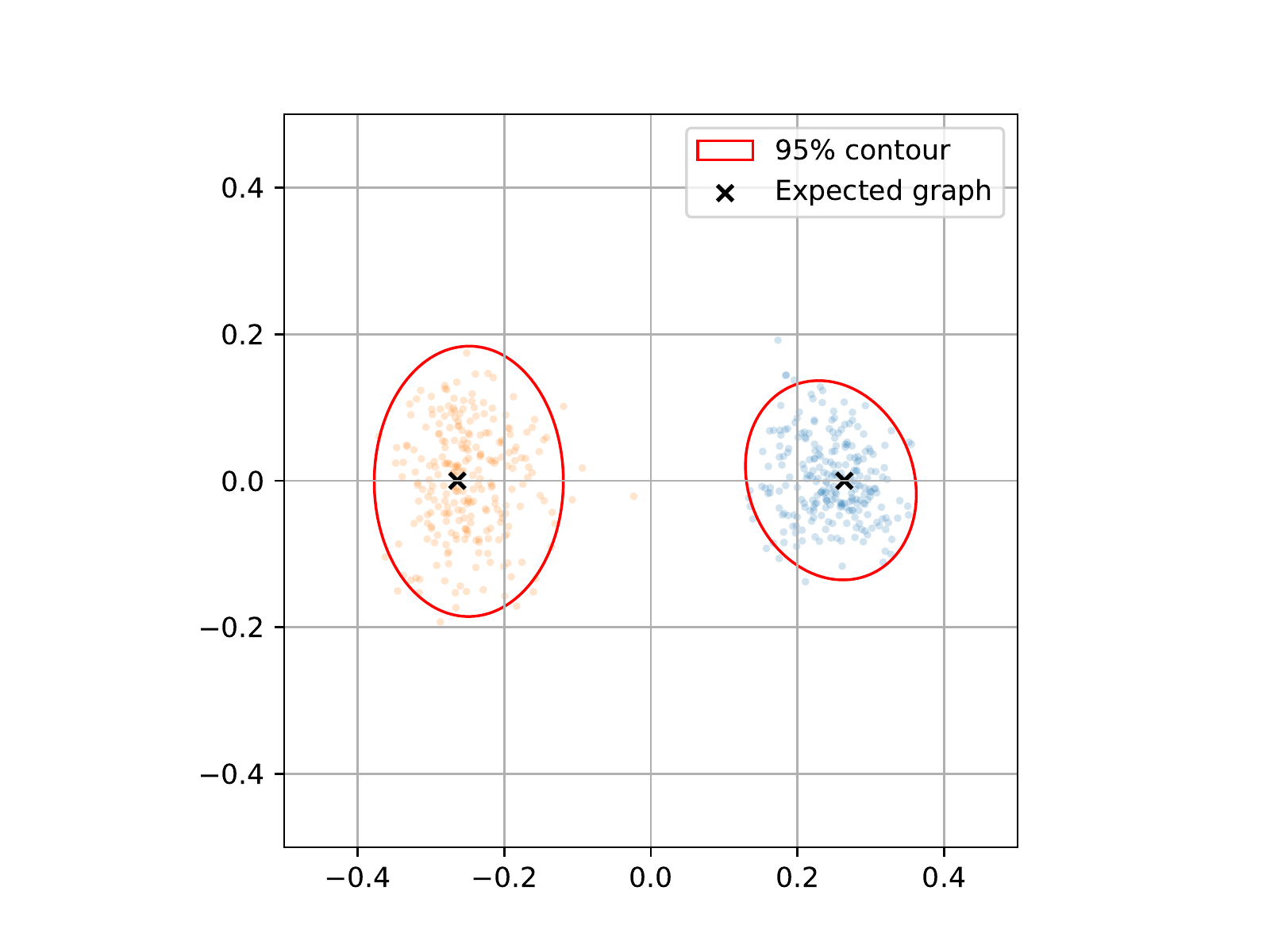}
        \caption{\small ~~~~~$\nu_0 = 0.108$}
    \end{subfigure}
\vglue -1ex    
    \caption{\small 2D-visualization of NucGramErgoVEC embeddings for different nuclear norm linear scaling factors. The input graph is the same as the one used in Fig.~\ref{fig:exp_geometry_scatter_other}.}
    \label{fig:exp_geometry_scatter_nuc}
\end{figure}

When $\nu_0$ is very small, the embeddings are one dimensional ({\it cf.} Fig.~\ref{fig:exp_geometry_scatter_nuc}(a)). As $\nu_0$ increases slightly, the embeddings remains one dimensional but spread out within each cluster and the cluster centroids move apart ({\it cf.} Fig.~\ref{fig:exp_geometry_scatter_nuc}(b)). This continues until $\nu_0$ reaches a threshold. When $\nu_0$ increases beyond the threshold, the embeddings stop extending in the first dimension and start to spread in the second dimension ({\it cf.} Fig.~\ref{fig:exp_geometry_scatter_nuc}(c)(d)).

In order to obtain a more quantitative understanding of how $\nu_0$ influences the embedding geometry, we plot the 1D-SNR of embeddings and their variance in the second dimension for a range of values of $\nu_0$ in Fig.~\ref{fig:exp_metrics_vs_nn}.
\begin{figure}
\centering
    \begin{subfigure}{0.5\linewidth}
		\includegraphics[trim = 10pt 10pt 45pt 40pt, clip, width=\linewidth]{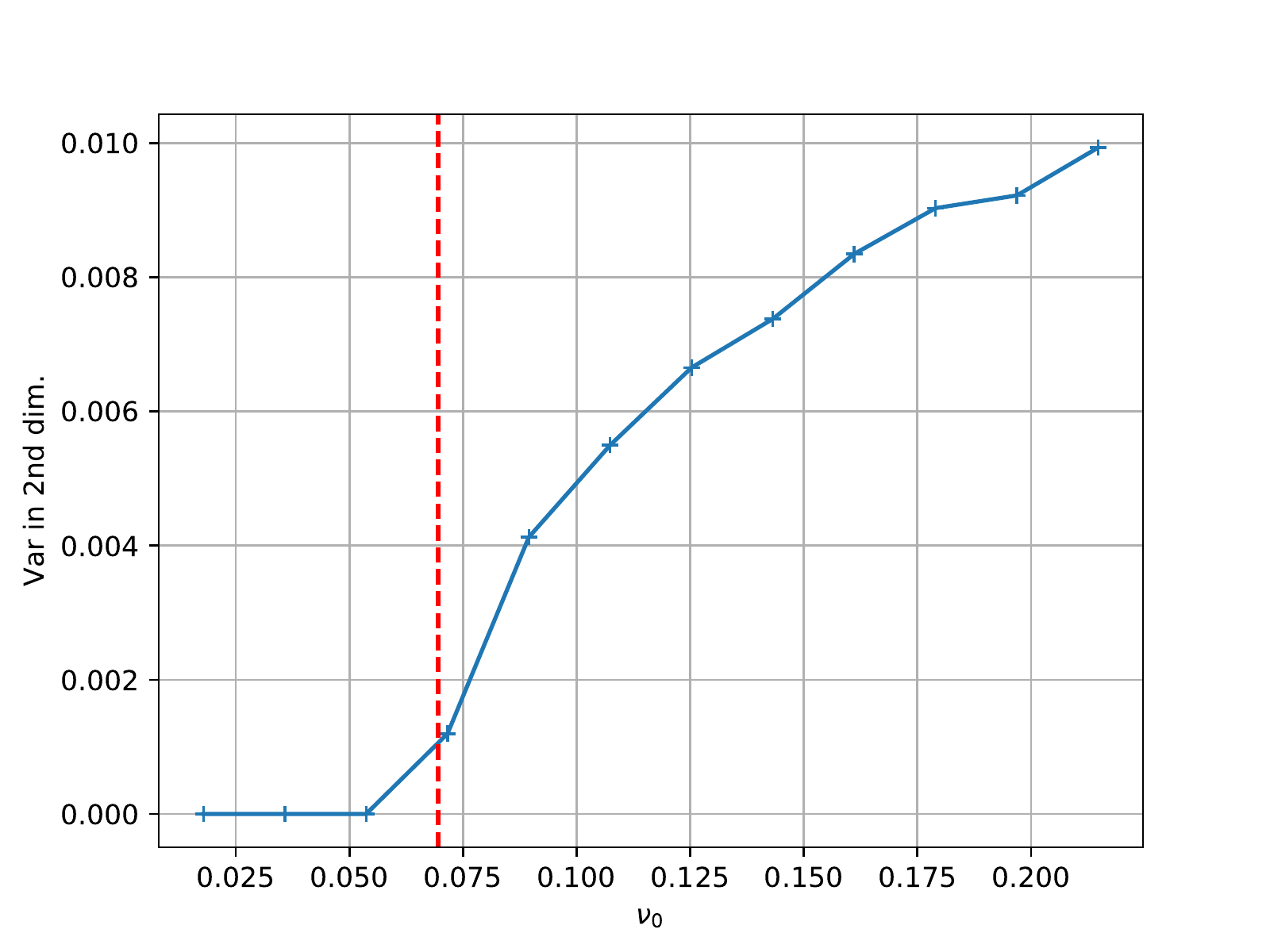}
        \caption{Variance in 2nd dim.}
        \label{fig:exp_second_dim_variance_vs_nn_log}
    \end{subfigure}%
        \begin{subfigure}{0.5\linewidth}
        \centering
		\includegraphics[trim = 10pt 10pt 45pt 40pt, clip, width=\linewidth]{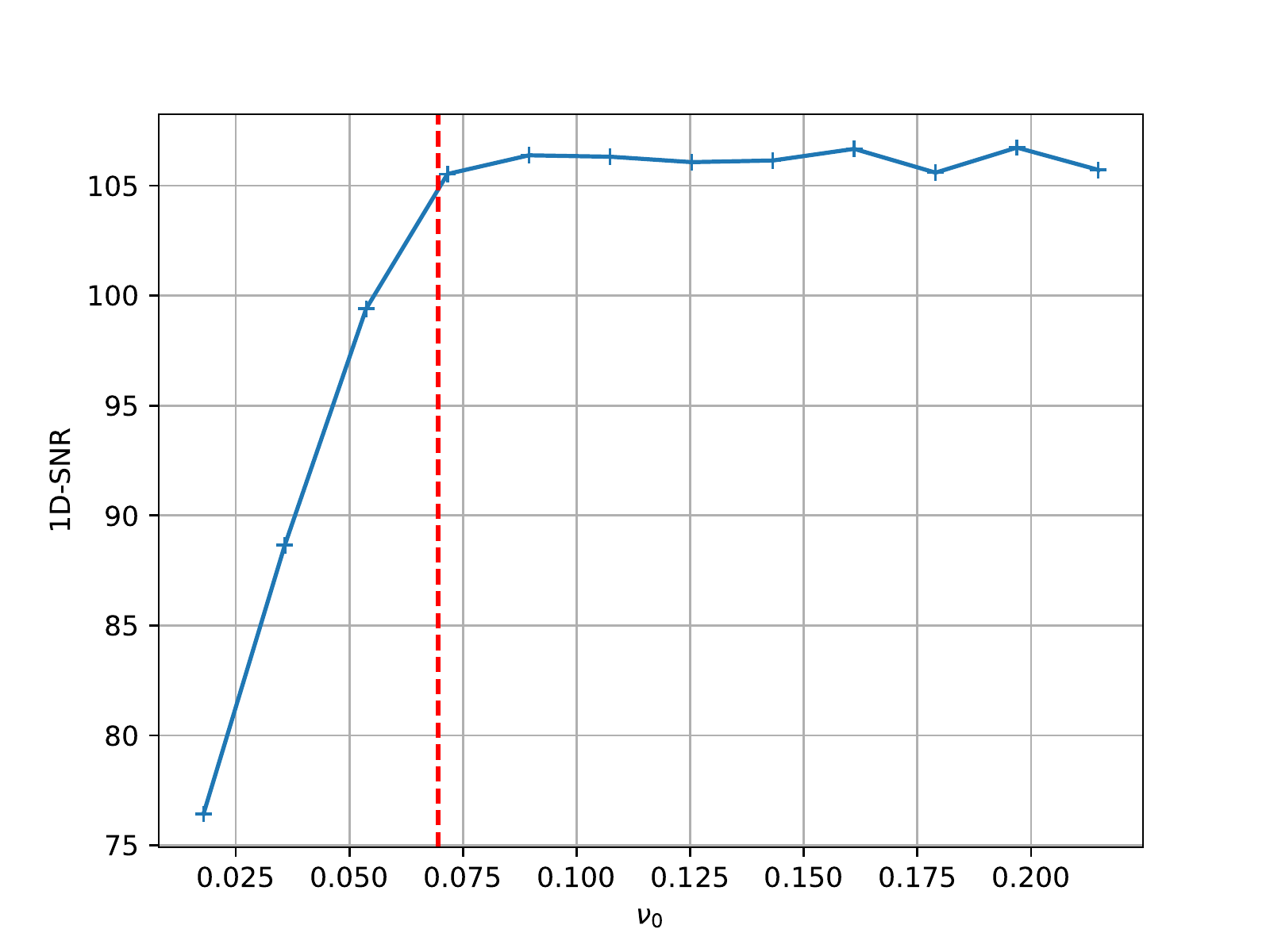}
        \caption{1D-SNR}
        \label{fig:exp_snr_vs_nn_log}
    \end{subfigure}
\vglue -1ex    
    \caption{\small The change in (a) variance in second dimension and (b) 1D-SNR of NucGramErgoVEC embeddings as the nuclear norm linear scaling factor $\nu_0$ increases. The input graph has $n=200$ nodes generated with within community edge probabilities $p = 9 \ln{n} / n$ and across community edge probabilities $q = 2 \ln{n} / n$.}
    \label{fig:exp_metrics_vs_nn}
\vglue -2ex
\end{figure}

Fig~\ref{fig:exp_metrics_vs_nn}(a) shows how the variance of embeddings in the second dimension changes as the nuclear norm linear scaling factor $\nu_0$ increases. When $\nu_0$ is very small, the variance in the second dimension is $0$, suggesting that embeddings are $1$ dimensional. As $\nu_0$ increases, the variance in the second dimension remains zero until $\nu_0$ crosses a threshold that lies somewhere between $\nu_0 =0.054$ and $\nu_0 = 0.072$ and then thereafter the variance increases monotonically. The exact value of $\nu_0$ where the second dimension variance emerges 
depends on the input graph in general and specifically on the edge forming probability. 

Fig~\ref{fig:exp_metrics_vs_nn}(b) shows how $\nu_0$ affects 1D-SNR of the embeddings. Here, we see a clear increase of 1D-SNR as $\nu_0$ increase from $0.018$ to $0.072$. A relative maximum level is reached when the $\nu_0$ is around the transition point where the second dimension variance emerges. Beyond the transition point, the 1D-SNR holds steady around the maximum level. These properties are consistent with our observations for Fig.~\ref{fig:exp_geometry_scatter_nuc}.

\subsection{Concentration of embeddings}
\label{Subsec:exp_asymptotic}

After understanding how the geometry of embeddings of a single graph differs across embedding algorithms and changes with $\nu_0$, in this section, we explore how the embeddings change as  $n$, the number of nodes, increases. To focus on the effect of increasing the number of nodes, throughout this section, we fix the scaling factors of edge forming probabilities within and across communities in each set of experiments. In addition, we omit the results of VEC because of their similarity to {\ErgoVEC} ({\it cf.} Fig.~\ref{fig:exp_walk_length_metric}). 
As we will see, the asymptotic behavior of embeddings largely depends on the edge forming probability.

To gain a qualitative perspective, we first plot the embeddings and their $95\%$ Gaussian contours for graph sizes $n=100, 500, 1000$ for each algorithm. Fig.~\ref{fig:exp_concentration_linear} shows the embedding contours in the linear degree scaling regime. We can see that all the contours shrink as $n$ increases. This suggests that empirically, the embeddings of all the four algorithms concentrate to their centroids. 
In the logarithmic degree scaling regime, as shown in Fig.~\ref{fig:exp_concentration_logarithm}, the Gaussian contours for different graph sizes mostly overlap on top of each other, suggesting a convergence in distribution as opposed to a concentration that we observed in the linear regime.
\begin{figure}[ht]
\captionsetup[subfigure]{labelformat=empty}
\centering
	\begin{subfigure}{0.5\linewidth}
        \centering
		\includegraphics[trim = 35pt 65pt 35pt 85pt, clip, width=\linewidth]{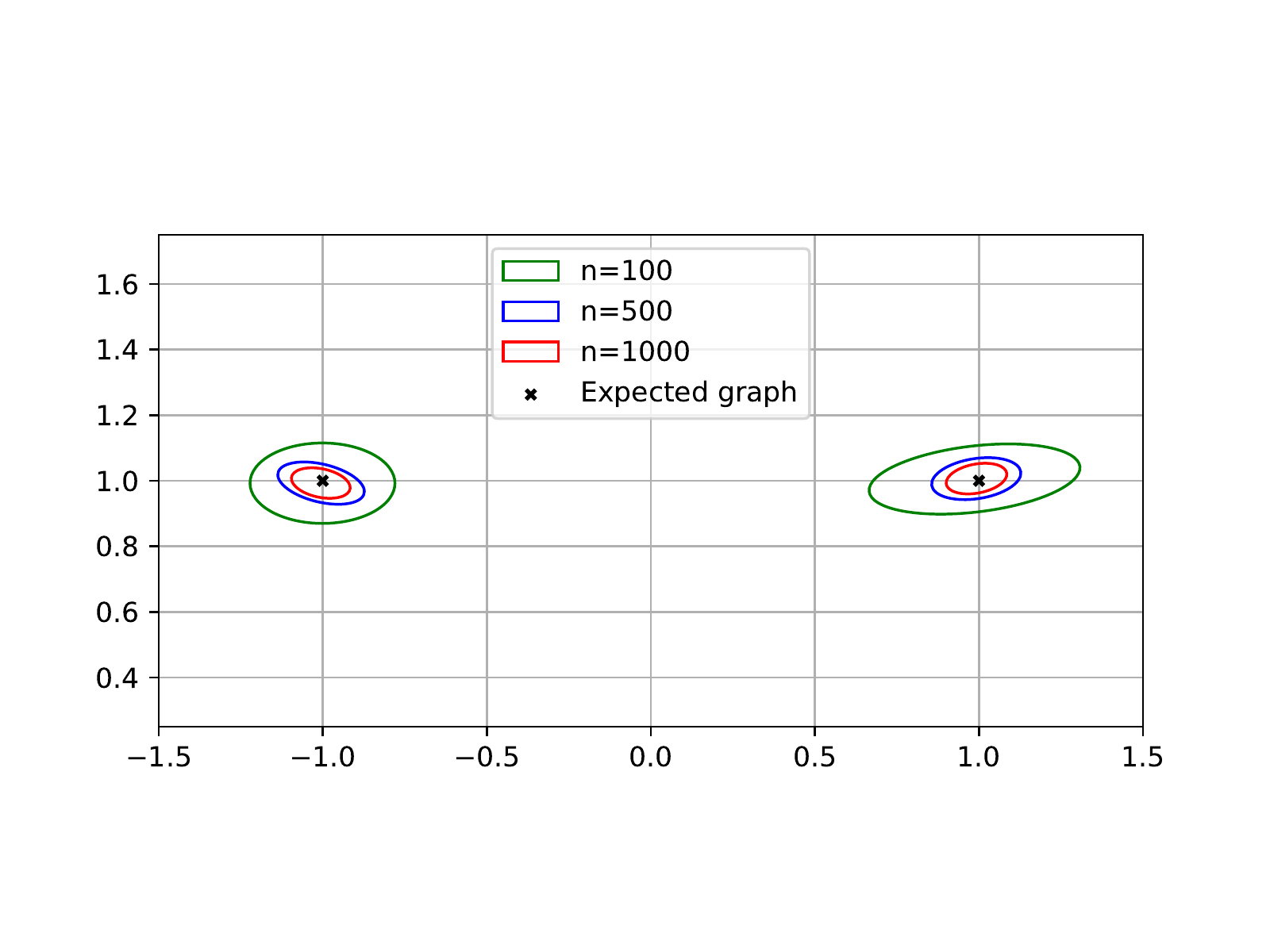}
        \caption{~~SC}
    \end{subfigure}%
    \begin{subfigure}{0.5\linewidth}
        \centering
		\includegraphics[trim = 35pt 65pt 35pt 85pt, clip, width=\linewidth]{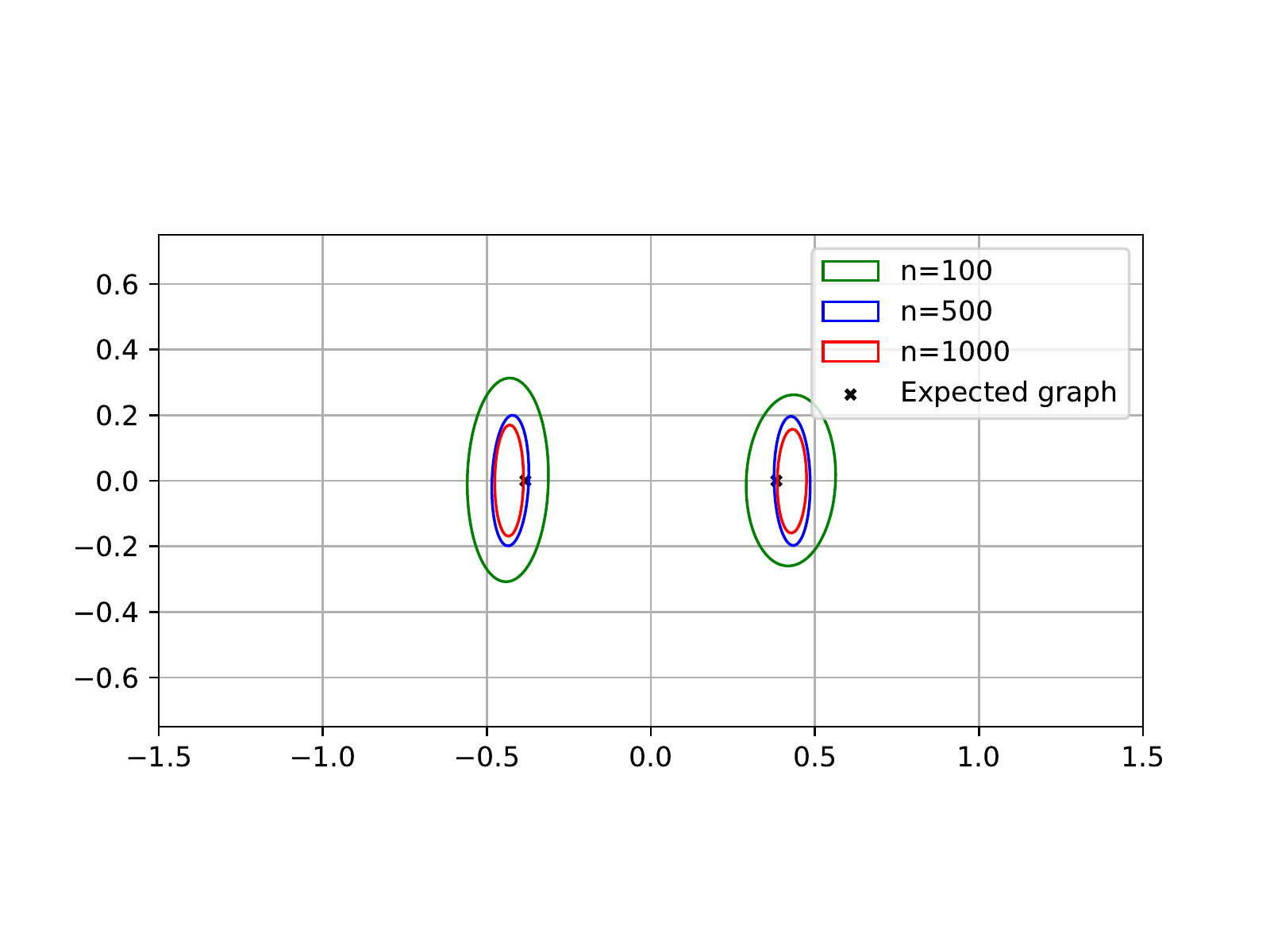}
        \caption{~~{\ErgoVEC}}
    \end{subfigure}
	\begin{subfigure}{0.5\linewidth}
        \centering
		\includegraphics[trim = 35pt 65pt 35pt 85pt, clip, width=\linewidth]{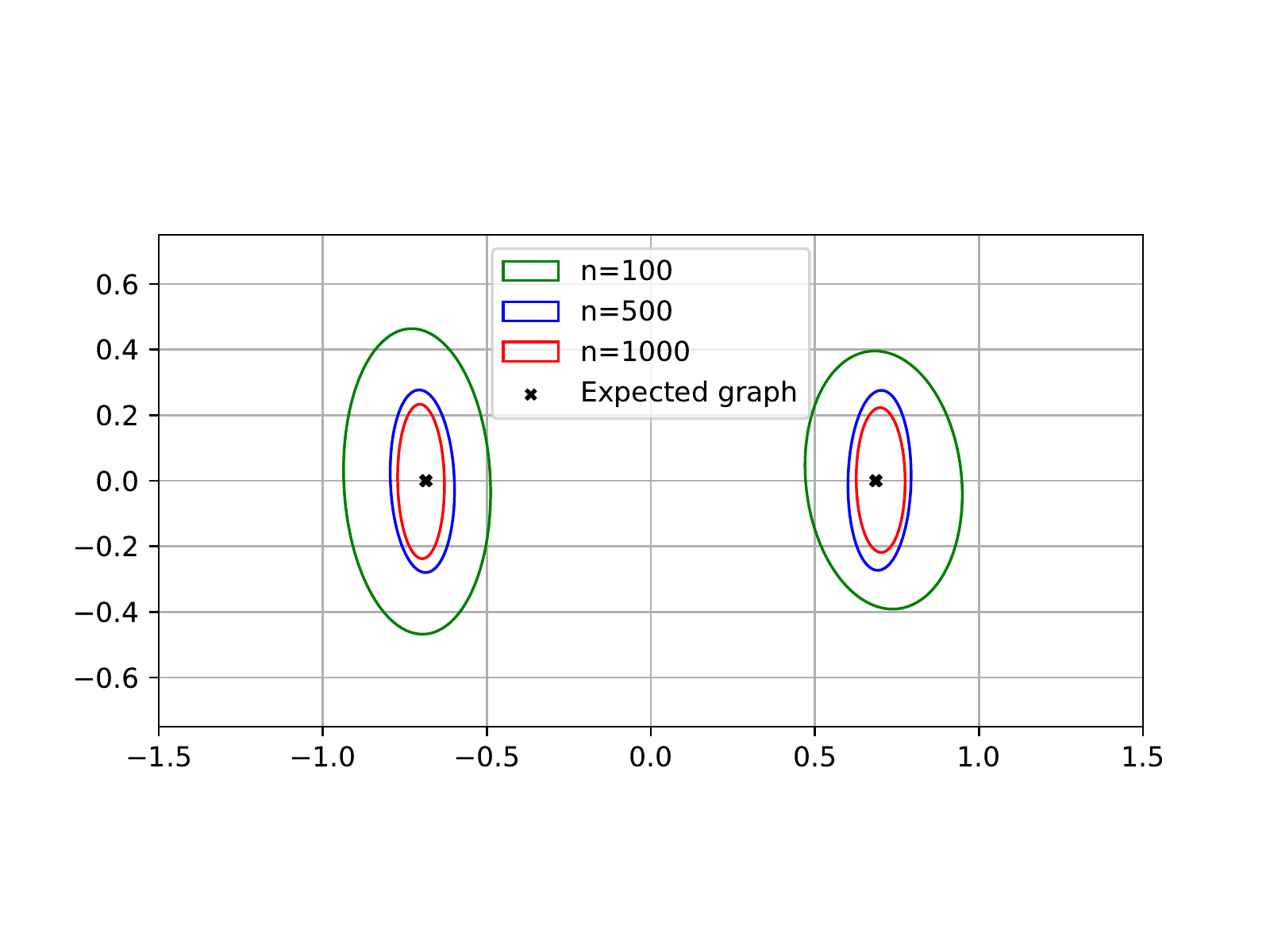}
        \caption{~~ErgoPMI \\ $\quad $}
    \end{subfigure}%
    \begin{subfigure}{0.5\linewidth}
        \centering
		\includegraphics[trim = 35pt 65pt 35pt 85pt, clip, width=\linewidth]{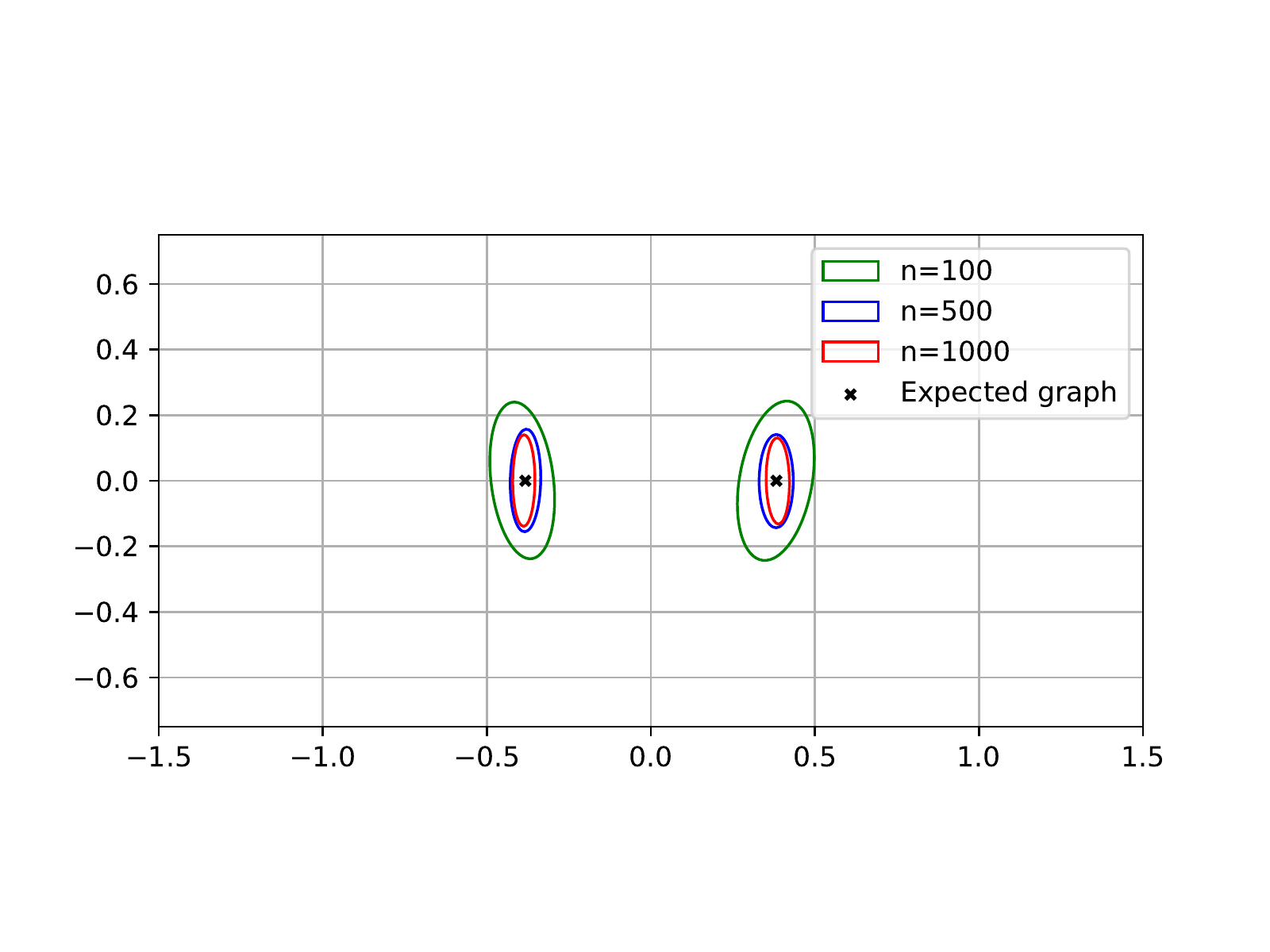}
        \caption{~~NucGramErgoVEC \\ $\nu_0 = 0.108$ }
    \end{subfigure}
\vglue -1ex
	\caption{\small 95\% Gaussian contours of 2D-embeddings from four algorithms in the linear regime. All algorithms receive the same sets of graphs with $n=100, 500$ and $1000$ nodes generated using within-community edge probabilities $p = 0.6$ and across community edge probabilities $q = 0.06$.}
	\label{fig:exp_concentration_linear}
\end{figure}
\begin{figure}[ht]
\captionsetup[subfigure]{labelformat=empty}
\centering
	\begin{subfigure}{0.495\linewidth}
        \centering
		\includegraphics[trim = 75pt 20pt 83pt 35pt, clip, width=\linewidth]{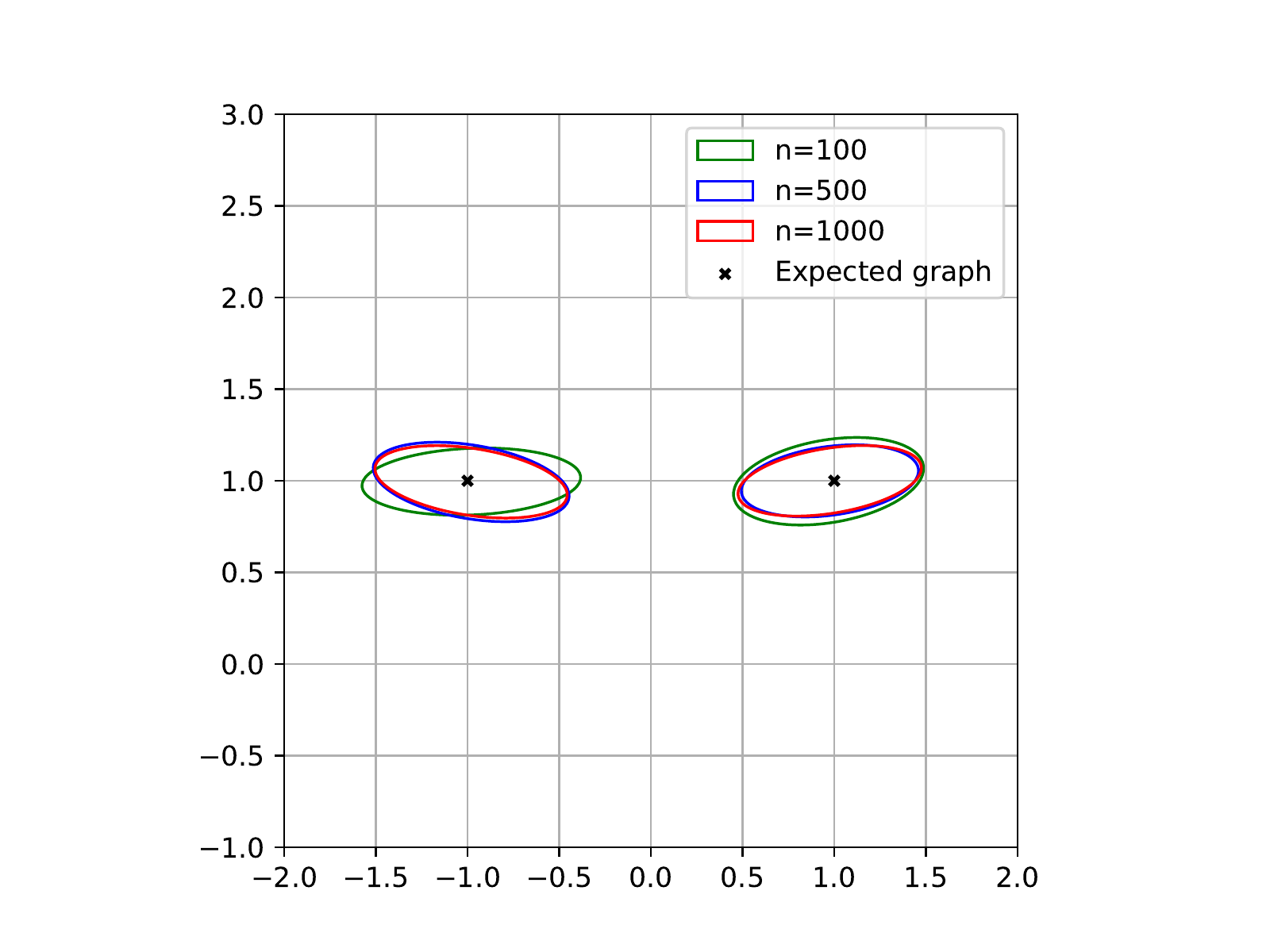}
        \caption{~~~~SC}
    \end{subfigure}
    \begin{subfigure}{0.495\linewidth}
        \centering
		\includegraphics[trim = 75pt 20pt 83pt 35pt, clip, width=\linewidth]{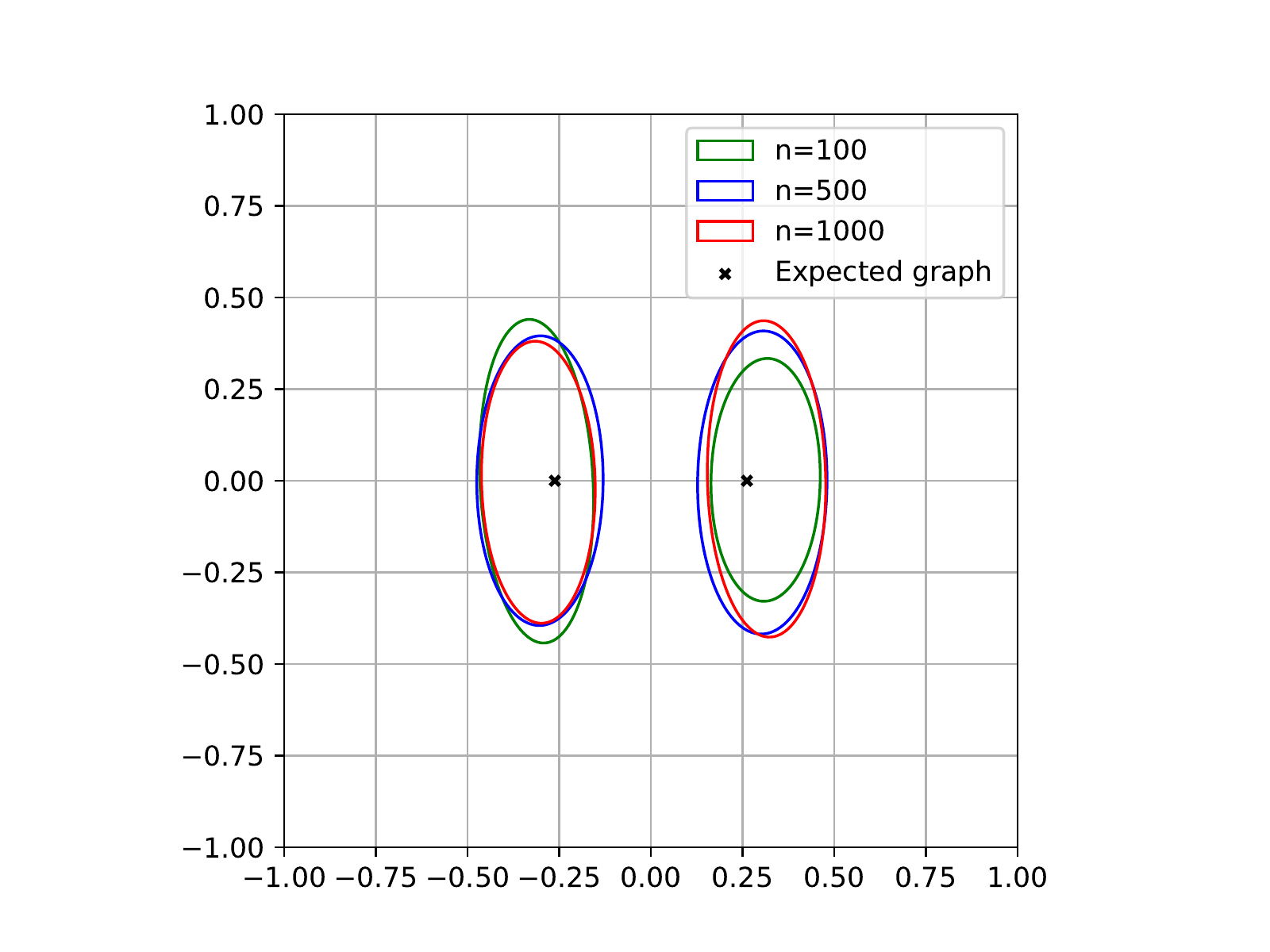}
        \caption{~~~~{\ErgoVEC}}
    \end{subfigure}
	\begin{subfigure}{0.495\linewidth}
        \centering
		\includegraphics[trim = 75pt 20pt 83pt 35pt, clip, width=\linewidth]{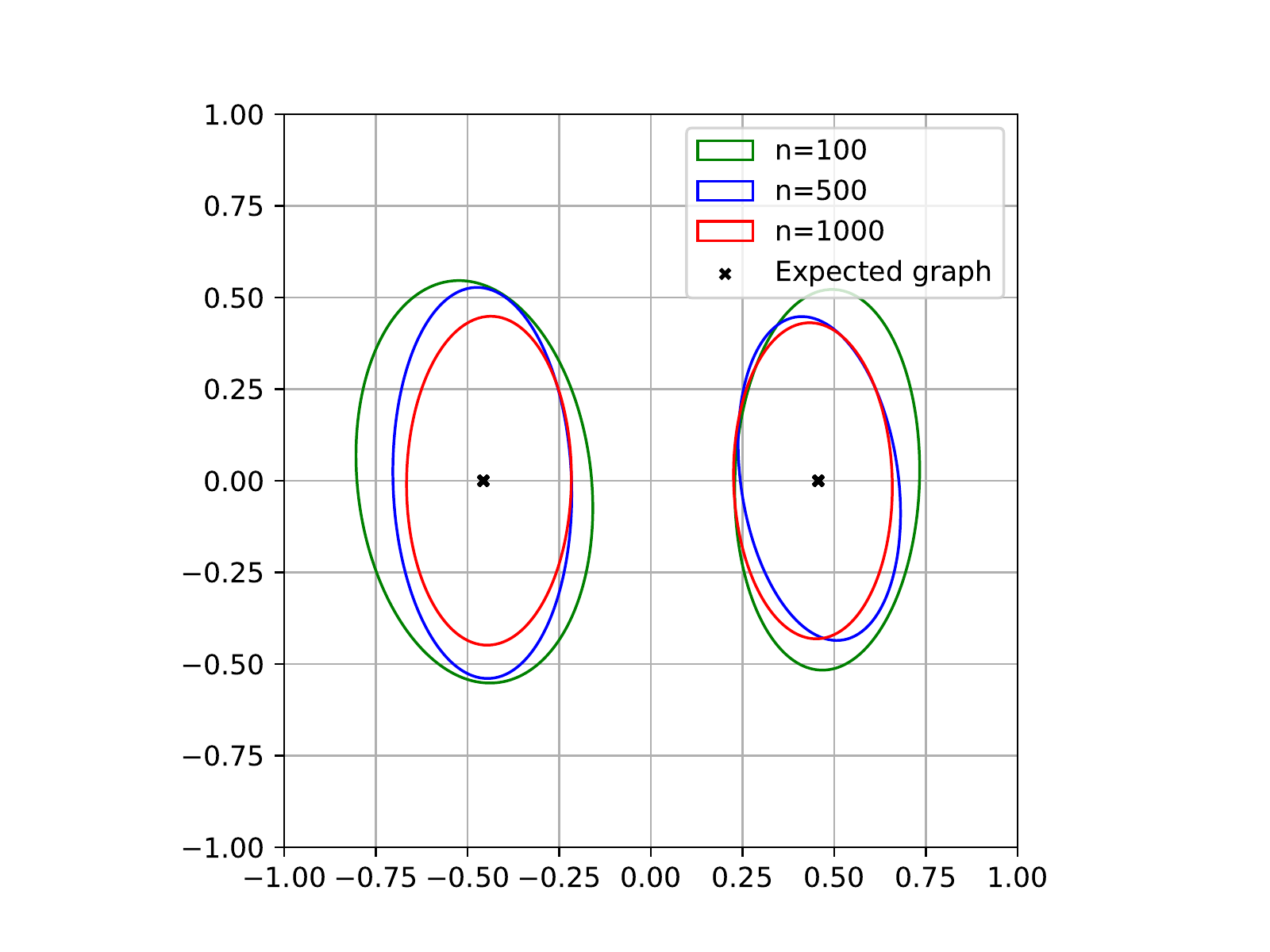}
        \caption{~~~~ErgoPMI \\ $\quad$}
    \end{subfigure}
    \begin{subfigure}{0.495\linewidth}
        \centering
		\includegraphics[trim = 75pt 20pt 83pt 35pt, clip, width=\linewidth]{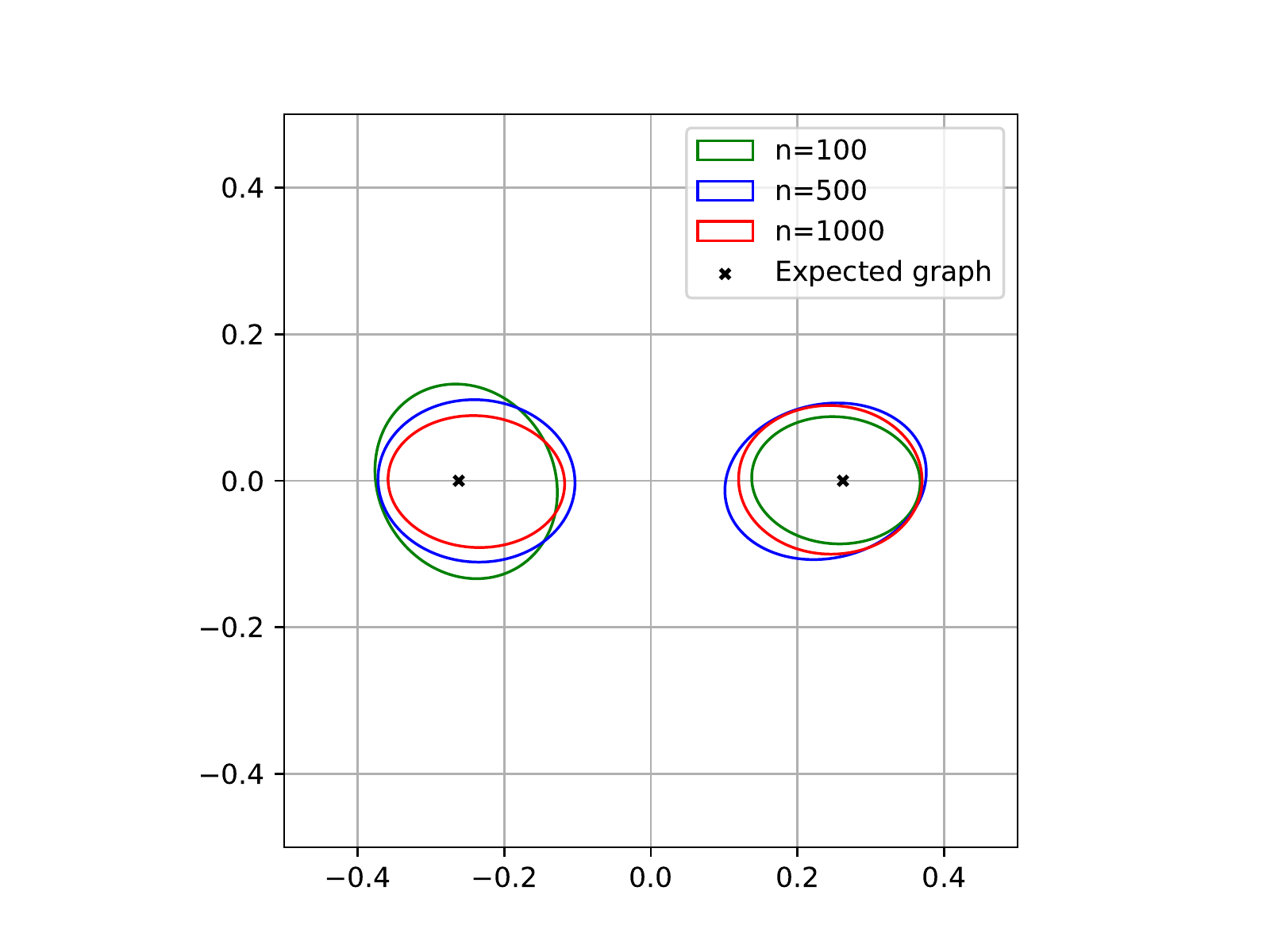}
        \caption{~~NucGramErgoVEC \\ $\nu_0 = 0.108$ }
    \end{subfigure}
\vglue -1ex
	\caption{\small 95\% Gaussian contours of 2D-embeddings from four algorithms in the logarithmic regime. All algorithms receive the same sets of graphs with $n=100, 500$ and $1000$ nodes generated using within-community edge probabilities $p = 9 \ln{n} / n$ and across community edge probabilities $q = 2 \ln{n} / n$.}
	\label{fig:exp_concentration_logarithm}
\end{figure}

We turn to quantitative metrics to gain a more nuanced understanding. 
In Fig.~\ref{fig:exp_snr_comparison}, we plot the 1D-SNR of embeddings for increasing values of $n$ for each algorithm. Note that a higher 1D-SNR indicates either a smaller within group variance along the line that joins the cluster centroids or a greater distance between cluster centroids. 
Fig.~\ref{fig:exp_snr_comparison}(a) shows results for the linear degree scaling regime, where we see that 1D-SNR increases as $n$ increases, with NucGramErgoVEC leading, followed by SC, ErgoVEC and ErgoPMI. The VEC embeddings obtained from both implementations (Keras and Gensim) reside at the bottom. 
In the logarithmic degree scaling regime, as shown in Fig.~\ref{fig:exp_snr_comparison}(b), the 1D-SNR is relatively steady across different $n$ as opposed to a clear a growth trend observed in the linear degree scaling regime. This is consistent with our observations for Fig.~\ref{fig:exp_concentration_linear} and Fig.~\ref{fig:exp_concentration_logarithm} that the embeddings concentrate in linear regime but converges to a fixed distribution in the logarithmic regime. While VEC embeddings still under perform, ErgoPMI and ErgoVEC surpasses SC and catches NucGramErgoVEC's lead.

\begin{figure}[h]
\centering
	\begin{subfigure}{\linewidth}
        \centering
		\includegraphics[trim = 25pt 20pt 45pt 40pt, clip, width=\linewidth]{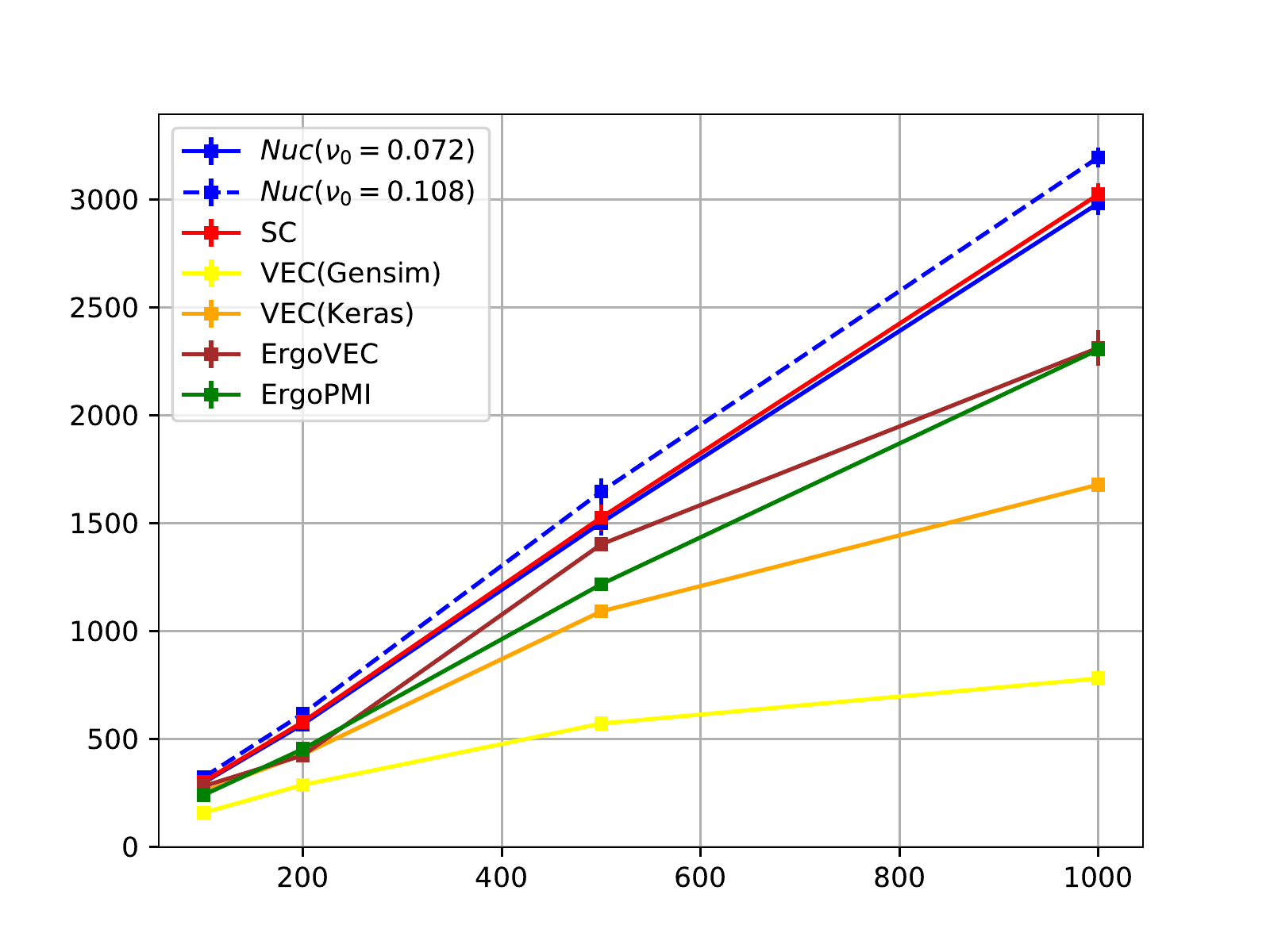}
        \caption{Linear regime}
    \end{subfigure}
\vglue 2ex
    \begin{subfigure}{\linewidth}
        \centering
		\includegraphics[trim = 25pt 20pt 45pt 40pt, clip, width=\linewidth]{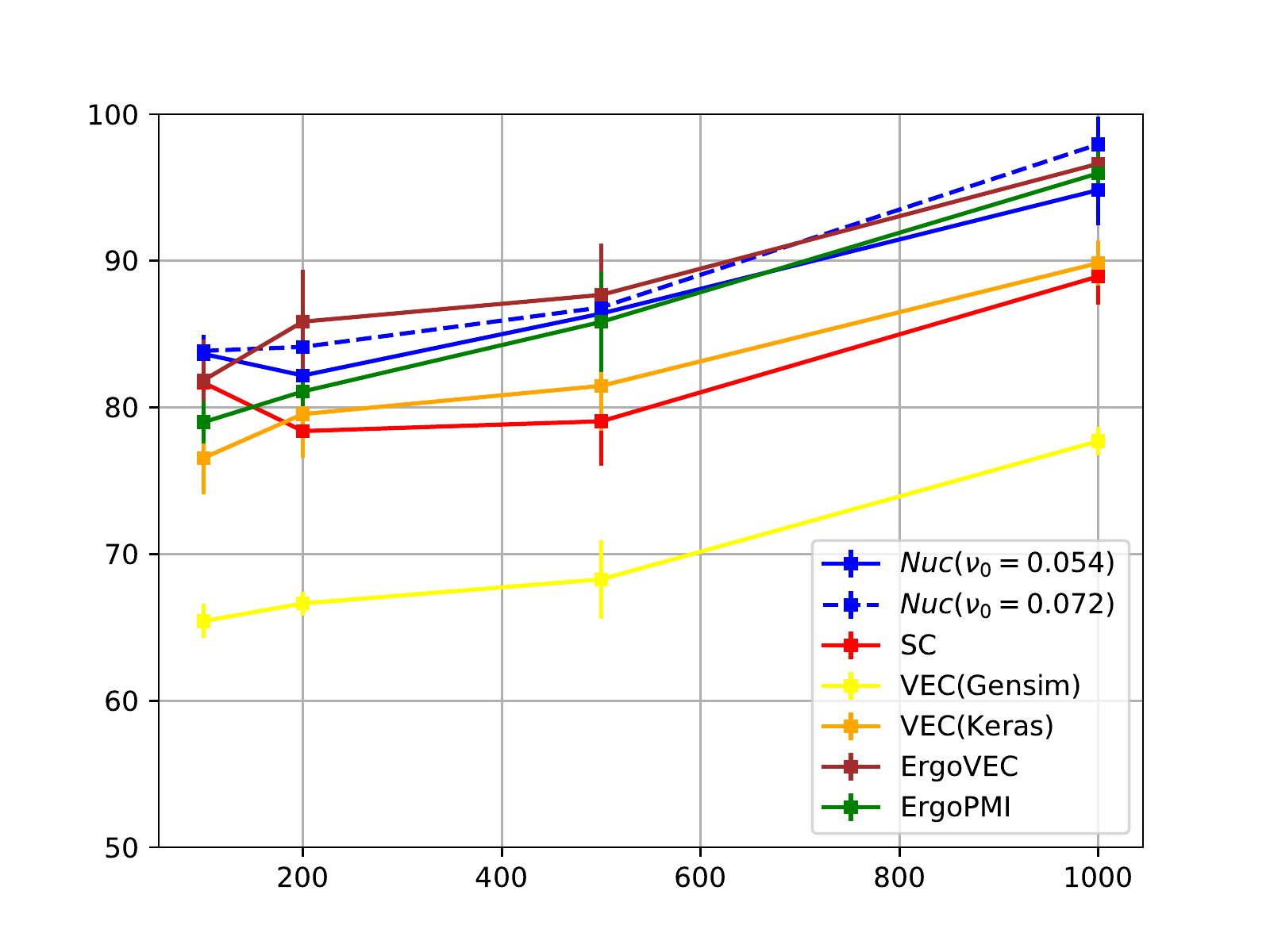}
        \caption{Logarithmic regime}
    \end{subfigure}
	\caption{\small 1D-SNR versus graph size in the linear and logarithmic scaling regimes.}
	\label{fig:exp_snr_comparison}
\end{figure}

\section{Concluding remarks}
\label{Sec:concl}

In this paper, we proposed a novel framework consisting of ergodic limits of random walks and a Grammian re-parameterization of the embedding objective to analyze a large class of random walk based node-embedding algorithms. In particular, we derived a closed-form expression for the ergodic limit of the random walk node embedding objective and proved that under the positive semi-definite constraint, the Gram matrix of optimum embedding vectors for two-community expected SBM graphs has either rank $1$ or rank $2$. 
In addition, through an empirical study we demonstrated that the embeddings based on ergodic limits, while forming better clusters, in terms of 1D-SNR, compared to the original random walk embeddings, concentrate to the embeddings of the expected graph in the linear degree scaling regime and seem to converge to a fixed distribution in the logarithmic regime.

Computational costs can vary substantially across different algorithms. For example, the Gram matrix of the optimal embedding vectors in {\ErgoPMI} has a simple closed-form solution, whereas 
the better performing {\NucGramErgoVEC} requires a computationally expensive iterative optimization procedure to compute the optimal Grammian.  This suggests a possible trade-off between computational cost and accuracy of algorithms. Although not the focus of this paper, understanding these trade-offs would benefit the end users of these methods.

The results of this paper can be further improved and extended on both theoretical and practical fronts. 
For simplicity we focused on SBM graphs with two balanced communities. Our theoretical and experimental results can be potentially extended to more complex graph models that have a community structure.
On the theoretical side, although we have shown perfect separation of the embeddings of the expected graph, there is no theoretical guarantee that the embeddings of SBM random graphs will concentrate to those of the expected graph. Further analysis of random walk embedding algorithms, especially the concentration properties of their solutions in various degree scaling regimes, would bring us more insight and understanding. 
On the practical side, the convergence of our Keras implementations for VEC and ErgoVEC depend highly on tuning parameters and may not converge very well, and the Hazan's algorithm for {\NucGramErgoVEC} suffers from slow convergence. Developing more scalable implementations of algorithms with faster and more stable convergence can bring these generalized formulations into large-scale real-world problems and also guide the theoretical analysis endeavor.

\appendices

\section{}
\label{appendix}

\subsection{Proof of \texorpdfstring{\Cref{Thm:Ergolim}}{Theorem 4.1}} 
\label{app:ErgoLim}

Natural random walks will remain within the connected components in which they start. Since only pairs of nodes within the same connected component will occur in any random walk, we can analyze each connected component separately. Within each connected component $\mathcal{G}_t$, the random walk has transition matrix $W_t = D_t^{-1} A_t$. The proof of the theorem will follow immediately from the following lemma which focuses on connected graphs.
\begin{lemma}
	\label{Lemma:Ergolim} 
	Let $W$ be the probability transition matrix of an irreducible Markov chain on the (finite) node space of $\mathcal{G}$. Let the VEC algorithm be executed on $\mathcal{G}$ with random walk transition matrix $W$ and parameters $w$ and $k$. Then for all $i,j$, the ergodic limits $\bar{n}_{ij}^+$ and $\bar{n}_{ij}^-$ in \Cref{Defi:Ergolim} exist and are given by
	\begin{align*}
		\bar{n}_{ij}^+={}& \pi_i\sum_{v=1}^{w}(W^v)_{ij}, \\
		\bar{n}_{ij}^-={}& kw\pi_i\pi_j,
	\end{align*}
	where $\bm{\pi}$ is the unique stationary distribution of the random walk. Moreover, the Ergodic limiting coefficients are symmetric, i.e.,
	\begin{align*}
		\bar{n}_{ij}^+={}& \bar{n}_{ji}^+, \\
		\bar{n}_{ij}^-={}& \bar{n}_{ji}^-.
	\end{align*}
\end{lemma}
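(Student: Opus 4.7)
The plan is to reduce both claims to the ergodic theorem for irreducible finite-state Markov chains by lifting the chain to an appropriate product state space for the positive pairs, and then bootstrapping that result to handle the two-layer randomness of the negative pairs via a conditional concentration argument.

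For the positive pairs, I would first decompose
\begin{equation*}
n_{ij}^+ \;=\; \sum_{m=1}^{n}\sum_{p=1}^{r}\sum_{v=1}^{w}\sum_{s=1}^{\ell-v}\mathbf{1}\bigl\{X_s^{(m,p)}=i,\;X_{s+v}^{(m,p)}=j\bigr\}.
\end{equation*}
For each fixed $v\in\{1,\dots,w\}$, the indicator above is a bounded functional of the $(v+1)$-tuple $(X_s,\dots,X_{s+v})$, which itself forms an irreducible Markov chain on the lifted state space of admissible $(v+1)$-tuples whose stationary distribution is $\pi_{i_0}W_{i_0 i_1}\cdots W_{i_{v-1}i_v}$. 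Applying Birkhoff's ergodic theorem on the lifted chain gives, for each random walk $(m,p)$ and each $v$, almost-sure convergence of the time average to $\pi_i(W^v)_{ij}$. Summing over $v$, $p$, $m$ and dividing by $rn\ell$ yields $\bar{n}_{ij}^+ = \pi_i\sum_{v=1}^{w}(W^v)_{ij}$.

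For the negative pairs, the strategy is to condition on the positive pairs. Given all random walks, the appended nodes $j_1,\dots,j_k$ for each positive pair $(i,j')$ are drawn IID from the empirical unigram $p_\ell^{\mathrm{emp}}$. Hence, conditionally, $n_{ij}^-$ is a binomial-type sum: $n_{ij}^- = \sum_{l=1}^{M_i}\mathbf{1}\{j_l=j\}$ where $M_i:=k\sum_{j'}n_{ij'}^+$ and each $j_l$ is Bernoulli$\bigl(p_\ell^{\mathrm{emp}}(j)\bigr)$. A conditional McDiarmid (or Hoeffding) bound gives
\begin{equation*}
\Pr\!\left[\Bigl|\tfrac{n_{ij}^-}{M_i}-p_\ell^{\mathrm{emp}}(j)\Bigr|>\epsilon\ \Big|\ \text{walks}\right]\le 2e^{-2M_i\epsilon^2}.
\end{equation*}
From the positive-pair analysis, $M_i/\ell\to k\, rn\,\pi_i\, w$ a.s.\ (using $\sum_{j'}(W^v)_{ij'}=1$), so $M_i\to\infty$ a.s.; combined with an a.s.\ Borel--Cantelli step (along a suitably chosen subsequence to secure summability and then a monotone interpolation to fill gaps), this upgrades the bound to $n_{ij}^-/M_i\to \pi_j$ a.s., where $p_\ell^{\mathrm{emp}}(j)\to\pi_j$ a.s.\ is itself a direct consequence of the ergodic theorem on the single-node chain. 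Multiplying through, $\bar{n}_{ij}^- = \tfrac{1}{rn}\lim_\ell(M_i/\ell)\cdot\pi_j = kw\pi_i\pi_j$.

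The main obstacle will be the conditional negative-pair step: one must obtain genuine almost-sure convergence (not merely in-probability) through the two nested layers of randomness, which requires carefully matching the McDiarmid exponent against the growth of $M_i$ to apply Borel--Cantelli on an $\ell$-dependent deviation scale. A secondary technical issue is that the lifted $(v+1)$-tuple chain may inherit periodicity from $W$; this is harmless because the time-average version of the ergodic theorem only requires irreducibility on a finite state space, not aperiodicity. Finally, the symmetry claims follow easily: $\bar{n}_{ij}^-=kw\pi_i\pi_j$ is symmetric by inspection, and $\bar{n}_{ij}^+=\bar{n}_{ji}^+$ follows from reversibility of the natural random walk on an undirected graph (detailed balance $\pi_i W_{ij}=\pi_j W_{ji}$ iterates to $\pi_i(W^v)_{ij}=\pi_j(W^v)_{ji}$), which is implicitly the setting here since $W=D^{-1}A$ with $A$ symmetric.
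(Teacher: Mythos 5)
Your proposal is correct and follows essentially the same route as the paper: the same lifting of the walk to the chain of $(v+1)$-tuples with a time-average ergodic theorem for the positive pairs, a conditional McDiarmid bound plus Borel--Cantelli for the negative pairs, and symmetry of $\bar{n}_{ij}^+$ from iterated detailed balance $d_i(W^v)_{ij}=d_j(W^v)_{ji}$. The only cosmetic difference is that you normalize the negative counts by the random $M_i$ while the paper normalizes by the deterministic total $nr\ell$, which makes its McDiarmid tail bound deterministic and the almost-sure (complete-convergence) step slightly cleaner, but your version goes through with the standard argument you sketch.
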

The proof of \Cref{Lemma:Ergolim} 
is based on convergence results for irreducible Markov chains.

First, we prove the result for positive pairs.
Let $\{X_s^{(m,p)}\}_{s=1}^{\infty}$ be the $p$-th random walk starting from node $m$ following the transition matrix $W$. We examine the first $\ell$ steps in each random walk. Since the $n_{ij}^+$'s consist of positive pairs extracted from all $rn$ random walks ($r$ walks from each of the $n$ nodes), we have
\begin{equation}
\label{Eq:Lemma1_proof_pos_pair_count}
    \frac{n_{ij}^+}{nr\ell} ={} \frac{1}{nrl}\sum_{m=1}^n\sum_{p=1}^r\sum_{v=1}^w\sum_{s=1}^{\ell-v} \mathbf{1}_{\{X_s^{(m,p)} = i, X_{s+v}^{(m,p)} = j\}}.
\end{equation}
Letting $\ell$ go to infinity on both sides, we have
\begin{align*}
\MoveEqLeft
   \frac{1}{rn}\lim_{\ell\rightarrow\infty}\frac{n_{ij}^+}{\ell} \\
   ={} &\frac{1}{nr}\sum_{m=1}^n\sum_{p=1}^r\sum_{v=1}^w \lim_{\ell\rightarrow\infty}\frac{1}{\ell}\sum_{s=1}^{\ell-v} \mathbf{1}_{\{X_s^{(m,p)} = i, X_{s+v}^{(m,p)} = j\}}.
\end{align*}
The key step is to compute $\lim\limits_{\ell\rightarrow\infty}\frac{1}{\ell}\sum_{s=1}^{\ell-v} \mathbf{1}_{\{X_s^{(m,p)} = i, X_{s+v}^{(m,p)} = j\}}$. To begin, we define a new Markov Chain $\{Y_s^{(m,p)}\}_{s=1}^{\infty}$, where $Y_s^{(m,p)}\! = (X_s^{(m,p)}\!, X_{s+1}^{(m,p)}\!, \ldots, X_{s+v}^{(m,p)})$. The state space of $\{Y_s^{(m,p)}\}_{s=1}^{\infty}$ is the set of all length-$(v+1)$ walks under $W$, i.e., $S_v = \{(i_1,i_2,\ldots,i_{v+1})\ \big|\ i_{k+1}\text{ is accessible from }i_{k}, k=1,\ldots,v\}\subset{[n]^{v+1}}$. 

We claim that $\{Y_s^{(m,p)}\}_{s=1}^{\infty}$ is a positive recurrent Markov Chain. To see this, we first note that the state space is finite as $\vert S_v\vert\leq\vert[n]^{v+1}\vert = n^{v+1}<\infty$. Then, we notice that $\forall \bm{a}, \bm{b}\in S_v$, since $W$ is irreducible, $b_1$ is reachable from $a_{v+1}$ in $X_s^{(m,p)}$. Therefore, $\bm{b}$ is also reachable from $\bm{a}$, which shows that $Y_s^{(m,p)}$ is irreducible. An irreducible Markov chain on a finite state space must be positive recurrent. 

Applying standard results from renewal theory, specifically~\cite[Proposition 3.3.1, p.102]{ross1996stochastic} to Markov chain $\{X_s^{(m,p)}\}_{s=1}^{\infty}$ and~\cite[Theorem 3.3.4, p.107]{ross1996stochastic} to Markov chain $\{Y_s^{(m,p)}\}_{s=1}^{\infty}$, we get
\begin{align}
	&\lim_{\ell\rightarrow\infty}\frac{\sum_{s=1}^{\ell}\mathbf{1}_{\{X_s^{(m,p)} = i\}}}{\ell}=\pi_{i}, \qquad \text{a.s.},
	\notag\\
	&\lim_{\ell\rightarrow\infty}\frac{\mathbb{E}\sum_{s=1}^{\ell}\mathbf{1}_{\{X_s^{(m,p)} = i\}}}{\ell}=\pi_{i}, \label{Eq:Lemma1_proof_+X_e} \\
	&\lim_{\ell\rightarrow\infty}\frac{\sum_{s=1}^{\ell}\mathbf{1}_{\{Y_s^{(m,p)} = \bm{a}\}}}{\ell}=\eta_{\bm{a}}, \qquad \text{a.s.},
	\label{Eq:Lemma1_proof_+Y_as} \\
	&\lim_{\ell\rightarrow\infty}\frac{\mathbb{E}\sum_{s=1}^{\ell}\mathbf{1}_{\{Y_s^{(m,p)} = \bm{a}\}}}{\ell}=\eta_{\bm{a}},  \notag
\end{align}
where $\bm{\pi}$ and $\bm{\eta}$ are the stationary distributions of $X_s^{(m,p)}$ and $Y_s^{(m,p)}$, respectively, and do not depend on $m,p$ because of the positive recurrence and irreducibility of the Markov chains. Note that the relationship between state counts of $X_s^{(m,p)}$ and $Y_s^{(m,p)}$ is given by
\begin{equation*}
	\mathbf{1}_{\{X_s^{(m,p)} = i, X_{s+v}^{(m,p)} = j\}} = \sum_{\bm{a}:\ a_1 = i, a_{v+1} = j}\mathbf{1}_{\{Y_s^{(m,p)} = \bm{a}\}}.
\end{equation*}
And thus, 
\begin{align*}
\MoveEqLeft
      \lim_{\ell\rightarrow\infty}\frac{1}{\ell}\sum_{s=1}^{\ell} \mathbf{1}_{\{X_s^{(m,p)} = i, X_{s+v}^{(m,p)} = j\}} \\
    ={}&\lim_{\ell\rightarrow\infty}\frac{1}{\ell}\sum_{\bm{a}:\ a_1 = i, a_{v+1} = j}\mathbf{1}_{\{Y_s^{(m,p)} = \bm{a}\}} \\
    \stackrel{\mathclap{\mathrm{\eqref{Eq:Lemma1_proof_+Y_as}}}}{=}{}& \sum_{\bm{a}:\ a_1 = i, a_{v+1} = j}\eta_{\bm{a}}\\
	={}&\sum_{\bm{a}:\ a_1 = i, a_{v+1} = j} \lim_{s \to \infty}\mathbb{P}[Y_s^{(m,p)} = \mathbf{a}] \\
	={}&\lim_{s \to \infty} \mathbb{P}(X_{s}^{(m,p)}=i,X_{s+v}^{(m,p)}=j) \\
	={}&\lim_{\ell\rightarrow\infty}\frac{\sum_{s=1}^{\ell}\mathbb{P}(X_{s}^{(m,p)}=i,X_{s+v}^{(m,p)}=j)}{\ell}\\
	={}&(W)^v_{ij}\lim_{\ell\rightarrow\infty}\frac{\sum_{s=1}^{\ell}\mathbb{P}(X_{s}^{(m,p)}=i)}{\ell}\\
	={}&(W)^v_{ij}\lim_{\ell\rightarrow\infty}\frac{\mathbb{E}\sum_{s=1}^{\ell}\mathbf{1}_{\{X_s^{(m,p)} = i\}}}{\ell}\\
	\stackrel{\mathclap{\mathrm{\eqref{Eq:Lemma1_proof_+X_e}}}}{=}{}& \ (W^v)_{ij}\pi_i.
\end{align*}
Therefore, we have
\begin{equation*}
\lim_{\ell\rightarrow\infty}\frac{1}{\ell}\sum_{s=1}^{\ell} \mathbf{1}_{\{X_s^{(m,p)} = i, X_{s+v}^{(m,p)} = j\}} = (W^v)_{ij}\pi_i,
\end{equation*}
and 
\begin{equation*}
   \frac{1}{rn}\lim_{\ell\rightarrow\infty}\frac{n_{ij}^+}{\ell}
   =\frac{1}{nr}\sum_{m=1}^n\sum_{p=1}^r\sum_{v=1}^w(W^v)_{ij}\pi_i
   =\pi_i\sum_{v=1}^w(W^v)_{ij}.
\end{equation*}

We now analyize the ergodic limits of negative pairs. We first count the number of $(i,j)$ pairs in the negative multi-set.

Let $Z_c^{(m,p,v,s)}$ be the second node in $c$-th negative pair generated from the positive pair $(X_s^{(m,p)}, X_{s+v}^{(m,p)})$ (for each positive pair we generate $k$ negative pairs). Since all negative pairs are generated in an i.i.d. manner, for all $m, p, v, s, c$, $Z_c^{(m,p,v,s)}, c = 1, \ldots, k$ are i.i.d. random variables with a distribution specified by the unigram node frequencies computed from the collection of random walks $X = \bigcup_{m=1}^n \bigcup_{p=1}^r \{X^{(m,p)}\}$. As in \Cref{Eq:Lemma1_proof_pos_pair_count}, the counts of negative pairs is given by
\begin{equation}
\label{Eq:Lemma1_proof_neg_pair_count}
    \frac{n_{ij}^-}{nr\ell} ={} \frac{1}{nr\ell}\sum_{m=1}^n\sum_{p=1}^r\sum_{v=1}^w\sum_{s=1}^{\ell-v} \mathbf{1}_{\{X_s^{(m,p)} = i\}}\sum_{c=1}^{k}\mathbf{1}_{\{Z_{c}^{(m,p,s,v)} = j\}}.
\end{equation}
Letting $\ell$ go to infinity on both sides, we get
\begin{align}
     \MoveEqLeft \frac{1}{nr}\lim_{\ell\rightarrow\infty}\frac{n_{ij}^-}{\ell} \notag
     \\
  ={}& \!\!\!\sum_{v=1}^w\sum_{c=1}^{k}\lim_{\ell\rightarrow\infty}\frac{1}{nr\ell}\sum_{m=1}^n\sum_{p=1}^r\sum_{s=1}^{\ell-v} \mathbf{1}_{\{X_s^{(m,p)} = i\}}\mathbf{1}_{\{Z_{c}^{(m,p,s,v)} = j\}}. 
    \label{Eq:Lemma1_proof_neg_lim}
\end{align}
The remainder of the proof will focus on calculating the right hand side. For this purpose, we introduce the following proposition:

{\noindent \bf Notation}: $[n]\defeq \{1,\ldots,n\}$ and $X_{[n]}\defeq X_1, \ldots, X_n$.
\begin{prp}\label{Prop:product_as_convergence}
Let $\{X_\ell, \ell \in \mathbb{N}\}$ be a sequence of random variables with $X_\ell \in [n]$ for every $\ell$.
Let $\varphi: [n]\rightarrow [0,1]$. For every $L \in \mathbb{N}$, let $\widehat{q}_L: [n]^L \rightarrow [0,1]$ and
$V^{(L)}_{[L]} \in\{0,1\}$ be  a sequence of random variables such that 
\[V^{(L)}_{[L]} \ \vert \ X_{[L]} \overset{\text{i.i.d}}{\sim} \text{Ber}\left(\widehat{q}_L(X_{[L]})\right),\]
If for some $p, q \in [0,1]$,
\begin{equation}
\label{Eq:prp_A1_1_1}	\frac{1}{L}\sum_{\ell = 1}^L \varphi(X_{\ell}) \xrightarrow[L \to \infty]{a.s.} p,
\end{equation}
and
\begin{equation}
\label{Eq:prp_A1_1_2}
\widehat{q}_L(X_{[L]}) \xrightarrow[L \to \infty]{a.s.} q
\end{equation}
then
\begin{equation*}
	\frac{1}{L}\sum_{\ell = 1}^L \varphi(X_{\ell})V_{\ell}^{(L)} \xrightarrow[L \to \infty]{a.s.} pq.
\end{equation*}
\end{prp}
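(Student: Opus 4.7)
The plan is to decompose the target sum into a ``centered'' part (where we subtract the conditional Bernoulli mean) and a product of two already-converging factors, then dispatch each piece separately. Concretely, for each $L$, write
\begin{align*}
\frac{1}{L}\sum_{\ell=1}^L \varphi(X_\ell)V_\ell^{(L)}
   &= \widehat{q}_L(X_{[L]}) \cdot \frac{1}{L}\sum_{\ell=1}^L \varphi(X_\ell) \\
   &\quad + \frac{1}{L}\sum_{\ell=1}^L \varphi(X_\ell)\bigl[V_\ell^{(L)} - \widehat{q}_L(X_{[L]})\bigr].
\end{align*}
By hypotheses \eqref{Eq:prp_A1_1_1} and \eqref{Eq:prp_A1_1_2}, the first summand is a product of two almost surely convergent sequences, so it converges to $pq$ almost surely. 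Hence the whole claim reduces to showing that the second summand tends to $0$ almost surely.

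The next step is to bound the second summand using a conditional concentration inequality. Condition on the entire sequence $X_{[L]}$. By hypothesis, the random variables $V_1^{(L)},\dotsc,V_L^{(L)}$ are then conditionally i.i.d.\ Bernoulli with parameter $\widehat{q}_L(X_{[L]})$, so the summands $\varphi(X_\ell)\bigl[V_\ell^{(L)}-\widehat{q}_L(X_{[L]})\bigr]$ are conditionally independent, zero-mean, and bounded in absolute value by $1$ (since $\varphi\in[0,1]$ and $V_\ell^{(L)}-\widehat{q}_L(X_{[L]})\in[-1,1]$). Applying Hoeffding's inequality conditionally yields, for any $\epsilon>0$,
\begin{equation*}
\Pr\!\left(\left|\tfrac{1}{L}\sum_{\ell=1}^L \varphi(X_\ell)\bigl[V_\ell^{(L)}-\widehat{q}_L(X_{[L]})\bigr]\right|>\epsilon \;\Big|\; X_{[L]}\right) \leq 2e^{-L\epsilon^{2}/2}.
\end{equation*}
Since the right-hand side does not depend on $X_{[L]}$, the same bound holds unconditionally after taking expectations. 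Summability of $\sum_L 2e^{-L\epsilon^2/2}$ then gives, via the Borel--Cantelli lemma, that the centered sum converges to $0$ almost surely for every fixed $\epsilon>0$, and hence almost surely (taking a countable sequence $\epsilon_k\downarrow 0$).

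Combining the two pieces, the decomposed sum converges almost surely to $pq$, completing the proof. The main conceptual obstacle is that $\{V_\ell^{(L)}\}$ is a triangular array rather than a fixed sequence, so standard strong laws of large numbers are not directly applicable; the key idea is that the exponential conditional Hoeffding bound is summable in $L$, which upgrades in-probability control to almost-sure control. A minor point to check is that the i.i.d.\ structure of $V_\ell^{(L)}$ is \emph{conditional} on the full vector $X_{[L]}$ (not just on $X_\ell$), which is exactly what the hypothesis provides and what Hoeffding's inequality requires.
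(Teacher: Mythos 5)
Your proof is correct and follows essentially the same route as the paper: the identical decomposition into $\widehat{q}_L(X_{[L]})\cdot\frac{1}{L}\sum_\ell\varphi(X_\ell)$ plus a conditionally centered remainder, a conditional exponential tail bound (the paper invokes McDiarmid's inequality, which for this linear function of bounded conditionally i.i.d.\ variables is just your conditional Hoeffding bound), and summability plus Borel--Cantelli to upgrade to almost sure convergence.
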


\begin{proof}
\begin{align*}
    \frac{1}{L}\sum_{\ell = 1}^L \varphi(X_{\ell})V_{\ell}^{(L)}
	={}
	& \left(  \frac{1}{L}\sum_{\ell = 1}^L \varphi(X_{\ell})V_{\ell}^{(L)} - 
		      \frac{1}{L}\sum_{\ell = 1}^L \varphi(X_{\ell})\widehat{q}_{L} 
      \right)
    \\
    & \quad +  \widehat{q}_{L} \frac{1}{L}\sum_{\ell = 1}^L \varphi(X_{\ell}).
\end{align*}
Due to \Cref{Eq:prp_A1_1_1,Eq:prp_A1_1_2} we immediately have 
\begin{equation*}
	\widehat{q}_{L}\left(\frac{1}{L}\sum_{\ell = 1}^L \varphi(X_{\ell})\right)\xrightarrow[L \to \infty]{a.s.} pq.
\end{equation*}
We will prove that
\begin{equation*}
	\frac{1}{L}\sum_{\ell = 1}^L \varphi(X_{\ell})V_{\ell}^{(L)} - 
			   \frac{1}{L}\sum_{\ell = 1}^L \varphi(X_{\ell})\widehat{q}_{\ell} 
	\xrightarrow[L \to \infty]{a.s.} 0.
\end{equation*}
For any fixed $x_{[L]}\in[n]$, define $g: [0,1]^L \rightarrow [0,1]$ as
\begin{equation*}
g(v_{[L]}) \defeq{} \frac{1}{L}\sum_{\ell = 1}^L \varphi(x_{\ell})v_{\ell}.
\end{equation*}
We can show that $g(\cdot)$ satisfies the so-called coordinate-wise bounded difference property. In fact, for any $i\in[L]$ and any $v_{[L]}, v_i'\in[0,1]$, since $\varphi(x)\in[0,1]$, we have
\begin{align*}
	    &\left\vert g(v_{[L]}) -  g(v_{[L]\setminus \{i\}}, v_i')\right\vert	\\
	={} &\frac{1}{L}\left\vert\varphi(x_i)\right\vert \left\vert(v_i - v_i')\right\vert \\
	\leq{} & \frac{1}{L}.
\end{align*}
Since , $V_{[L]}^{(L)}$ are i.i.d. conditioned on $X_{[L]}$ and $g(\cdot)$ is coordinate-wise bounded, we can apply McDiarmid's inequality~\cite{mcdiarmid1989method} to $V_{[L]}^{(L)}$ and $g(\cdot)$ under the conditional probability measure: $\forall \varepsilon > 0$,
\begin{equation*}
	\mathbb{P}\left[ \left\vert g(V_{[L]}^{(L)}) - \mathbb{E}\left[g(V_{[L]}^{(L)}) \Big\vert X_{[L]} \right] \right\vert \geq \varepsilon \Bigg\vert X_{[L]} \right] 
	\leq{} 2 e^{-2L\varepsilon^2}.
\end{equation*}
Since the right hand side is constant and independent of $X_{[L]}$, the above bound also holds for unconditional probability:
\begin{equation*}
	\mathbb{P}\left[ \left\vert g(V_{[L]}^{(L)}) - \mathbb{E}\left[g(V_{[L]}^{(L)}) \Big\vert X_{[L]} \right] \right\vert \geq \varepsilon \right] 
	\leq{} 2 e^{-2L\varepsilon^2}.
\end{equation*}
Since
\begin{equation*}
	g(V_{[L]}^{(L)})  ={} \frac{1}{L}\sum_{\ell = 1}^L \varphi(X_{\ell})V_{\ell}^{(L)},
\end{equation*}
we have
\begin{align*}
	\mathbb{E}\left[g(V_{[L]}^{(L)}) \Big\vert X_{[L]} \right] & ={} \mathbb{E}\left[\frac{1}{L}\sum_{\ell = 1}^L \varphi(X_{\ell})V_{\ell}^{(L)}\Big\vert X_{[L]} \right] \\
	& ={}\frac{1}{L}\sum_{\ell = 1}^L \varphi(X_{\ell})\mathbb{E}\left[V_{\ell}^{(L)}\Big\vert X_{[L]} \right] \\
	& ={}\frac{1}{L}\sum_{\ell = 1}^L \varphi(X_{\ell})\widehat{q}_{\ell}.
\end{align*}
In other words, we have shown
\begin{equation*}
	\mathbb{P}\left[ \left\vert\frac{1}{L}\sum_{\ell = 1}^L \varphi(X_{\ell})V_{\ell}^{(L)} - 
			   \frac{1}{L}\sum_{\ell = 1}^L \varphi(X_{\ell})\widehat{q}_{\ell} \right\vert \geq \varepsilon\right] 
	\leq{} 2 e^{-2L\varepsilon^2}.
\end{equation*}
Therefore,
\begin{align*}
	& \sum_{L=1}^\infty\mathbb{P}\left[ \left\vert\frac{1}{L}\sum_{\ell = 1}^L \varphi(X_{\ell})V_{\ell}^{(L)} - 
			   \frac{1}{L}\sum_{\ell = 1}^L \varphi(X_{\ell})\widehat{q}_{\ell} \right\vert \geq \varepsilon\right] \\
	\leq{} & \sum_{L=1}^\infty 2 e^{-2L\varepsilon^2} \\
	={} & \frac{2e^{-2\varepsilon^2}}{1-e^{-2\varepsilon^2}} \\
	<{} & \infty
\end{align*}
which proves that $\frac{1}{L}\sum_{\ell = 1}^L \varphi(X_{\ell})V_{\ell}^{(L)} - \frac{1}{L}\sum_{\ell = 1}^L \varphi(X_{\ell})\widehat{q}_{\ell}$ is converges to zero completely. Since complete convergence implies almost sure convergence 
\cite[Theorem~4~(c),p.310]{grimmettprobability}, it follows that
\begin{equation*}
	\frac{1}{L}\sum_{\ell = 1}^L \varphi(X_{\ell})V_{\ell}^{(L)} - 
			   \frac{1}{L}\sum_{\ell = 1}^L \varphi(X_{\ell})\widehat{q}_{\ell} 
	\xrightarrow[L \to \infty]{a.s.} 0,
\end{equation*}
which completes the proof of this proposition.
\end{proof}
\vspace{1ex}

To compute the right hand side of \Cref{Eq:Lemma1_proof_neg_lim}, for each fixed $c, v$, we apply \Cref{Prop:product_as_convergence} in the following way:

{\bf Identifying variables:}  Let $\Gamma \defeq{} \{1,\dotsc,n\} \times \{1,\dotsc, r\} \times \{1,\dotsc,\ell\}$. 
For $\gamma ={} (m,p,s)$, we define
\begin{align*}
                     L &\defeq{} \vert \Gamma \vert ={} nr\ell, \\
              X_\gamma &\defeq{} X_s^{(m,p)} \\
        V_\gamma^{(L)} &\defeq{} \mathbf{1}_{\{Z_{c}^{(m,p,s,v)} = j\}} \\
     \varphi(X_\gamma) &\defeq{} \mathbf{1}_{\{X_\gamma = i\}} \\
\widehat{q}_L(X_{[L]}) &\defeq{} \frac{1}{L}\sum_{\gamma\in\Gamma}\mathbf{1}_{\{X_\gamma = j\}}.
\end{align*}

{\bf Verification of assumptions:}
\begin{align*}
V_{[L]}^{(L)} \ \Big{\vert} \ X_{[L]} &\overset{\text{i.i.d}}{\sim} 
\text{Ber}\left(\widehat{q}_L(X_{L})\right), \\
\frac{1}{L}\sum_{\gamma\in\Gamma}\varphi(X_\gamma) &\xrightarrow[L \to \infty]{a.s.} \pi_i, \\
\widehat{q}_L(X_{[L]}) ={} &\xrightarrow[L \to \infty]{a.s.} \pi_j
\end{align*}

Therefore, by \Cref{Eq:Lemma1_proof_neg_lim}, we have
\begin{equation*}
	\frac{1}{L}\sum_{\gamma\in\Gamma} \varphi(X_\gamma) V_{[L]}^{(L)} \xrightarrow[\ell \to \infty]{a.s.} \pi_i\pi_j.
\end{equation*}
Or equivalently, for each $c, v$, 
\begin{equation*}
	\frac{1}{nr\ell}\sum_{m=1}^{n}\sum_{p=1}^{r}\sum_{s=1}^{\ell} \mathbf{1}_{\{X_s^{(m,p)} = i\}}\mathbf{1}_{\{Z_{c}^{(m,p,s,v)} = j\}} \xrightarrow[\ell \to \infty]{a.s.} \pi_i\pi_j.
\end{equation*}
Dropping a finite number of terms in the summation will not affect the limit as $\ell \rightarrow \infty$. Thus,
\begin{equation*}
	\frac{1}{nr\ell}\sum_{m=1}^{n}\sum_{p=1}^{r}\sum_{s=1}^{\ell - v} \mathbf{1}_{\{X_s^{(m,p)} = i\}}\mathbf{1}_{\{Z_{c}^{(m,p,s,v)} = j\}} \xrightarrow[\ell \to \infty]{a.s.} \pi_i\pi_j.
\end{equation*}
Together with \Cref{Eq:Lemma1_proof_neg_lim}, we have
\begin{align*}
\MoveEqLeft \frac{1}{nr}\lim_{\ell\rightarrow\infty}\frac{n_{ij}^-}{\ell} \notag
     \\
  ={}& \sum_{v=1}^w\sum_{c=1}^{k}\lim_{\ell\rightarrow\infty}\frac{1}{nr\ell}\sum_{m=1}^n\sum_{p=1}^r\sum_{s=1}^{\ell-v} \mathbf{1}_{\{X_s^{(m,p)} = i\}}\mathbf{1}_{\{Z_{c}^{(m,p,s,v)} = j\}} \\
  ={}& \sum_{v=1}^w\sum_{c=1}^{k} \pi_i\pi_j \qquad \text{a.s.} \\
  ={}& kw\pi_i\pi_j. \quad \ \ \ \qquad \text{a.s.}
\end{align*}
This concludes the proof of expressions for $\bar{n}_{ij}^+$ and $\bar{n}_{ij}^-$ in \Cref{Lemma:Ergolim} and also shows that $\bar{n}_{ij}^- = \bar{n}_{ji}^-$ .
In order to show that $\bar{n}_{ij}^+$ is symmetric, it suffices to show that for any $v$, 
\[
\pi_i(W^v)_{ij}=\pi_j(W^v)_{ji}.
\]
Since $\pi_i$ is proportional to the node degree, this is equivalent to showing that 
\[
d_i(W^v)_{ij} = d_j(W^v)_{ji}
\]
We will prove this via induction. For $v=1$, by definition, $d_iW_{ij}={}A_{ij}={}d_jW_{ji}$ (initial case). If  $d_i(W^s)_{ij} = {}d_j(W^s)_{ji}$, for $v=s+1$ (induction hypothesis), then 
\begin{align*}
d_i(W^{s+1})_{ij}=&\sum_{k=1}^nd_i(W^s)_{ik}W_{kj}\\
               =&\sum_{k=1}^nd_k(W^s)_{ki}W_{kj}\\
               =&\sum_{k=1}^nd_kW_{kj}(W^s)_{ki}\\
               =&\sum_{k=1}^nd_jW_{jk}(W^s)_{ki}\\
               =&d_j(W^{s+1})_{ji}.
\end{align*}
which proves the inductive step and concludes the proof of symmetry of $\bar{n}_{ij}^+$.
$\hfill \square$
\vspace{1ex}
 
\subsection{Proof of Theorem~\ref{Thm:Ergolim_full}} 
\label{app:ErgoLim_alpha}

We will follow the same ideas as in the proof of \Cref{Lemma:Ergolim}. With walk-distance weights $\{\alpha_v\}_{v=1}^{\infty}$, the positive pair count \Cref{Eq:Lemma1_proof_pos_pair_count} becomes:
\begin{equation}
\label{Eq:Thm3_proof_pos_pair_count}
    \frac{n_{ij}^+}{nr\ell} ={} \frac{1}{nrl}\sum_{m=1}^n\sum_{p=1}^r\sum_{v=1}^\infty\alpha_v\sum_{s=1}^{\ell-v} \mathbf{1}_{\{X_s^{(m,p)} = i, X_{s+v}^{(m,p)} = j\}}.
\end{equation}
And the negative pair count \Cref{Eq:Lemma1_proof_neg_pair_count} becomes:
\begin{equation}
\label{Eq:Thm3_proof_neg_pair_count}
    \frac{n_{ij}^-}{nr\ell} ={} \! \frac{1}{nr\ell}\!\sum_{m=1}^n\sum_{p=1}^r\sum_{v=1}^\infty\alpha_v\sum_{s=1}^{\ell-v} \mathbf{1}_{\{X_s^{(m,p)} = i\}}\sum_{c=1}^{k}\mathbf{1}_{\{Z_{c}^{(m,p,s,v)} \! = j\}}.
\end{equation}
This provides the starting point for our proof.

{\noindent \bf 1)}
The proof closely parallels the proof of \Cref{Lemma:Ergolim} with minor modifications to account for the walk-distance weighting. We note that the exchange of the limit and the infinite sum is ensured by the dominated convergence theorem.

{\noindent \bf 2)} From \eqref{Eq:Thm3_proof_pos_pair_count}, we have
\begin{align*}
\MoveEqLeft \frac{1}{\ell n}\lim_{r\rightarrow\infty}\frac{n_{ij}^+}{\ell}
        \\
     ={}& \frac{1}{rn}\sum_{m=1}^n
	    \sum_{v=1}^{\infty}\alpha_v
	    \sum_{s=1}^{\ell-v} 
	    \lim_{r\rightarrow\infty}\frac{1}{r}\sum_{p=1}^r
	    \mathbf{1}_{\{X_s^{(m,p)} = i, X_{s+v}^{(m,p)} = j\}} 
	    \\
    ={}&\frac{1}{rn}\sum_{m=1}^n
	    \sum_{v=1}^{\infty}\alpha_v \\
	   & \quad \left(\sum_{s=1}^{\ell-v} 
	    \lim_{r\rightarrow\infty}\frac{1}{r}\sum_{p=1}^r
	    \mathbf{1}_{\{X_s^{(m,p)} = i\}}\mathbf{1}_{\{X_{s+v}^{(m,p)} = j\ |\ X_s^{(m,p)} = i\}}\right).
\end{align*}
Note that
\begin{align*}
%
 & \!\!\! \lim_{r\rightarrow\infty}\frac{1}{r}\sum_{p=1}^r	\mathbf{1}_{\{X_s^{(m,p)} = i\}}\mathbf{1}_{\{X_{s+v}^{(m,p)} = j\ |\ X_s^{(m,p)} = i\}}	    \\
	  &={}\lim_{r\rightarrow\infty} \frac{1}{r}\sum_{p=1}^r
	    \mathbf{1}_{\{X_s^{(m,p)} = i\ |\ X_{1}^{(m,p)} = m\}}\mathbf{1}_{\{X_{s+v}^{(m,p)} = j\ |\ X_s^{(m,p)} = i\}} 
	    \\
	  &={} \mathbb{E}\mathbf{1}_{\{X_s^{(m,p)} = i\ |\ X_{1}^{(m,p)} = m\}}\mathbf{1}_{\{X_{s+v}^{(m,p)} = j\ |\ X_s^{(m,p)} = i\}}
	     \\
	  &={}\mathbb{E}\mathbf{1}_{\{X_s^{(m,p)} = i\ |\ X_{1}^{(m,p)} = m\}}\mathbb{E}\mathbf{1}_{\{X_{s+v}^{(m,p)} = j\ |\ X_s^{(m,p)} = i\}}
	     \\
	  &={} (W^{s-1})_{mi}(W^v)_{ij}
\end{align*}

Therefore, we have
\begin{align*} 
	\frac{1}{\ell n}\lim\limits_{r\rightarrow\infty}\frac{n_{ij}^+}{r}={}& \frac{1}{\ell n}\sum_{k=1}^n \sum_{v=1}^w \alpha_v(W^v)_{ij}\sum_{s=1}^{l-v}(W)^{s-1}_{mi},
\end{align*}
where the exchange of the limit and infinite sum is allowed by the dominated convergence theorem.

For the negative terms, from \eqref{Eq:Thm3_proof_neg_pair_count}, we have
\begin{align*}
\MoveEqLeft \frac{1}{\ell n}\lim_{r\rightarrow\infty}\frac{1}{r}n_{ij}^-
	={} \frac{1}{\ell n}\sum_{m=1}^n\sum_{v=1}^\infty\alpha_v\\
	& \left(\sum_{s=1}^{\ell-v}\sum_{c=1}^{k}
		\lim_{r\rightarrow\infty}\frac{1}{r}\sum_{p=1}^r \mathbf{1}_{\{X_s^{(m,p)} = i\}}\mathbf{1}_{\{Z_{c}^{(m,p,s,v)} = j\}}\right)
\end{align*}
Proceeding as we did in the proof of Lemma~\ref{Lemma:Ergolim},, we apply \Cref{Prop:product_as_convergence} to obtain 
\begin{align*}
	   \MoveEqLeft
	   \lim_{r\rightarrow\infty}\frac{1}{r}\sum_{p=1}^r \mathbf{1}_{\{X_s^{(m,p)} = i\}}\mathbf{1}_{\{Z_{c}^{(m,p,s,v)} = j\}} \\
	={}& (W)^{s-1}_{mi}\left(\frac{1}{\ell}\sum_{u=1}^{\ell} \frac{1}{n}\bm{1}_n^\top W^{u-1}\bm{e}_j\right) \\
	\defeq{}& (W)^{s-1}_{mi}\pi^{(\ell)}_j
\end{align*}
Therefore, 
\begin{equation*}
		\frac{1}{\ell n}\lim\limits_{r\rightarrow\infty}\frac{n_{ij}^-}{r}={}         		 \frac{k\pi^{(\ell)}_j}{\ell n}\sum_{m=1}^n \sum_{v=1}^{\infty} \alpha_v\sum_{s=1}^{\ell-v}(W)^{s-1}_{mi}
\end{equation*}

{\noindent \bf 3)} From 1), we know that $\frac{1}{rn}\lim_{\ell\rightarrow\infty}\frac{n_{ij}^+}{\ell}$ and $\frac{1}{rn}\lim_{\ell\rightarrow\infty}\frac{n_{ij}^-}{\ell}$does not depand on $r$, and therefore the first part of the equality holds.

An irreducible Markov chain on a finite state space with a time-homogeneous transition matrix $W$ has a unique stationary distribution $\boldsymbol{\pi}$. Moreover, for any initial distribution on states $\boldsymbol{\pi}_0$, the Cesaro-average: 
$\bar{\boldsymbol{\pi}}_\ell := \boldsymbol{\pi}_0^\top\,  \frac{1}{\ell} \sum_{s = 1}^{\ell}  W^s $, $\ell = 1, 2, \ldots$,
converges to the unique stationary distribution $\boldsymbol{\pi}$ (even if the Markov chain is not aperiodic). While this is a somewhat well-known result, we were unable to find a reliable reference that explicitly states or proves it. So for completeness we briefly sketch its proof. 
We argue that $\bar{\boldsymbol{\pi}}_\ell$ must converge to $\boldsymbol{\pi}$. If not, there is an $\epsilon > 0$ and a subsequence that lies strictly outside an $\epsilon$-ball around $\boldsymbol{\pi}$. But, the probability simplex in finite-dimensional Euclidean space is compact and has the Bolzano-Weierstrass property: there is a subsequence of the subsequence (a sub-subsequence) which converges. Below we will show that the limit of this sub-subsequence must be $\boldsymbol{\pi}$ which would result in a contradiction (since the subsequence is outside an $\epsilon$ ball around $\boldsymbol{\pi}$). Therefore, $\bar{\boldsymbol{\pi}}_\ell$ must converge to the unique stationary distribution $\boldsymbol{\pi}$. We will now show that any convergent subsequence of $\bar{\boldsymbol{\pi}}_\ell$ (a convergent sub-subsequence is also convergent subsequence) must converge to $\boldsymbol{\pi}$. Let $\bar{\boldsymbol{\pi}}_{\ell_t}$ denote a convergent subsequence and $\bar{\boldsymbol{\pi}}_\infty$ its limit. Then,
\begin{flalign*}
\bar{\boldsymbol{\pi}}_\infty\,W  
&= \lim_{t \rightarrow \infty} \big( \bar{\boldsymbol{\pi}}_{\ell_t}\, W \big) 
= \lim_{t \rightarrow \infty}  \Big( \boldsymbol{\pi}_0^\top\,  \frac{1}{\ell_t} \sum_{s = 1}^{\ell_t}  W^s \cdot W \Big) \\
&= \lim_{t \rightarrow \infty}  \Big( \frac{\ell_t + 1}{\ell_t}\,\bar{\boldsymbol{\pi}}_{\ell_t + 1} - \frac{1}{\ell_t}\boldsymbol{\pi}_0^\top\,W \Big) \\
&= \bar{\boldsymbol{\pi}}_\infty
\end{flalign*}
where in the first equality we made use of the fact that linear maps between finite-dimensional Euclidean spaces are continuous. Thus, $\bar{\boldsymbol{\pi}}_\infty$ is a stationary distribution of $W$ since the above analysis shows that  $\bar{\boldsymbol{\pi}}_\infty\,W = \bar{\boldsymbol{\pi}}_\infty$. Since $W$ has a unique stationary distribution $\boldsymbol{\pi}$, we have $\bar{\boldsymbol{\pi}}_\infty = \boldsymbol{\pi}$. 
We reiterate that aperiodicity is not needed. This is important since it is not guaranteed that the connected component subgraphs of a given graph will be aperiodic.

The following results follow immediately:
	\begin{equation*}
		\lim\limits_{\ell\rightarrow\infty}\frac{1}{\ell}\sum_{s=1}^{l-v}(W)^s_{ki} = \pi_i.
	\end{equation*}
	and
	\begin{equation*}
		\lim\limits_{\ell\rightarrow\infty}\pi^{(\ell)}_j = \lim\limits_{\ell\rightarrow\infty}\frac{1}{\ell}\sum_{u=1}^{\ell} \frac{1}{n}\bm{1}_n^\top W^{u-1}\bm{e}_j = \pi_j.
	\end{equation*}
	
	And therefore,
    \begin{align*} 
		\frac{1}{n}\lim\limits_{\ell\rightarrow\infty}\lim\limits_{r\rightarrow\infty}\frac{n_{ij}^+}{r\ell}={}& \pi_i\sum_{v=1}^{\infty}\alpha_v(W^v)_{ij},\\
		\frac{1}{n}\lim\limits_{\ell\rightarrow\infty}\lim\limits_{r\rightarrow\infty}\frac{n_{ij}^-}{r\ell}={}& k\pi_i\pi_j\sum_{v=1}^{\infty}\alpha_v.
	\end{align*}
$\hfill \square$

{\noindent \bf{Remark}}: The characterization of ergodic limits of walk-distance weighted counts stated in \Cref{Thm:Ergolim_full} are for connected graphs. For disconnected graphs the characterization is similar, but confined to each connected component, as in \Cref{Thm:Ergolim}, and can be proved similarly.

\subsection{Proof of Proposition~\ref{Prop:PMI_solution}}
\label{app:PMI_solution}
\begin{proof}
\Cref{Eq:PMI} is separable with respect to the $X_{ij}$ variables, and for each $X_{ij}$, the problem reduces to the following univariate optimization problem:
\begin{equation} 
\label{Eq:PMI_single}
\argmin_{x\in\mathbb{R}}\ f_{ij}(x)\defeq \bar{n}_{ij}^+\ln\left(1+e^{-x}\right)+\bar{n}_{ij}^-\ln\left(1+e^{+x}\right).
\end{equation}
Since 
\begin{equation*}
	\frac{\text{d}^2f_{ij}}{\text{d}x^2}=\bar{n}_{ij}^+\frac{e^{-x}}{(1+e^{-x})^2}+\bar{n}_{ij}^-\frac{e^{x}}{(1+e^{x})^2}>0,
\end{equation*}
it follows that $f_{ij}$ is a twice-differentiable convex function and therefore attains a global minimum at values of $x$ where the derivative vanishes, i.e,
\begin{equation*}
	\frac{\text{d}f_{ij}}{\text{d}x}=-\bar{n}_{ij}^+\frac{e^{-x}}{1+e^{-x}}+\bar{n}_{ij}^-\frac{e^{x}}{1+e^{x}} = 0,
\end{equation*}
or equivalently
\begin{equation*}
	\bar{n}_{ij}^-e^{2x}+(\bar{n}_{ij}^+-\bar{n}_{ij}^-)e^x-\bar{n}_{ij}^+=0.
\end{equation*}
Note that from \Cref{Eq:Thm_Ergolim-} of \Cref{Thm:Ergolim} we know $\bar{n}_{ij}^- > 0$. Therefore, when $\bar{n}_{ij}^+\neq 0$, we have a unique solution $e^x=\frac{\bar{n}_{ij}^+}{\bar{n}_{ij}^-}$, i.e., $x=\ln\left(\frac{\bar{n}_{ij}^+}{\bar{n}_{ij}^-}\right)$. When $\bar{n}_{ij}^+\neq 0$, $f_{ij}(x)$ is monotonically increasing over the entire real line, and we take $x = -\infty$ as the solution.
Thus, 
\begin{equation*}
X_{ij}^* = \begin{cases}
	\ln\left(\frac{\bar{n}_{ij}^+}{\bar{n}_{ij}^-}\right) & \text{if } \bar{n}_{ij}^+\neq 0;\\
	-\infty & \text{if } \bar{n}_{ij}^+= 0.
\end{cases}	
\end{equation*}
From \Cref{Lemma:Ergolim}, we have $\bar{n}_{ij}^+ = \bar{n}_{ji}^+$ and $\bar{n}_{ij}^- = \bar{n}_{ji}^-$. Therefore we have $X_{ij}^* = X_{ji}^*$.

For $rn$ random walks each of length $\ell$, the total number of node pairs within $w$ steps of each other is $\vert\mathcal{D}_{\ell,+}\vert=rn\left(\ell w-\frac{w(w+1)}{2}\right)$. First note that, when $\bar{n}_{ij}^+ = 0$, $(i,j)$ are in two different connected components, and thus $n_{ij}^+ = 0$, or equivalently, $p_{\ell}(i,j) = 0$. Therefore, $\text{PMI}_{\ell}(i,j)=-\infty = X^*_{ij} - \ln{k}$ holds. For the rest of the proof, we only consider the case when $\bar{n}_{ij}^+ \neq 0$ or, equivalently, when $(i,j)$ are in the same connected component. For the joint distribution, we have
\begin{equation*}
	p_{\ell}(i,j)={}\frac{n_{ij}^+}{\vert\mathcal{D}_{\ell,+}\vert}\xrightarrow[Eq.~\eqref{Eq:Def_Ergolim+} ]{\ell \to \infty \ a.s.} \frac{\bar{n}_{ij}^+}{w}.
\end{equation*}
For the marginal distributions,
\begin{equation*}
	p_{\ell 1}(i)={}\sum_{j\in\mathcal{V}}p_{\ell}(i,j) = {} \frac{\sum_{j\in\mathcal{V}} n_{ij}^+}{\vert\mathcal{D}_{\ell,+}\vert}\xrightarrow[\text{Eq.}~\eqref{Eq:Def_Ergolim+} ]{\ell \to \infty \ a.s.} \frac{\sum_{j\in\mathcal{V}} \bar{n}_{ij}^+}{w} \stackrel{\text{Eq.}~\!\eqref{Eq:Thm_Ergolim+}}{=\joinrel=\joinrel=}{}\pi_i
\end{equation*}
and
\begin{equation*}
	p_{\ell 2}(j)={}\sum_{i\in\mathcal{V}}p_{\ell}(i,j) = {} \frac{\sum_{i\in\mathcal{V}} n_{ij}^+}{\vert\mathcal{D}_{\ell,+}\vert}\xrightarrow[\text{Eq.}~\eqref{Eq:Def_Ergolim+} ]{\ell \to \infty \ a.s.} \frac{\sum_{i\in\mathcal{V}} \bar{n}_{ij}^+}{w}. 
\end{equation*}
Note that since $\bm{\pi}$ is the stationary distribution of the random walk, for any $v$, $\sum_{\in\mathcal{V}} \pi_i (W^v)_{ij} = \pi_j$. Combining this with Eq.~\eqref{Eq:Thm_Ergolim+}, we get
\begin{equation*}
	\sum_{i\in\mathcal{V}}\bar{n}_{ij}^+={} \sum_{v=1}^{w}\sum_{i\in\mathcal{V}}\pi_i(W^v)_{ij} ={}\sum_{v=1}^{w}\pi_j ={}w\pi_j.
\end{equation*}
Therefore, $p_{\ell 2}(j)\xrightarrow[a.s.]{\ell \to \infty}\pi_j$ and
\begin{align*}
	\text{PMI}_{\ell}(i,j) =\joinrel={}&\ln\left(\frac{p_{\ell}(i,j)}{p_{\ell 1}(i)p_{\ell 2}(j)}\right) \\
	 \xrightarrow[a.s.]{\ell \to \infty}{}&\ln\left(\frac{\bar{n}_{ij}^+}{w\pi_i\pi_j}\right) \\
	 \stackrel{\text{Eq.}\eqref{Eq:Thm_Ergolim-}}{=\joinrel=\joinrel=}{}& \ln\left(\frac{\bar{n}_{ij}^+}{\bar{n}_{ij}^-}\right) + \ln k \\
	 =\joinrel={}& X_{ij}^* + \ln{k}
\end{align*}
\end{proof}

\subsection{Proof of Theorem~\ref{Thm:expected_solution}}
\label{app:proof_expected_solution}

{\noindent}The main structure of the proof is as follows:
\begin{enumerate}
\item Part 1 will be shown using \Cref{Lemma:Ergolim} and the eigenvalue decomposition of the {\it diagonal-blockwise-constant} (DBC) matrices (defined below).

\item Part 2 is a direct consequence of combining Part 1) and \Cref{Prop:PMI_solution}.

\item Part 3 is intricate and will be proved in 3-steps: 

	  {\bf 1) }First, we will show that the solution must be a DBC matrix, and thus can be re-parameterized by the three scalars that define a DBC matrix. This will be established by showing that for any feasible solution, a DBC matrix can be constructed that is both feasible and yields a lower objective cost.

      {\bf 2) }Second, we will prove that among the 3 scalar variables in the re-parameterized problem, the optimal value of two of them must equal. This implies that the matrix solution must have a block structure. We will then eliminate one variable and re-parameterize the optimization problem in terms of the remaining two variables. 

      {\bf 3) }Lastly, we will show that the optimal values of the two variables will be opposite numbers of each other which will imply that the solution matrix has rank $1$.
\item Part 4 is a direct consequence of parts (1)---(3).
\end{enumerate}

Before getting into the derivations, we set up some notation and define {\it diagonal-blockwise-constant} (DBC) matrices.

Without loss of generality, we assume that nodes in the two balanced communities are $\{1, \ldots, m\}$ and $\{m+1, \ldots, 2m\}$. Under this labeling, we define the following two subsets of node pairs (edges)
 \begin{align*}
 	\MoveEqLeft[30]
 	\mathcal{E}_0 \defeq{} \{(i,j): i\neq j, \ i, j\leq m \text{ or } i,j\geq m+1\}\\
 	\MoveEqLeft[30]
 	\mathcal{E}_1 \defeq{} \{(i,j): i\leq m,\ j\geq m+1\} \\
 	\MoveEqLeft[17]
    \hfill \bigcup \{(i,j): i\geq m+1,\ j\leq m\}.
 \end{align*}
Then, $\vert \mathcal{E}_0 \vert = 2m(m-1)$ and $\vert \mathcal{E}_1 \vert = 2m^2$.
\vspace{\baselineskip}

{\noindent}Next, we define {\it diagonal-blockwise-constant} (DBC) matrices.

\begin{defi}[DBC matrix]
	\label{Defi:DBC matrix}
	Let $\bm{1}_m \defeq (1,1,\ldots,1)^\top\in\mathbb{R}^m$. Let $\bm{y}_1 = (\bm{1}_m^\top, \bm{1}_m^\top)^\top$ and $\bm{y}_2 =(\bm{1}_m^\top, -\bm{1}_m^\top)^\top$. For $m\geq 2$, a $2m \times 2m$ matrix is called {\it diagonal-blockwise-constant} (DBC) if it has the form 
	\begin{equation}
	\label{Eq:Def_DBC}
		Z_{2m}(c_1, c_2, c_3) \defeq \frac{c_1+c_2}{2}\bm{y}_1\bm{y}_1^\top + \frac{c_1-c_2}{2}\bm{y}_2\bm{y}_2^\top + (c_3-c_1) I_{2m}.
	\end{equation}
\end{defi}

{\noindent} Certain key properties of DBC matrices that we use in our proof are described in the following proposition.

\begin{prp}[Properties of DBC matrices]
\label{Prop:property_DBC}
Let $\mathcal{E}_0$ and $\mathcal{E}_1$ be as stated above and let $X$ be a $2m\times 2m$ matrix for $m\geq 2$. Then, $X = Z_{2m}(c_1, c_2, c_3)$ if, and only if, any one of the following holds:
\begin{enumerate}
	\item $X$ has the following block structure:		
		\begin{equation*}
		X_{ij} = \begin{cases}
		c_1 & \text{if }(i,j)\in\mathcal{E}_0; \\
		c_2 & \text{if }(i,j)\in\mathcal{E}_1; \\
		c_3 & \text{if } i = j
		\end{cases}
		\end{equation*}
	\item The eigenvalues and eigenvectors of $X$ satisfy:
		\begin{enumerate}
		\item  $\lambda_3=\lambda_4 = \ldots = \lambda_{2m}$;
		\item  $\bm{u}_1 = \frac{1}{\sqrt{2m}}\bm{y}_1$, $\bm{u}_2 = \frac{1}{\sqrt{2m}}\bm{y}_2$.
 		\end{enumerate}
\end{enumerate}

{\noindent}In addition, the set of all $DBC$ matrices is closed under matrix addition and multiplication operations.
\end{prp}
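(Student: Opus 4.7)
The plan is to reduce everything to the defining algebraic identity of $Z_{2m}$ combined with the orthogonality of $\bm{y}_1$ and $\bm{y}_2$. First I would verify the equivalence between the defining formula and the block-structure characterization (1). Note that $\bm{y}_1\bm{y}_1^\top$ is the all-ones matrix, while $\bm{y}_2\bm{y}_2^\top$ has entry $+1$ on within-community pairs (including the diagonal) and $-1$ on cross-community pairs. The linear combination $\frac{c_1+c_2}{2}\bm{y}_1\bm{y}_1^\top + \frac{c_1-c_2}{2}\bm{y}_2\bm{y}_2^\top$ therefore has entry $c_1$ on within-community pairs and $c_2$ on cross-community pairs, and the correction $(c_3-c_1)I_{2m}$ adjusts the diagonal entries from $c_1$ to $c_3$. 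The converse direction, that block structure implies $X = Z_{2m}(c_1,c_2,c_3)$, then follows immediately by reading off $c_1, c_2, c_3$ from the block values of any such $X$.

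For the eigen-characterization (2), I would exploit $\bm{y}_1^\top\bm{y}_1 = \bm{y}_2^\top\bm{y}_2 = 2m$ and $\bm{y}_1^\top\bm{y}_2 = 0$ (the last identity relying on the two communities being balanced). Direct computation then yields $Z_{2m}(c_1,c_2,c_3)\bm{u}_1 = [(m-1)c_1 + m c_2 + c_3]\bm{u}_1$ and $Z_{2m}(c_1,c_2,c_3)\bm{u}_2 = [(m-1)c_1 - m c_2 + c_3]\bm{u}_2$, while any $\bm{v}$ orthogonal to both $\bm{y}_1$ and $\bm{y}_2$ is annihilated by the two rank-one parts, giving $Z_{2m}(c_1,c_2,c_3)\bm{v} = (c_3-c_1)\bm{v}$ and hence the $(2m-2)$-fold degenerate eigenvalue required by (2a). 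For the converse direction, since $X$ is symmetric with the prescribed eigen structure, the spectral theorem yields $X = (\lambda_1-\lambda)\bm{u}_1\bm{u}_1^\top + (\lambda_2-\lambda)\bm{u}_2\bm{u}_2^\top + \lambda I_{2m}$, where $\lambda$ denotes the common value of $\lambda_3, \ldots, \lambda_{2m}$. Substituting $\bm{u}_i = \bm{y}_i/\sqrt{2m}$ and matching coefficients against the defining formula for $Z_{2m}$ uniquely identifies the scalars $c_1, c_2, c_3$.

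Closure under addition is immediate since the defining formula is linear in $(c_1,c_2,c_3)$. Closure under multiplication rests on the key observation that any two DBC matrices share the invariant subspaces $\mathrm{span}\{\bm{y}_1\}$, $\mathrm{span}\{\bm{y}_2\}$, and their orthogonal complement, on each of which they both act as a scalar multiple of the identity. Consequently any two DBC matrices commute, so their product is symmetric, with the same eigenvectors $\bm{u}_1$ and $\bm{u}_2$ and with all remaining eigenvalues collapsing to the product of the respective scalars on the orthogonal complement. By characterization (2), the product is itself DBC. The main subtlety, if anywhere, is in the converse of (2): one needs the symmetry of $X$ to invoke the spectral theorem, and one needs to recognize that the precise orthonormal basis chosen for the $(2m-2)$-dimensional degenerate eigenspace is irrelevant because it only enters through the projector $I_{2m} - \bm{u}_1\bm{u}_1^\top - \bm{u}_2\bm{u}_2^\top$, making the reconstruction of $X$ in terms of $c_1, c_2, c_3$ well-defined.
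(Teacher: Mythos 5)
Your proposal is correct and follows essentially the same route as the paper: both verify part (1) entrywise from the defining formula, obtain part (2) by exploiting the orthogonality of $\bm{y}_1$ and $\bm{y}_2$ to read off the spectral decomposition (with the converse via the spectral theorem and matching coefficients), and establish closure by noting that all DBC matrices share the eigenstructure $\{\bm{u}_1,\bm{u}_2\}$ plus a degenerate orthogonal complement. Your shortcut of getting additive closure directly from linearity in $(c_1,c_2,c_3)$ is a minor simplification, not a different argument.
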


\begin{proof}\hfill

{\noindent {\bf Proof of equivalence.}}

{\noindent {\bf1)}} Both if and only if parts can be obtained directly from \Cref{Eq:Def_DBC} in \Cref{Defi:DBC matrix}:
\begin{equation*}
		X_{ij} ={} \begin{cases}
		\frac{c_1+c_2}{2} + \frac{c_1-c_2}{2} = c_1 & \text{if }(i,j)\in\mathcal{E}_0; \\
		\frac{c_1+c_2}{2} - \frac{c_1-c_2}{2} = c_2 & \text{if }(i,j)\in\mathcal{E}_1; \\
		\frac{c_1+c_2}{2} + \frac{c_1-c_2}{2} + (c_3-c_1)= c_3 & \text{if } i = j
		\end{cases}
\end{equation*}

{\noindent {\bf2)}} If $X=Z_{2m}(c_1, c_2, c_3)$, directly from \cref{Eq:Def_DBC}, we can compute the spectral decomposition of $X$. Let $\bm{u}_1 = \frac{1}{\sqrt{2m}}\bm{y}_1$, $\bm{u}_2 = \frac{1}{\sqrt{2m}}\bm{y}_2$ and $\bm{u}_3,\ldots,\bm{u}_{2m}$ be any set of orthonormal vectors that together with $\bm{u}_1$ and $\bm{u}_2$ form an orthonormal basis for $\mathbb{R}^{2m}$. Then,
\begin{align*}
	X ={}& m(c_1+c_2)\bm{u}_1\bm{u}_1^\top + m(c_1-c_2)\bm{u}_2\bm{u}_2^\top + (c_3-c_1) \sum_{i=1}^{2m}\bm{u}_i\bm{u}_i^\top \\
	  ={}& \left(m(c_1+c_2) + (c_3-c_1) \right)\bm{u}_1\bm{u}_1^\top +  \\
	   & \left(m(c_1-c_2) + (c_3-c_1) \right)\bm{u}_2\bm{u}_2^\top + \sum_{i=3}^{2m}(c_3-c_1)\bm{u}_i\bm{u}_i^\top.
\end{align*}
Therefore, $\bm{u}_1,\ldots,\bm{u}_{2m}$ are the eigenvectors of $X$ and the eigenvalues satisfy $\lambda_3=\lambda_4 = \ldots =\lambda_{2m} = c_3 - c_1$.

Reversely, if the eigenvalues and eigenvectors of $X$ have the given property, letting $U = [\bm{u}_1, \ldots, \bm{u}_{2m}]$, we have
	\begin{align*}
	X 	 & ={}  U \ \text{Diag}\{\lambda_1, \lambda_2, \lambda_3, \ldots, \lambda_3\}
		 U^\top  
		\\
	 & ={}U \ \text{Diag}\{\lambda_1 - \lambda_3, \lambda_2 - \lambda_3, 0, \ldots, 0\}
		 U^\top 
		 + U \lambda_3I_{2m} U^\top 
		\\
	 & ={} \frac{\lambda_1 - \lambda_3}{2m}\bm{y}_1\bm{y}_1^\top + \frac{\lambda_2 - \lambda_3}{2m}\bm{y}_2\bm{y}_2^\top + \lambda_3 I_{2m} 
		\\
     & ={} Z_{2m}\left(\frac{\lambda_1 + \lambda_2 - 2 \lambda_3}{2m}, 
	                  \frac{\lambda_1 - \lambda_2}{2m}, 
	                  \frac{\lambda_1 + \lambda_2 + (2m -2)\lambda_3}{2m}\right)
   \end{align*}
   
{\noindent {\bf Proof of set closure}}

Let $X_1$, $X_2$ be two DBC matrices. From part 2), defining $U = [\bm{u}_1, \ldots, \bm{u}_{2m}]$ where $\bm{u}_1 = \frac{1}{\sqrt{2m}}\bm{y}_1$, $\bm{u}_2 = \frac{1}{\sqrt{2m}}\bm{y}_2$ and $\{\bm{u}_3,\ldots,\bm{u}_{2m}\}$ is any set of orthonormal vectors that together with $\bm{u}_1$ and $\bm{u}_2$ form an orthonormal basis for $\mathbb{R}^{2m}$, we have 
\begin{align*}
		X_1 &={}U \ \text{Diag}\{\lambda_1, \lambda_2, \lambda_3, \ldots, \lambda_3\}
		 U^\top  \\
		 X_2 &={}U \ \text{Diag}\{\mu_1, \mu_2, \mu_3, \ldots, \mu_3\}
		 U^\top.
\end{align*}

Therefore, 
\begin{align*}
		X_1 + X_2 &={}U \ \text{Diag}\{\lambda_1 + \mu_1, \lambda_2 + \mu_2, \lambda_3 + \mu_3, \ldots, \lambda_3 + \mu_3\}
		 U^\top  \\
		 X_1X_2 &={}U \ \text{Diag}\{\lambda_1\mu_1, \lambda_2\mu_2, \lambda_3\mu_3, \ldots, \lambda_3\mu_3\}
		 U^\top
\end{align*}
satisfy the conditions a) and b) in part 2), and they are both DBC matrices.
\end{proof}

{\noindent \bf{Notation.}} For ease of reference, for a DBC matrix $X$, we denote $\lambda_i(X)$ as its eigenvalues and $c_i(X)$ $(i = 1,2,3)$ as its entry values in $\mathcal{E}_0$, $\mathcal{E}_1$ and diagonal, respectively. I.e., $X = Z_{2m}(c_1(X),c_2(X),c_3(X))$. The derivation in the proof above gives the transformation formula between them. Specifically, given $X = Z_{2m}(c_1,c_2,c_3)$, we have
	\begin{align}
		\lambda_1(X) & ={} (m - 1)c_1 + c_3 + m c_2 , \label{Eq:Thm2_proof_eig1} \\
		\lambda_2(X) & ={} (m - 1)c_1 + c_3 - m c_2 , \label{Eq:Thm2_proof_eig2} \\
		\lambda_3(X) & ={} c_3-c_1. \label{Eq:Thm2_proof_eig3}
	\end{align}
	And given the eigenvalues $\lambda_1,\lambda_2,\lambda_3=\ldots=\lambda_{2m}$ of $X$, we have
	\begin{align}
		c_1(X) & ={} \frac{\lambda_1 + \lambda_2 - 2 \lambda_3}{2m} \label{Eq:Thm2_proof_c1}\\
		c_2(X) & ={} \frac{\lambda_1 - \lambda_2}{2m} \label{Eq:Thm2_proof_c2} \\
		c_3(X) & ={} \frac{\lambda_1 + \lambda_2 + (2m -2)\lambda_3}{2m}. \label{Eq:Thm2_proof_c3}
	\end{align}
	
\begin{prp}[P.S.D. condition of DBC matrices]
\label{Prop:PSD_DBC}
Let $X={} Z_{2m}(c_1, c_2, c_3)$ be a DBC matrix with $c_3 \geq c_1$. Denote $\bar{c}_{13}\defeq{}\frac{m-1}{m}c_1 + \frac{1}{m}c_3$ and let $Y ={} Z_{2m}(\bar{c}_{13}, c_2, \bar{c}_{13})$.
Then, if $Y \succeq 0$, we have $X \succeq 0$.	
\end{prp}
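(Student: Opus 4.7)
The plan is to leverage the spectral characterization of DBC matrices established in \Cref{Prop:property_DBC}. Both $X$ and $Y$ are DBC matrices by construction, so their eigendecompositions share the common eigenbasis consisting of $\bm{u}_1 = \bm{y}_1/\sqrt{2m}$, $\bm{u}_2 = \bm{y}_2/\sqrt{2m}$, together with any orthonormal completion in $\mathbb{R}^{2m}$. Hence the only difference between $X$ and $Y$ lies in their eigenvalues, and to verify $X \succeq 0$ it suffices to compare the two spectra term by term.

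First, I would apply the eigenvalue formulas \eqref{Eq:Thm2_proof_eig1}--\eqref{Eq:Thm2_proof_eig3} to both matrices. The key identity $(m-1)c_1 + c_3 = m\bar{c}_{13}$, which follows directly from the definition of $\bar{c}_{13}$, yields
\[
\lambda_1(X) = m\bar{c}_{13} + mc_2 = \lambda_1(Y), \qquad \lambda_2(X) = m\bar{c}_{13} - mc_2 = \lambda_2(Y),
\]
so the two leading eigenvalues of $X$ and $Y$ coincide. The repeated eigenvalue is where they part ways: $\lambda_3(X) = c_3 - c_1$, whereas $\lambda_3(Y) = \bar{c}_{13} - \bar{c}_{13} = 0$.

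To conclude, I would observe that $Y \succeq 0$ forces all its eigenvalues to be nonnegative, so in particular $\lambda_1(Y), \lambda_2(Y) \geq 0$, which by the identity above gives $\lambda_1(X), \lambda_2(X) \geq 0$. Combined with the standing hypothesis $c_3 \geq c_1$, which gives $\lambda_3(X) = c_3 - c_1 \geq 0$, all $2m$ eigenvalues of $X$ are nonnegative, so $X \succeq 0$.

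No substantive obstacle is anticipated: the argument is essentially a routine spectral comparison once the structure of DBC matrices is in hand. The only conceptual point worth highlighting is that replacing $(c_1, c_3)$ by $(\bar{c}_{13}, \bar{c}_{13})$ preserves the quantity $(m-1)c_1 + c_3$ that governs $\lambda_1$ and $\lambda_2$, while driving the degenerate eigenvalue $c_3 - c_1$ all the way down to zero. Thus $Y \succeq 0$ encodes the tightest possible constraint on $\lambda_1$ and $\lambda_2$, which together with $c_3 \geq c_1$ is exactly what is needed for $X \succeq 0$.
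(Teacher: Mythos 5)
Your proposal is correct and follows essentially the same route as the paper: compare the eigenvalues of $X$ and $Y$ via \eqref{Eq:Thm2_proof_eig1}--\eqref{Eq:Thm2_proof_eig3}, note that $\lambda_1$ and $\lambda_2$ coincide because $(m-1)\bar{c}_{13}+\bar{c}_{13}=(m-1)c_1+c_3$, and handle the repeated eigenvalue of $X$ directly from $c_3\geq c_1$. Your additional remark that $\lambda_3(Y)=0$ makes the tightness of the comparison explicit, but the substance of the argument is identical to the paper's.
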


\begin{proof}\hfill

By \Cref{Eq:Thm2_proof_eig1,Eq:Thm2_proof_eig2}, we have
	\begin{align*}
		\lambda_1(Y) & ={} (m - 1)\bar{c}_{13} + \bar{c}_{13} + m c_2	\\
					 & ={} (m-1)c_1 + c_3 + m c_2 \\
					 & ={} \lambda_1(X)
	\end{align*}
	\begin{align*}
		\lambda_2(Y) & ={} (m - 1)\bar{c}_{13} + \bar{c}_{13} - m c_2 	\\
					 & ={} (m-1)c_1 + c_3 + m c_2 \\
					 & ={} \lambda_2(X)
	\end{align*}
	Since $Y \succeq 0$, we have $\lambda_1(X) \geq 0$ and $\lambda_2(X) \geq 0$. Since $c_3 \geq c_1$, we have $\lambda_3(X) = c_3 - c_1 \geq 0$. And therefore, $X \succeq 0$.
\end{proof}

Now, we are ready to prove \Cref{Thm:expected_solution}.

{\noindent{\bf Part 1)}}

Note that for expected graph, the adjacency matrix $A$ and random walk transition matrix $W$ are both DBC matrices, and the stationary distribution $\bm{\pi}$ is uniform distribution. Specifically, we have 
	\begin{align*}
		A & ={} Z_{2m}(a, b, 0) \\
		W & ={} Z_{2m}\left(\frac{a}{(m-1)a + mb}, \frac{a}{(m-1)a + mb}, 0\right) \\
		\bm{\pi} & = \frac{1}{2m}\bm{1}_{2m}
	\end{align*}
By \Cref{Lemma:Ergolim}, we can compute the positive and negative coefficient matrices $\bar{N}^+$ and $\bar{N}^-$ as
	\begin{align}
		\bar{N}^+ & ={} \frac{1}{2m} \sum_{v=1}^w W^v, \label{Eq:Thm2_proof_N+} \\
		\bar{N}^- & ={} kw\bm{\pi}\bm{\pi}^\top 
						= \frac{kw}{4m^2}\bm{1}_{2m}\bm{1}_{2m}^\top.
					\label{Eq:Thm2_proof_N-}
	\end{align}
\Cref{Eq:Thm2_proof_N-} gives us $\bar{n}_{ij}^- = \frac{kw}{4m^2} = \frac{kw}{n^2} \eqdef \beta$. It remains to show $\bar{n}_{ij}^+$.
		  
Since $\bar{N}^+$ is a sum of products of DBC matrices, by closure of DBC set (\Cref{Prop:property_DBC}), $\bar{N}^+$ is a DBC matrix. In order to compute $c_1(\bar{N}^+)$, $c_2(\bar{N}^+)$, and $c_3(\bar{N}^+)$, we begin from its eigenvalues. Since $W$ is a DBC matrix, by \Cref{Eq:Thm2_proof_eig1,Eq:Thm2_proof_eig2,Eq:Thm2_proof_eig3}, we have	  
\begin{align*}
	\lambda_1(W) &={} 1,\\
	\lambda_2(W) &={} \frac{(m-1)a - mb}{(m-1)a + mb}, \\
	\lambda_3(W) &={} \ldots ={} \lambda_{2m}(W)={} -\frac{a}{(m-1)a + mb}.
\end{align*}
Note that since we assumed $a > \frac{m}{m-1}b$, we have $\lambda_1(W) > 0$, $\lambda_2(W)>0$, $\lambda_3(W)<0$.

From \Cref{Eq:Thm2_proof_N+}, we obtain the eigenvalues of $\bar{N}^+$
\begin{align*}
	\lambda_1(\bar{N}^+) &={} \frac{w}{2m},\\
	\lambda_2(\bar{N}^+) &={} \frac{1}{2m}\sum_{v=1}^w \lambda_2(W)^v, \\
	\lambda_3(\bar{N}^+) &={} \ldots ={} \lambda_{2m}(W)={} \frac{1}{2m}\sum_{v=1}^w \lambda_3(W)^v.
\end{align*}
Given the sign of $\lambda_i(W)$, we have $\lambda_3(\bar{N}^+)< 0 < \lambda_2(\bar{N}^+) < \lambda_1(\bar{N}^+)$. With \Cref{Eq:Thm2_proof_c1,Eq:Thm2_proof_c2,Eq:Thm2_proof_c3}, the entry values are given as
\begin{align}
c_1(\bar{N}^+) &={} \frac{1}{4m^2}\left[w + \sum_{v=1}^w \lambda_2(W)^v - 2\sum_{v=1}^w \lambda_3(W)^v \right]     \label{alpha_1}
			   \\
			   &\defeq{} \alpha_1, \notag  \\
c_2(\bar{N}^+) &={} \frac{1}{4m^2}\left[w - \sum_{v=1}^w \lambda_2(W)^v\right]    \label{alpha_2}
			   \\
			   &\defeq{} \alpha_2, \notag  \\
c_3(\bar{N}^+) &={} \frac{1}{4m^2}\left[w + \sum_{v=1}^w \lambda_2(W)^v  + (2m - 2)\sum_{v=1}^w \lambda_3(W)^v\right] 
     		\label{alpha_3} \\
     		   &\defeq{} \alpha_3. \notag
\end{align}
This completes the proof of Part 1). 
Note that $\lambda_1(W) = 1$, $\lambda_2(W) = 1 - O(1/n)$, $\lambda_3(W)= O(1/n)$. Therefore, from \Cref{alpha_1,alpha_2,alpha_3}, we have $\alpha_i=C_i/n^2+o(1/n^2)$ for $i=1,2,3$, where $C_i$'s are functions of only $a$, $b$ and $w$. Given the sign of $\lambda_i(W)$, we have
\begin{align}
	\alpha_1 &> \alpha_3 > 0, \label{Eq:Thm2_proof_alpha_relation} \\
	\alpha_1 &> \alpha_2 > 0. \notag
\end{align}
\vspace{\baselineskip}

{\noindent{\bf Part 2)}}

Applying \Cref{Prop:PMI_solution}, since $\bar{n}_{ij}^+>0$ and $\bar{n}_{ij}^->0$ hold for all $i,j$, we have
\begin{equation*}
	X_{ij}^* = \ln\left(\frac{\bar{n}_{ij}^+}{\bar{n}_{ij}^-}\right) =  \begin{cases}
							\ln\left(\frac{\alpha_1}{\beta}\right), &\text{if }(i,j)\in\mathcal{E}_0 \\
							\ln\left(\frac{\alpha_2}{\beta}\right), &\text{if }(i,j)\in\mathcal{E}_1 \\
						\ln\left(\frac{\alpha_3}{\beta}\right), &\text{if } i = j
						 								\end{cases}
\end{equation*}	

{\noindent{\bf Part 3)}}

When $\mathcal{H}=\{X \ | \ X \succeq 0\}$, we first establish structures that $X^*$ must have, and then solve it explicitly. We take three major steps:
\begin{enumerate}
\item[Step 1] We show that $X^*$ must be a DBC matrix, and thus we can re-parameterize the optimization problem into three scalars variables: $c_1$, $c_2$, and $c_3$. 
\item[Step 2] We prove that among the optimal solution of this re-parameterized problem must satisfy $c_1^* = c_3^*$. Then, we substitute $c_3$ by $c_1$ and only keep $c_1$ and $c_2$ as optimizing variables.
\item[Step 3] We show that $c_1^* = -c_2^*$ must hold. After eliminating $c_2$, we solve the optimization explicitly.
\end{enumerate}

{\noindent{\bf Step 1}}.

For any matrix $X\in S_+$, let $c_1$, $c_2$, and $c_3$ be the average of its entries in region $\mathcal{E}_0$, $\mathcal{E}_1$ and on diagonal, respectively. I.e.,
\begin{align}
	x_1 &\defeq{} \frac{1}{2m^2-2m}\sum_{(i,j)\in\mathcal{E}_0}X_{ij},  \label{Eq:Thm2_proof_x_tilde_c1}       \\
	x_2 &\defeq{} \phantom{-2m^2}\frac{1}{2m^2}\sum_{(i,j)\in\mathcal{E}_1}X_{ij}, \label{Eq:Thm2_proof_x_tilde_c2}\\
	x_3 &\defeq{} \phantom{-2m^2}\frac{1}{2m}\phantom{=}\sum_{i=1}^{2m}X_{ii}. \label{Eq:Thm2_proof_x_tilde_c3}
\end{align}
Then, we construct a DBC matrix $\tilde{X}$ as 
\begin{equation*}
	\tilde{X} = Z_{2m}(x_1, x_2, x_3).
\end{equation*}
Denoting our objective function in \Cref{Eq:Thm2_optimization} as $f$, i.e.,
\begin{equation*}
	f(X) \defeq{} \sum_{(i,j)} \left[\bar{n}_{ij}^+\ln\left(1+e^{-X_{ij}}\right)+\bar{n}_{ij}^-\ln\left(1+e^{X_{ij}}\right)\right],
\end{equation*}
we claim that 
\begin{enumerate}
	\item[a)] $\tilde{X} \succeq 0$. I.e., $\tilde{X}\in\mathcal{H}$ is feasible.
	\item[b)] $f(\tilde{X}) \leq f(X)$. I.e., $\tilde{X}$ will be no worse than $X$.
\end{enumerate}
Combining a) and b) will show that the optimal solution $X^*$ must be a DBC matrix. Below, we will prove these claims.
\vspace{\baselineskip}

{\noindent{a)}}
We begin by computing the eigenvalues of the DBC matrix $\tilde{X}$ and substituting the \Cref{Eq:Thm2_proof_x_tilde_c1,Eq:Thm2_proof_x_tilde_c2,Eq:Thm2_proof_x_tilde_c3}:
	\begin{align*}
		\lambda_1(\tilde{X}) & ={} (m - 1)x_1 + x_3 + m x_2	\\
					 & ={} \frac{1}{2m}\sum_{(i,j)\in\mathcal{E}_0}X_{ij} + \frac{1}{2m}\sum_{i=1}^{2m}X_{ii} + \frac{1}{2m}\sum_{(i,j)\in\mathcal{E}_1}X_{ij} \\
					 & ={} \frac{1}{2m}\sum_{i,j}X_{ij} \\
					 & ={} \frac{1}{2m} \bm{1}_{2m}^\top X \bm{1}_{2m} \\
					 & \geq{} 0 \\
					 \\
		\lambda_2(\tilde{X}) & ={} (m - 1)x_1 + x_3 - m x_2	\\
					 & ={} \frac{1}{2m}\sum_{(i,j)\in\mathcal{E}_0}X_{ij} + \frac{1}{2m}\sum_{i=1}^{2m}X_{ii} - \frac{1}{2m}\sum_{(i,j)\in\mathcal{E}_1}X_{ij} \\
					 & ={} \frac{1}{2m} \left[\bm{1}_{m}^\top, -\bm{1}_m^\top\right] X 
						   \begin{bmatrix}
						 	  \phantom{-}\bm{1}_m \\
							  -\bm{1}_m
						   \end{bmatrix} \\
					 & \geq{} 0 \\
					 \\
		\lambda_3(\tilde{X}) & ={} x_3 - x_1	\\
					 & ={} \frac{1}{2m^2-2m} \left[(m-1)\sum_{i=1}^{2m}X_{ii} - \sum_{(i,j)\in\mathcal{E}_0}X_{ij}\right]. \\	
	\end{align*}

To show that $\lambda_3(\tilde{X}) \geq 0$, we first prove the below propostion:
\begin{prp}
\label{Prop: PSD_trace_geq_avg}
	If an $m \times m$ matrix $X \succeq 0$, then 
	\begin{equation*}
		\text{Tr}(X)\geq \frac{1}{m}\bm{1}_m^\top X \bm{1}_m
	\end{equation*}
\end{prp}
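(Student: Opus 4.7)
The plan is to exploit the square-root factorization of PSD matrices and then apply the Cauchy-Schwarz inequality in essentially one step. Since $X \succeq 0$, I can write $X = Y^\top Y$ for some $m \times m$ matrix $Y$ (e.g., $Y = X^{1/2}$). Under this factorization, both sides of the desired inequality take clean geometric forms: the quadratic form becomes $\bm{1}_m^\top X \bm{1}_m = \|Y \bm{1}_m\|_2^2$, and the trace becomes $\text{Tr}(X) = \|Y\|_F^2 = \sum_{i=1}^m \|Y \bm{e}_i\|_2^2$, where $\bm{e}_i$ denotes the $i$-th standard basis vector of $\mathbb{R}^m$.

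Next, using linearity to write $Y\bm{1}_m = \sum_{i=1}^m Y \bm{e}_i$, I would invoke the power-mean inequality (a direct consequence of Cauchy-Schwarz) applied to the vectors $Y \bm{e}_1, \ldots, Y \bm{e}_m$, which gives $\|\sum_{i=1}^m Y \bm{e}_i\|_2^2 \leq m \sum_{i=1}^m \|Y \bm{e}_i\|_2^2$. Translating back via the identifications above, this is precisely $\bm{1}_m^\top X \bm{1}_m \leq m \cdot \text{Tr}(X)$, and dividing through by $m$ yields the claim.

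No step is expected to be an obstacle here: the argument is essentially one line of Cauchy-Schwarz once the PSD structure has been exposed via a square-root factorization. An equally short alternative route would use the spectral decomposition $X = \sum_{i=1}^m \lambda_i \bm{v}_i \bm{v}_i^\top$ with $\lambda_i \geq 0$ and orthonormal $\bm{v}_i$, and bound $(\bm{v}_i^\top \bm{1}_m)^2 \leq \|\bm{v}_i\|_2^2 \|\bm{1}_m\|_2^2 = m$ on each summand by Cauchy-Schwarz, yielding $\bm{1}_m^\top X \bm{1}_m = \sum_i \lambda_i (\bm{v}_i^\top \bm{1}_m)^2 \leq m \sum_i \lambda_i = m \cdot \text{Tr}(X)$. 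Either route concludes the proof in a couple of lines.
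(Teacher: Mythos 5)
Your proposal is correct, and your ``alternative route'' is in fact exactly the paper's argument: the paper writes the eigendecomposition $X = U\Lambda U^\top$, computes $\frac{1}{m}\bm{1}_m^\top X \bm{1}_m = \sum_i \lambda_i\bigl(\tfrac{1}{\sqrt{m}}\bm{1}_m^\top\bm{u}_i\bigr)^2$, and bounds each coefficient by $1$ via Cauchy--Schwarz, using $\lambda_i \geq 0$ to sum up to $\mathrm{Tr}(X)$. Your primary route, factoring $X = Y^\top Y$ and applying the power-mean inequality $\bigl\Vert\sum_i Y\bm{e}_i\bigr\Vert_2^2 \leq m\sum_i \Vert Y\bm{e}_i\Vert_2^2$, is an equally valid repackaging of the same Cauchy--Schwarz step: it trades eigenvalues for columns of a square root, which is marginally more elementary (no spectral theorem needed, only existence of a Gram factorization) but buys nothing further here since the proposition is applied in the paper only to symmetric PSD blocks where both decompositions are immediate. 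Either version is complete and correct as a proof of the stated inequality.
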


\begin{proof}
Let the eigen-decomposition of $X$ be given as follows
	\begin{equation*}
		X = U \Lambda U^\top,
	\end{equation*}
where $U = [\bm{u}_1,\ldots,\bm{u}_m]$ and $\Lambda = \text{Diag}\{\lambda_1,\ldots,\lambda_m\}$.
Then, we have 
	\begin{equation*}
		\text{Tr}(X) = \text{Tr}(U \Lambda U^\top) = \text{Tr}(\Lambda U^\top U) = \sum_{i=1}^n\lambda_i.
	\end{equation*}
And
	\begin{align*}
		\frac{1}{n}\bm{1}_m^\top X \bm{1}_m &={} \left(\frac{1}{\sqrt{n}}\bm{1}_m^\top U \right)\Lambda \left(U^\top\frac{1}{\sqrt{n}}\bm{1}_m\right) \\
		&={} \sum_{i=1}^n \lambda_i \left(\frac{1}{\sqrt{n}}\bm{1}_m^\top \bm{u}_i\right)^2 \\
		&\leq{} \sum_{i=1}^n\lambda_i \left\Vert\frac{1}{\sqrt{n}}\bm{1}_m\right\Vert \left\Vert\bm{u}_i\right\Vert \\
		&={} \sum_{i=1}^n\lambda_i.
	\end{align*}
Therefore,
	\begin{equation*}
		\text{Tr}(X)\geq \frac{1}{n}\bm{1}_m^\top X \bm{1}_m.
	\end{equation*}
\end{proof}

We divide $X$ into $4$ $m \times m$ block matrices as
\begin{equation*}
  X ={} \left(\begin{array}{c|c}
    X_{11} & X_{12} \\\hline
    X_{21} & X_{22}
  \end{array}\right).
\end{equation*}
Note that $X_{11} \succeq 0$ and $X_{22} \succeq 0$. To see this, for any $\bm{a}\in\mathbb{R}^m$, we have $\bm{a}^\top X_{11}\bm{a} = \left[\bm{a}^\top, \bm{0}\right] X \begin{bmatrix} \bm{a} \\ \bm{0} \end{bmatrix} \geq 0$ and $\bm{a}^\top X_{22} \bm{a}= \left[\bm{0}, \bm{a}^\top\right] X \begin{bmatrix} \bm{0} \\ \bm{a}  \end{bmatrix} \geq 0$. Therefore, by \Cref{Prop: PSD_trace_geq_avg}, we have
\begin{equation*}
	\text{Tr}(X_{11}) \geq \frac{1}{m}\bm{1}_m^\top X \bm{1}_m.
\end{equation*}
Or equivalently,
\begin{equation*}
	 (m-1)\sum_{i=1}^m X_{ii} \geq \sum_{i\neq j,\ i, j\leq m}X_{ij}.  	
\end{equation*}
Similarly with $X_{22}$, we have
\begin{equation*}
	 (m-1)\sum_{i=m+1}^{2m} X_{ii} \geq \sum_{i\neq j,\ i,j\geq m+1}X_{ij}.  	
\end{equation*}

Note that $\mathcal{E}_0 = \{i\neq j \ \vert \ i, j\leq m \text{ or } i,j\geq m+1\}$. Adding the above two equations yields
\begin{equation*}
	 (m-1)\sum_{i=1}^{2m} X_{ii} \geq \sum_{(i,j)\in\mathcal{E}_0}X_{ij},
\end{equation*}
which shows that $\lambda_3(\tilde{X})\geq 0$. This concludes our proof of $\tilde{X} \succeq 0$.
\vspace{\baselineskip}

{\noindent{b)}}
To show that $\tilde{X}$ has a better cost, we will use convexity. Specifically, we define 
\begin{equation*}
	\Psi(x; \alpha, \beta) \defeq \alpha \ln(1 + e^{-x}) + \beta\ln(1+e^x).
\end{equation*}
And we can rewrite $f(X)$ as 

\begin{align*}
	f(X) ={}& \phantom{k}\sum_{(i,j)} \left[\bar{n}_{ij}^+\ln\left(1+e^{-X_{ij}}\right)+\bar{n}_{ij}^-\ln\left(1+e^{X_{ij}}\right)\right] \\
	     ={}& \sum_{(i,j)\in\mathcal{E}_0}\Psi(X_{ij}; \alpha_1, \beta) + \sum_{(i,j)\in\mathcal{E}_1}\Psi(X_{ij}; \alpha_2, \beta) + \\
	        &\phantom{k}\sum_{i=1}^{2m}\Psi(X_{ii}; \alpha_3, \beta).
\end{align*}

Since for any $\alpha, \beta>0$,
\begin{equation*}
	\Psi''(x; \alpha, \beta) = \frac{e^x(\alpha + \beta)}{(1+e^{x})^2} > 0.
\end{equation*}
We know that $\Psi(x; \alpha, \beta)$ is strictly convex with respect to $x$ for any positive $\alpha$ and $\beta$. With \Cref{Eq:Thm2_proof_c1,Eq:Thm2_proof_c2,Eq:Thm2_proof_c3} in mind, we have
\begin{align*}
	\frac{1}{2m^2-2m}\sum_{(i,j)\in\mathcal{E}_0}\Psi(X_{ij}; \alpha_1, \beta) & \geq \Psi\left(x_1; \alpha_1, \beta\right), \\
	\frac{1}{2m^2}\sum_{(i,j)\in\mathcal{E}_1}\Psi(X_{ij}; \alpha_2, \beta) & \geq \Psi\left(x_2; \alpha_2, \beta\right), \\
	\frac{1}{2m}\sum_{i=1}^{2m}\Psi(X_{ii}; \alpha_3, \beta) & \geq \Psi\left(x_3; \alpha_3, \beta\right).
\end{align*}

Therefore
\begin{align*}
	f(\tilde{X}) ={}& \sum_{(i,j)\in\mathcal{E}_0}\Psi(x_1; \alpha_1, \beta) + \sum_{(i,j)
						\in\mathcal{E}_1}\Psi(x_2; \alpha_2, \beta) + \\
	       				 &\phantom{k}\sum_{i=1}^{2m}\Psi(x_3; \alpha_3, \beta) \\
   				 ={}& (2m^2-2m)\Psi(x_1; \alpha_1, \beta) + 2m^2 \Psi(x_2; \alpha_2, \beta) + \\
   				 		&\phantom{(}2m\Psi(x_3; \alpha_3, \beta) \\
   				\leq{}& \sum_{(i,j)\in\mathcal{E}_0}\Psi(X_{ij}; \alpha_1, \beta) + \sum_{(i,j)\in\mathcal{E}_1}\Psi(X_{ij}; \alpha_2, \beta) + \\
	        			&\phantom{k}\sum_{i=1}^{2m}\Psi(X_{ii}; \alpha_3, \beta)  \\
	        	 ={}& f(X).
\end{align*}

By far, we have shown that the DBC matrix $\tilde{X}$ we constructed is in the feasible set and has a lower cost. Therefore, we conclude that the optimal solution matrix $X^*$ must be a DBC matrix. Without the loss of generality, we can assume $X=Z_{2m}(c_1, c_2, c_3)$, and $f(X)$ reduces to (up to a constant scaling)
\begin{align*}
	        f_3(c_1,c_2&,c_3)\defeq{} \\ 
	& (m-1) \Psi(c_1, \alpha_1, \beta) + m \Psi(c_2, \alpha_2, \beta) + \Psi(c_3, \alpha_3, \beta).
\end{align*}

The optimization problem \Cref{Eq:Thm2_optimization} is equivalently transformed into
\begin{align}
	(c_1^*, c_2^*, c_3^*) ={} &\argmin \  f_3(c_1, c_2, c_3)  \label{Eq:Thm2_proof_3_var_opt} \\
           \text{s.t: }\qquad & c_1 \leq c_3 \notag \\
	                          & \vert c_2\vert \leq \frac{m-1}{m}c_1 + \frac{1}{m}c_3. \notag
\end{align}

{\noindent{\bf Step 2}}.

In this step, we will prove that the optimal solution to \eqref{Eq:Thm2_proof_3_var_opt} must satisfy $c_1^* = c_3^*$. Specifically, we have the below proposition: 
\begin{prp}
\label{Prop:Thm2_proof_3to2}
	Let $\bar{c}_{13}\defeq \frac{m-1}{m}c_1 + \frac{1}{m}c_3$. If $(c_1, c_2, c_3)$ is a feasible solution to optimization problem \eqref{Eq:Thm2_proof_3_var_opt}, then $(\bar{c}_{13}, c_2, \bar{c}_{13})$ is also feasible, and its cost is no worse than $(c_1, c_2, c_3)$. I.e.,
	\begin{equation*}
		f_3(\bar{c}_{13}, c_2, \bar{c}_{13}) \leq f_3(c_1, c_2, c_3).
	\end{equation*}
\end{prp}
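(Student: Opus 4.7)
The plan is to verify two things: first that the new triple $(\bar{c}_{13}, c_2, \bar{c}_{13})$ is feasible for \eqref{Eq:Thm2_proof_3_var_opt}, and second that the cost is no larger. Feasibility is immediate: with $c_1' = c_3' = \bar{c}_{13}$ the constraint $c_1' \leq c_3'$ holds with equality, and $\frac{m-1}{m}c_1' + \frac{1}{m}c_3' = \bar{c}_{13}$, so $|c_2| \leq \bar{c}_{13}$ is exactly the given constraint on $c_2$ for the original triple. Since only the first and third arguments of $f_3$ change, it suffices to show
\[
(m-1)\Psi(\bar{c}_{13},\alpha_1,\beta) + \Psi(\bar{c}_{13},\alpha_3,\beta) \;\leq\; (m-1)\Psi(c_1,\alpha_1,\beta) + \Psi(c_3,\alpha_3,\beta).
\]

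The key algebraic trick I would use is the identity
\[
\Psi(c,\alpha_1,\beta) \;=\; \Psi(c,\alpha_3,\beta) + (\alpha_1-\alpha_3)\ln(1+e^{-c}),
\]
which lets me split the difference (RHS minus LHS) into two pieces. The first piece is
\[
(m-1)\Psi(c_1,\alpha_3,\beta) + \Psi(c_3,\alpha_3,\beta) - m\,\Psi(\bar{c}_{13},\alpha_3,\beta),
\]
which is nonnegative by Jensen's inequality applied to $\Psi(\cdot,\alpha_3,\beta)$, a strictly convex function (already established in Step~1), and the fact that $\bar{c}_{13}$ is the convex combination of $c_1$ and $c_3$ with weights $\frac{m-1}{m}$ and $\frac{1}{m}$. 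The second piece is
\[
(m-1)(\alpha_1-\alpha_3)\bigl[\ln(1+e^{-c_1}) - \ln(1+e^{-\bar{c}_{13}})\bigr].
\]

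For this second piece to be nonnegative I need both factors to have the right sign. The inequality $\alpha_1 > \alpha_3 > 0$ is already in hand from Part~1 (see \eqref{Eq:Thm2_proof_alpha_relation}). For the logarithmic term I use the fact that $\ln(1+e^{-c})$ is a decreasing function of $c$, so I need $c_1 \leq \bar{c}_{13}$; but $c_1 \leq \bar{c}_{13} = \frac{m-1}{m}c_1 + \frac{1}{m}c_3$ is equivalent to $c_1 \leq c_3$, which is precisely the feasibility constraint on the original triple. Thus both pieces are nonnegative and the desired inequality follows.

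The only subtle obstacle I anticipate is making sure that the splitting trick is motivated; without the relation $\alpha_1 > \alpha_3$, a naive Jensen argument on the pair $(\alpha_1,\alpha_3)$ would fail because the weights in $f_3$ multiply $\Psi(c_1,\alpha_1,\beta)$ and $\Psi(c_3,\alpha_3,\beta)$ with different $\alpha$'s. The decomposition above is precisely designed to reduce the problem to (i) a pure Jensen step in $c$ with a common $\alpha_3$, and (ii) a monotonicity step that reuses the ordering constraint $c_1 \leq c_3$. Both structural facts—convexity of $\Psi$ in its first argument and the strict inequality $\alpha_1 > \alpha_3$—are essential, and their interaction is what makes the proposition true.
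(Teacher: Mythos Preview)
Your proof is correct and is genuinely different from the paper's. The paper rewrites the target inequality as
\[
(m-1)\bigl[\Psi(\bar c_{13},\alpha_1,\beta)-\Psi(c_1,\alpha_1,\beta)\bigr]\;\le\;\Psi(c_3,\alpha_3,\beta)-\Psi(\bar c_{13},\alpha_3,\beta),
\]
sets $\delta=\bar c_{13}-c_1$ so that $c_3-\bar c_{13}=(m-1)\delta$, telescopes the right side into $m-1$ increments of width $\delta$, expresses each increment as an integral of $\Psi'$, and then compares integrands pointwise using the explicit formula $\Psi'(x,\alpha,\beta)=\beta-(\alpha+\beta)/(1+e^{x})$ together with $\alpha_1\ge\alpha_3$ and $\delta\ge 0$. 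Your argument instead exploits the affine dependence of $\Psi$ on $\alpha$ to peel off the factor $(\alpha_1-\alpha_3)\ln(1+e^{-c})$, reducing the problem to a single Jensen step on $\Psi(\cdot,\alpha_3,\beta)$ plus a monotonicity step on $\ln(1+e^{-c})$. Both routes rest on the same two structural facts, $\alpha_1>\alpha_3$ (\cref{Eq:Thm2_proof_alpha_relation}) and the feasibility constraint $c_1\le c_3$, but yours avoids the telescoping and integral bookkeeping entirely and is the shorter of the two.
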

		
\begin{proof} \hfill

We first show that $(\bar{c}_{13}, c_2, \bar{c}_{13})$ is feasible. The first constraint of \eqref{Eq:Thm2_proof_3_var_opt} holds as we have the same value in the first and third argument. It remains to verify the second constraint
\begin{align*}
	\frac{m-1}{m}\bar{c}_{13} + \frac{1}{m}\bar{c}_{13} = \bar{c}_{13} = \frac{m-1}{m}c_1 + \frac{1}{m}c_3 \geq \vert c_2 \vert,
\end{align*}
where the last inequality is exactly the second constraint for $(c_1, c_2, c_3)$ and holds because of its feasibility.

Next, we show that $f_3(\bar{c}_{13}, c_2, \bar{c}_{13}) \leq f_3(c_1, c_2, c_3)$.
Expanding both sides, our goal is equivalent to
\begin{align*}
     (m-1)\Psi(\bar{c}_{13}, \alpha_1&, \beta) + \Psi(\bar{c}_{13}, \alpha_3, \beta) \\
 \leq{} &    (m-1)\Psi(c_{1\phantom{3}}, \alpha_1, \beta) + \Psi(c_{3\phantom{1}}, \alpha_3, 
             \beta).
\end{align*}
Collecting terms, it is equivalent to show that
\begin{align}
       (m-1)(\Psi(\bar{c}_{13}, \alpha_1&, \beta) - \Psi(c_1, \alpha_1, \beta))  \notag \\
  \leq{} & \Psi(c_3, \alpha_3, \beta) - \Psi(\bar{c}_{13}, \alpha_3, \beta).\label{Eq:Thm2_proof_3to2_goal}
\end{align}

Let $\delta \defeq \bar{c}_{13} - c_1$, and expanding $\bar{c}_{13}$ we can verify that $c_3 - \bar{c}_{13} = (m-1) \delta$. The right hand side of \Cref{Eq:Thm2_proof_3to2_goal} can be rewritten as
\begin{align*}
	& \Psi(c_3, \alpha_3, \beta) - \Psi(\bar{c}_{13}, \alpha_3, \beta) \\
	= & \Psi(\bar{c}_{13} + (m-1) \delta, \alpha_3, \beta) - \Psi(\bar{c}_{13}, \alpha_3, \beta) \\
	= & \sum_{i=1}^{m-1}\Psi(\bar{c}_{13} + i \delta, \alpha_3, \beta) - \Psi(\bar{c}_{13} + (i-1)\delta, \alpha_3, \beta).
\end{align*}
In order to show that it is greater or equal than the left hand side of \Cref{Eq:Thm2_proof_3to2_goal}, it suffices to show that $\forall i\in\{1, \ldots, m-1\}$, 
\begin{align}
	\Psi(\bar{c}_{13} + i \delta, \alpha_3, \beta) -& \Psi(\bar{c}_{13} + (i-1)\delta, \alpha_3, \beta) \notag \\
	\geq & \Psi(\bar{c}_{13}, \alpha_1, \beta)- \Psi(\bar{c}_{13} - \delta, \alpha_1, \beta).
		\label{Eq:Thm2_proof_3to2_goal2}
\end{align}
Both sides of \Cref{Eq:Thm2_proof_3to2_goal2} are in the form of the difference between the $\Psi()$ function value of two points. Since $\Psi()$ is smooth with respect to $x$, the difference can be written as an integral of the derivative $\Psi'()$ between the two points. Specifically, the left hand side of \eqref{Eq:Thm2_proof_3to2_goal2}
\begin{align*}
	&\Psi(\bar{c}_{13} + i \delta, \alpha_3, \beta) - \Psi(\bar{c}_{13} + (i-1)\delta, \alpha_3, \beta) \qquad \qquad \\
	={}& \int_{\bar{c}_{13} + (i-1)\delta}^{\bar{c}_{13} + i \delta} \Psi'(t, \alpha_3, \beta)\text{d}t \\
	={}& \int_{\bar{c}_{13} -\delta}^{\bar{c}_{13}} \Psi'(t + i \delta, \alpha_3, \beta)\text{d}t.
\end{align*}
And the right hand side
\begin{align*}
	&\Psi(\bar{c}_{13}, \alpha_1, \beta)- \Psi(\bar{c}_{13} - \delta, \alpha_1, \beta) \qquad \qquad  \qquad \qquad \quad \\
	=& \int_{\bar{c}_{13} - \delta}^{\bar{c}_{13}} \Psi'(t, \alpha_1, \beta)\text{d}t.
\end{align*}
Thus, \Cref{Eq:Thm2_proof_3to2_goal2} is equivalent to
\begin{equation}
\label{Eq:Thm2_proof_3to2_goal3}
	\int_{\bar{c}_{13} -\delta}^{\bar{c}_{13}} \Psi'(t + i \delta, \alpha_3, \beta)\text{d}t \geq{} \int_{\bar{c}_{13} - \delta}^{\bar{c}_{13}} \Psi'(t, \alpha_1, \beta)\text{d}t.
\end{equation}
To prove \eqref{Eq:Thm2_proof_3to2_goal3}, it suffices to show that $\forall t\in[\bar{c}_{13} - \delta, \bar{c}_{13}]$,
\begin{equation}
\label{Eq:Thm2_proof_3to2_goal4}
	\Psi'(t + i \delta, \alpha_3, \beta) \geq{} \Psi'(t, \alpha_1, \beta).
\end{equation}
We can compute $\Psi'(x, \alpha, \beta)$ explicitly as
\begin{equation*}
	\Psi'(x, \alpha, \beta) = \beta - \frac{\alpha + \beta}{1+e^{x}}.
\end{equation*}
Thus, \eqref{Eq:Thm2_proof_3to2_goal4} is equivalent to
\begin{equation*}
	\beta - \frac{\alpha_3 + \beta}{1+e^{t + i \delta}} \geq \beta - \frac{\alpha_1 + \beta}{1+e^{t}}
\end{equation*}
or
\begin{equation*}
	\frac{\alpha_3 + \beta}{1+e^{t + i \delta}} \leq \frac{\alpha_1 + \beta}{1+e^{t}}.
\end{equation*}
Given $\alpha_1\geq\alpha_3$ ({\it cf.}\Cref{Eq:Thm2_proof_alpha_relation}) and $t\in[\bar{c}_{13} - \delta, \bar{c}_{13}]$, this inequality holds, which concludes the proof.
\end{proof}

\Cref{Prop:Thm2_proof_3to2} shows that the optimal solution to \eqref{Eq:Thm2_proof_3_var_opt} must satisfy $c_1^* = c_3^*$. Therefore, we can substitue $c_1 = c_3$ and remove $c_3$ in \eqref{Eq:Thm2_proof_3_var_opt}. This reduces $f_3(c_1, c_2, c_3)$ to (up to a constant scaling)
\begin{align*}
f_2(c_1,&c_2)\defeq{} \\ 
&(m-1) \Psi(c_1, \alpha_1, \beta) + m \Psi(c_2, \alpha_2, \beta) + \Psi(c_1, \alpha_3, \beta).
\end{align*}
And the optimization problem \Cref{Eq:Thm2_proof_3_var_opt} is equivalently transformed into
\begin{align}
	(c_1^*, c_2^*) ={} &\argmin \  f_2(c_1, c_2)  \label{Eq:Thm2_proof_2_var_opt} \\
          \text{s.t: }\qquad  & \vert c_2\vert \leq c_1. \notag
\end{align}

{\noindent{\bf Step 3}}.

We first consider the unconstrained optimal solution $(\tilde{c}_1,\tilde{c}_2)$ of optimization problem \eqref{Eq:Thm2_proof_2_var_opt}. Since $\alpha_1,\alpha_2,\alpha_3,\beta > 0$, all the $\Psi()$ functions are strictly convex and thus $f_2(c_1, c_2)$ is strictly convex. The unconstrained optimal solution is unique and can be computed by the $\nabla f_2(c_1,c_2)=0$. Denote $\bar{\alpha}_{13}\defeq{} \frac{m-1}{m}\alpha_1 + \frac{1}{m}\alpha_3$, we have 
\begin{align*}
	\frac{\partial f_2(c_1, c_2)}{\partial c_1} & ={} m\beta - \frac{m\bar{\alpha}_{13} + m\beta}{1+e^{c_1}} = 0 \\
	\frac{\partial f_2(c_1, c_2)}{\partial c_2} & ={} m\beta - \frac{m\alpha_2 + m\beta}{1+e^{c_1}} = 0,
\end{align*}
which gives us 
\begin{align*}
	\tilde{c}_1 & ={} \ln\left(\frac{\bar{\alpha}_{13}}{\beta}\right) \\
	\tilde{c}_2 & ={} \ln\left(\frac{\alpha_2}{\beta}\right).
\end{align*}
We claim that $(\tilde{c}_1, \tilde{c}_2)$ is infeasible. I.e., $\vert \tilde{c}_2\vert > \tilde{c}_1$. Given $k\geq 1$, with \Cref{alpha_2}, we have 
\begin{equation*}
	\alpha_2 ={}\frac{1}{4m^2}\left[w - \sum_{v=1}^w \lambda_2(W)^v\right]<{}\frac{w}{4m^2}\leq{}\frac{kw}{4m^2}=\beta.
\end{equation*}
Thus, $\tilde{c}_2 < 0$ and $\vert \tilde{c}_2\vert = -\tilde{c}_2$. Therefore, to show that $(\tilde{c}_1, \tilde{c}_2)$ is infeasible, we only need to prove
\begin{equation*}
	\ln\left(\frac{\beta}{\alpha_2}\right) > \ln\left(\frac{\bar{\alpha}_{13}}{\beta}\right).
\end{equation*}
Or equivalently,
\begin{equation}
\label{Eq:Thm2_proof_2to1_goal}
	\beta^2 \geq \alpha_2 \bar{\alpha}_{13}.
\end{equation}
 Recall the definition of $\alpha_1$, $\alpha_2$, and $\alpha_3$ in \Cref{alpha_1,alpha_2,alpha_3}, we have
\begin{equation}
\label{alpha_13}
\bar{\alpha}_{13}\defeq{} \frac{m-1}{m}\alpha_1 + \frac{1}{m}\alpha_3 ={} \frac{1}{4m^2}\left[w + \sum_{v=1}^w \lambda_2(W)^v\right]
\end{equation}
and
\begin{equation*}
\bar{\alpha}_{2}={}\frac{1}{4m^2}\left[w - \sum_{v=1}^w \lambda_2(W)^v\right].
\end{equation*}
Therefore,
\begin{align*}
\bar{\alpha}_{13}\alpha_2 & ={} \frac{1}{(4m^2)^2}\left[w^2 - \left( \frac{\lambda_2(W)-\lambda_2(W)^{w+1}}{1-\lambda_2(W)}\right)^2\right] \\
    & <{} \frac{w^2}{(4m^2)^2} \\
    & \leq{} \frac{k^2w^2}{(4m^2)^2} \\
    & ={} \beta^2.
\end{align*}
Thus, we have shown \Cref{Eq:Thm2_proof_2to1_goal} and thus, $(\tilde{c}_1, \tilde{c}_2)$ is infeasible.

Since the unconstrained optimal solution $(\tilde{c}_1, \tilde{c}_2)$ is infeasible, the constrained optimal solution $(c_1^*, c_2^*)$ must activate the constraint. Next, we will show that the activated constraint must be $c_1 = -c_2$.

Denote $\mathcal{L}$ the line segment joining $(c_1^*, c_2^*)$ and $(\tilde{c}_1, \tilde{c}_2)$, and $\mathcal{G}$ the feasible set of \eqref{Eq:Thm2_proof_2_var_opt}. We first claim that $\mathcal{L} \bigcap \mathcal{G} = \{(c_1^*, c_2^*)\}$ must hold. If not, assume there exists $(c_1^0, c_2^0)\neq (c_1^*, c_2^*)$ and $(c_1^0, c_2^0) \in \mathcal{L} \bigcap \mathcal{G}$. Since $(c_1^0, c_2^0)\in\mathcal{L}$, there exist $\gamma\in(0,1)$ such that 
\begin{equation*}
	(c_1^0, c_2^0) ={} \gamma(c_1^*, c_2^*) + (1-\gamma)(\tilde{c}_1, \tilde{c}_2).
\end{equation*}
Then, by convexity of $f_2$ and global optimality of $(\tilde{c}_1, \tilde{c}_2)$,
\begin{equation*}
	f_2(c_1^0, c_2^0) \leq{} \gamma f_2(c_1^*, c_2^*) + (1-\gamma)f_2(\tilde{c}_1, \tilde{c}_2)<{}f_2(c_1^*, c_2^*).
\end{equation*}
It gives us a feasible $(c_1^0, c_2^0)$ that has a lower cost, which contradicts with the constrained optimality of $(c_1^*, c_2^*)$. And thus, by contradiction, we have shown that $\mathcal{L} \bigcap \mathcal{G} = \{(c_1^*, c_2^*)\}$.

Note that, for the global optimizer $(\tilde{c}_1, \tilde{c}_2)$, we have $\tilde{c}_1 >  \tilde{c}_2$. To see this, note it is equivalent to $\bar{\alpha}_{13} > \alpha_2$, which is shown from \Cref{alpha_13,alpha_2}. For any points on the $\{(c_1,c_2)|\ c_1 = c_2 > 0\}$, the line segment joining $(c_1, c_2)$ and $(\tilde{c}_1, \tilde{c}_2)$ will intersect the feasible set $\mathcal{G}$ on infinite points, which contradicts with the claim we just proved above. Therefore, the constrained optimizer $(c_1^*, c_2^*)$ must satisfy $c_1^* = -c_2^*$.
		
Therefore, we can substitue $c_1 = -c_2$ and remove $c_2$ in \eqref{Eq:Thm2_proof_2_var_opt}. This reduces $f_2(c_1, c_2)$ to (up to a constant scaling)
\begin{align*}
f_1(&c_1)\defeq{} \\ 
&(m-1) \Psi(c_1, \alpha_1, \beta) + m \Psi(c_1, \alpha_2, \beta) + \Psi(c_1, \alpha_3, \beta).
\end{align*}
And the optimization problem reduces to
\begin{equation}
\label{Eq:Thm2_proof_1_var_opt}
	c_1^* ={} \argmin \  f_1(c_1). 
\end{equation}
		
Optimization problem \eqref{Eq:Thm2_proof_1_var_opt} has a unique optimal solution given by $f_1'(c_1)={}0$:
\begin{equation*}
c_1^* = \ln\left(\frac{\bar{\alpha}_{13} + \beta}{\alpha_2 + \beta}\right).
\end{equation*}

This gives the optimal solution $X^*$ to \eqref{Eq:Thm2_optimization} when $\mathcal{H}=\{X \ | \ X \succeq 0\}$:
\begin{equation*}
	X^*= \begin{cases}
			\phantom{-}\ln\left(\frac{\bar{\alpha}_{13}+\beta}{\alpha_2+\beta}\right), &\text{if $(i,j)\in\mathcal{E}_0$ or $i=j$} \\
			-\ln\left(\frac{\bar{\alpha}_{13}+\beta}{\alpha_2+\beta}\right), &\text{if $(i,j)\in\mathcal{E}_1$}.
 		 \end{cases} \\
\end{equation*}

{\noindent{\bf Part 4)}}

Since both $X^*(\mathbb{R}^{n\times n}) $ and $X^*(\mathbb{S}^n_{+})$ are DBC matrices, we can compute their nuclear norms from the proof of \Cref{Prop:property_DBC}. Specifically, for a DBC matrix $X = Z_{2m}(c_1, c_2, c_3)$, we have
\begin{align*}
	\Vert X \Vert_* ={}& \vert m(c_1+c_2) + (c_3-c_1) \vert +  \vert m(c_1-c_2) + (c_3-c_1) \vert  \\
	  & + (2m-2) \vert c_3-c_1 \vert.
\end{align*}
Since from part 2) we have
\begin{align*}
	X^*(\mathbb{R}^{n\times n}) = Z_{2m}\left(
	\ln\left(\frac{\alpha_1}{\beta}\right), 
	\ln\left(\frac{\alpha_2}{\beta}\right), 
	\ln\left(\frac{\alpha_3}{\beta}\right)
	\right)
\end{align*}
with 
\begin{align*}
	\alpha_1 &> \alpha_3 > 0 \\
	\alpha_1 &> \alpha_2 > 0,
\end{align*}
we have
\begin{align*}
	\Vert X^*(\mathbb{R}^{n\times n}) \Vert_* ={}& m\ln\left(\frac{\alpha_1\alpha_2}{\beta^2}\right) + \ln\left(\frac{\alpha_3}{\alpha_1}\right) +  
	\\
	& \left\vert m\ln\left(\frac{\alpha_1}{\alpha_2}\right) +  \ln\left(\frac{\alpha_3}{\alpha_1}\right) \right\vert + 
	  \\
	  & (2m-2) \ln\left(\frac{\alpha_1}{\alpha_3}\right).
\end{align*}
Note that $\alpha_i = \Theta(1/n^2)$ and $\beta = \Theta(1/n^2)$.
Therefore, $\Vert X^*(\mathbb{R}^{n\times n}) \Vert_* = \Theta(n)$. 
Similarly, from part 3) we have
\begin{align*}
	X^*(\mathbb{S}^n_{+}) = Z_{2m}\left(
	\nu_1, 
	-\nu_1, 
	\nu_1
	\right)
\end{align*}
where $\nu_1 := \ln\left(\frac{\bar{\alpha}_{13}+\beta}{\alpha_2+\beta}\right)$ with $\bar{\alpha}_{13} \defeq \frac{m-1}{m}\alpha_1 + \frac{1}{m}\alpha_3$. We can get $\bar{\alpha}_{13} = \Theta(1/n^2)$ which leads to $\nu_1 = \Theta(1)$. And
\begin{align*}
	\Vert X^*(\mathbb{S}^n_{+}) \Vert_* ={}& 2m\nu_1 = \Theta(n).
\end{align*}
$\hfill \square$

\section{Neural Network Implementation}
\label{appendix_exp_details}
In this appendix we detail our neural network implementation of VEC (or {\ErgoVEC}). We first list the structure of the neural network. After that, we will detail our construction of training set (samples and labels). Next, we show that the neural network optimization objective is exactly the objective of VEC (or {\ErgoVEC}). Lastly, we provide the learning and optimization settings we used in training.

{\bf{Structure of the neural network.}} \Cref{fig:app_neural_network_structure} illustrates the structure of our neural network. There are four layers in the neural network.
\begin{enumerate}
	\item Input layer. This layer receives a one-hot vector encoders for each nodes in a pair $(i,j)\in\mathcal{V}^2$ as the input of this neural network
	\item Embedding layer. The embedding layer is a $n \times d$ matrix where the rows represent the $d$ dimensional embedding vectors of nodes. These vectors are updated in the optimization iteration after each epoch. After the optimization process, they will be used as final output of the VEC (or {\ErgoVEC}) algorithm. Please note that this is the only layer that will be updated in the entire optimization process. In the neural network, this layer takes the two one-hot vectors from the input layer and returns the two corresponding row vectors to the next layer. 
	\item Dot Product layer. This layer takes two embedding vectors and returns the dot product between them.
	\item Output layer. This layer takes a scalar (the dot product from previous layer), and returns the sigmoid function value $S(x) \defeq \frac{1}{1 + e^{-x}}$ of it as the output of this neural network.
\end{enumerate}
To sum up, this neural network takes a pair of nodes $(i,j)$ as input and returns the sigmoid function of their dot product $\hat{y}_{(i,j)}$as output.

\begin{figure}[H]
\centering
	\begin{subfigure}{\linewidth}
        \centering
		\includegraphics[width=\linewidth]{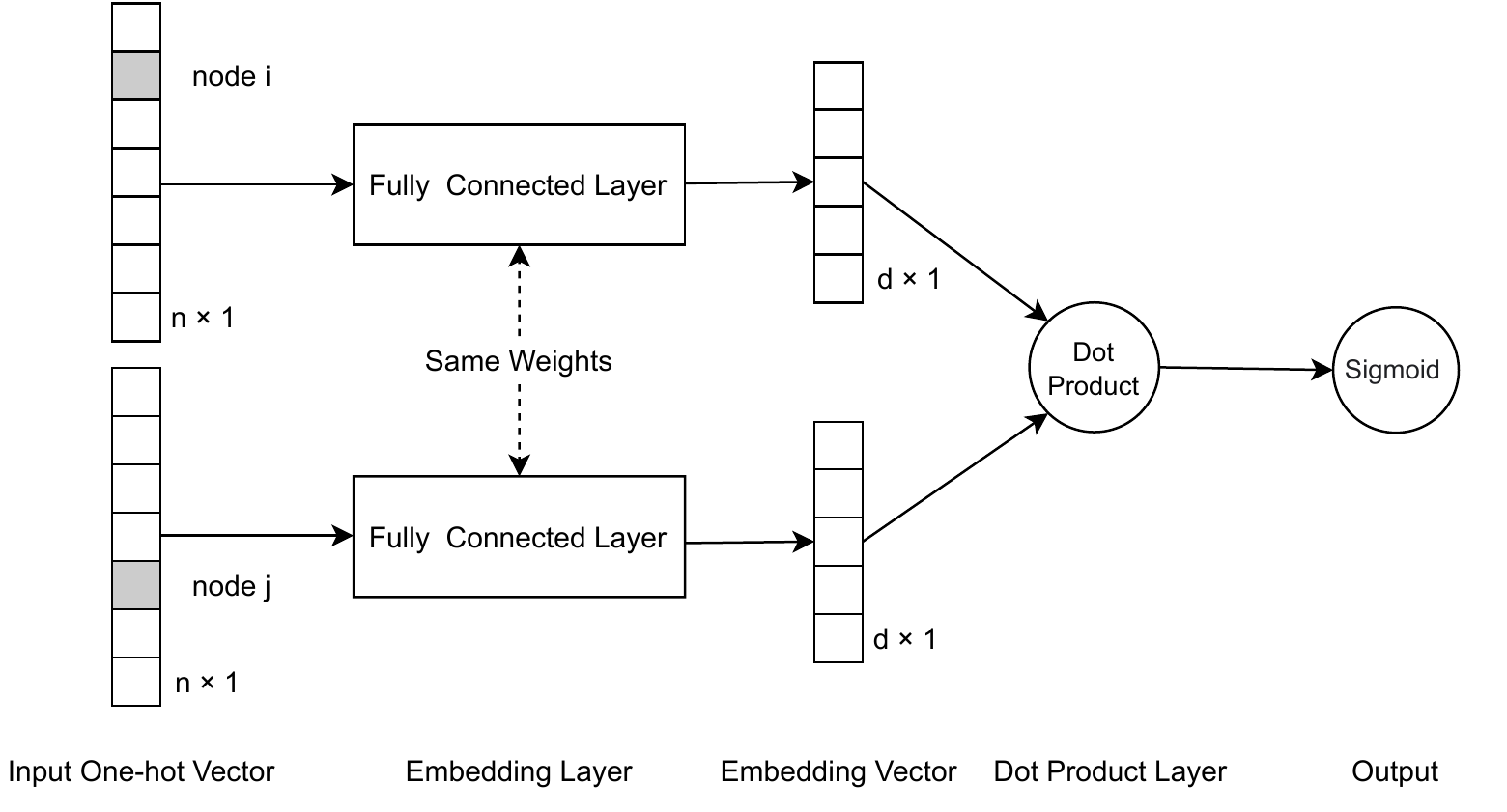}
    \end{subfigure}
\caption{\small Structure diagram of the neural network implemented with Keras package.}
\label{fig:app_neural_network_structure}
\end{figure}

{\bf{Training set and loss function.}} We use a weighted training set $D=\{((i,j), w_{(i,j)}, y_{(i,j)})\}$ obtained from the union of two parts: a weighted positive set and a weighted negative set. Both sets contain all node pairs $(i,j)\in\mathcal{V}^2$, but label and weigh them differently. All node pairs in the positive set arelabeled $1$ with weights equal $n_{ij}^+$ ($\bar{n}_{ij}^+$ for {\ErgoVEC}), whereas node pairs in the negative set are labeled as $0$ with weights equal $n_{ij}^-$ ($\bar{n}_{ij}^-$ for {\ErgoVEC}). The training set is randomly shuffled and fed in the neural network during each epoch. For the loss function, we choose binary cross entropy
\begin{equation*}
	H(D) \defeq{} \frac{1}{N}\sum_{(i,j)\in D} H(i,j),
\end{equation*}
where
\begin{equation}
H(i,j) ={} - w_{(i,j)}\Big[y_{(i,j)}\ln(\hat{y}_{(i,j)}) + (1-y_{(i,j)})\ln(1-\hat{y}_{(i,j)})\Big].
\label{Eq:app_bin_entropy}
\end{equation}
{\bf Equivalence proof.} Here we show that the neural network equipped with this training set and loss function has the exact same objective as VEC. (For {\ErgoVEC}, the same holds after replacing $n_{ij}^+$ with $\bar{n}_{ij}^+$ in the following equations.) First note that, for $(i,j)$ in positive set, $y_{(i,j)} = 1$, 
\begin{align*}
	H(i,j) &= - n_{ij}^+\ln(\hat{y}_{(i,j)}) = - n_{ij}^+\ln{(S(\bm{u}_i^\top\bm{u}_j)}) \\
	&= n_{ij}^+\sigma(+\bm{u}_i^\top\bm{u}_j)
\end{align*}
and for $(i,j)$ in negative set, $y_{(i,j)} = 0$,
\begin{align*}
	H(i,j) = - n_{ij}^-\ln(1 - \hat{y}_{(i,j)}) &= - n_{ij}^-\ln(1-S(\bm{u}_i^\top\bm{u}_j)) 
	\\
	&= n_{ij}^+\sigma(-\bm{u}_i^\top\bm{u}_j).
\end{align*}
Therefore, 
\begin{align*}
	 H(D) &\defeq{} \frac{1}{N}\sum_{(i,j)\in D} H(i,j) \\
	 &={} \sum_{(i,j)\in \mathcal{V}^2 }\left[n_{ij}^+ \;\sigma(\mathbf{u}_{i}^{\top}\mathbf{u}_{j})
  + n_{ij}^{-}\; \sigma( -\mathbf{u}_{i}^{\top}\mathbf{u}_{j} )\right],
\end{align*}
which is the same as \eqref{Eq:VEC}.

{\bf{Optimization paramters.}} We used the Adam optimizer with default parameter choice except for learning rate. We set learning rate as described in Table~\ref{Table:learning_rate}, although we want to make a note that the optimal learning rates do depend on specific graph realizations.

\begin{table}[!ht]
\centering
\vspace{0.5em}
\renewcommand{\arraystretch}{1.5}
\begin{tabular}{|c|c|c|c|c|}
	\hline
	 & Algorithm & $n$ & l.r. & \# epochs \\
	\hline
	\multirow{8}{*}{\begin{tabular}{@{}c@{}}Linear \\ Degree \\ Regime\end{tabular} } & VEC & 100 & 0.001 & 400 \\
	& VEC & 200 & 0.001 & 200 \\
	& VEC & 500 & 0.001 & 80 \\
	& VEC & 1000 & 0.001 & 40 \\
	\cline{2-5}
	& {\ErgoVEC} & 100 & 0.02 & 400 \\
	& {\ErgoVEC} & 200 & 0.02 & 200 \\
	& {\ErgoVEC} & 500 & 0.02 & 80 \\
	& {\ErgoVEC} & 1000 & 0.02 & 40 \\
	 \hline
	\multirow{8}{*}{\begin{tabular}{@{}c@{}}Logarithmic \\ Degree \\ Regime\end{tabular} } & VEC & 100 & 0.001 & 400 \\
	& VEC & 200 & 0.00021 & 1500 \\
	& VEC & 500 & 0.001 & 200 \\
	& VEC & 1000 & 0.001 & 200 \\
	\cline{2-5}
	& {\ErgoVEC} & 100 & 0.0025 & 400 \\
	& {\ErgoVEC} & 200 & 0.00021 & 1500 \\
	& {\ErgoVEC} & 500 & 0.0025 & 200 \\
	& {\ErgoVEC} & 1000 & 0.0025 & 200 \\
	\hline
\end{tabular}
\renewcommand{\arraystretch}{1}
\caption{\small Learning rates and number of epochs used in each experiment.}
\label{Table:learning_rate}
\end{table}

{\bf{Remarks on convergence.}} In our experiments, we note that the objective functions seem to converge after a number of epochs, but the embedding vectors do not. The convergence behavior over epochs is shown in Fig.~\ref{fig:app_neural_network_keras_convergence} with the plot of the loss function as a function of number of epochs displayed in Fig.~\ref{fig:app_neural_network_keras_convergence}~(a). Changes in the embedding vectors measured by the ratio of the Procrustes distance between embedding vectors in consecutive epochs and the Frobenius norm of the embedding vectors in the previous epoch are displayed in Fig.~\ref{fig:app_neural_network_keras_convergence}~(b). We observe that the loss function drops quickly after the first few epochs and remains essentially flat after $1000$ epochs, but the change in the embedding vectors is bounded away from $0$ even after $1500$ epochs. A possible explanation for this behavior is that many neural network implementations and optimization procedures, including the Keras package that we used, focus on the convergence of the objective loss rather than the convergence of layer weights. Although this is very useful in various applications, it may be inadequate for finding the optimal numerical solution (the minimizing weights). Future work could attempt improving our implementation to overcome such limitations.
\begin{figure}[H]
\centering
\centering
    \begin{subfigure}{0.9\linewidth}
		\includegraphics[width=0.9\linewidth]{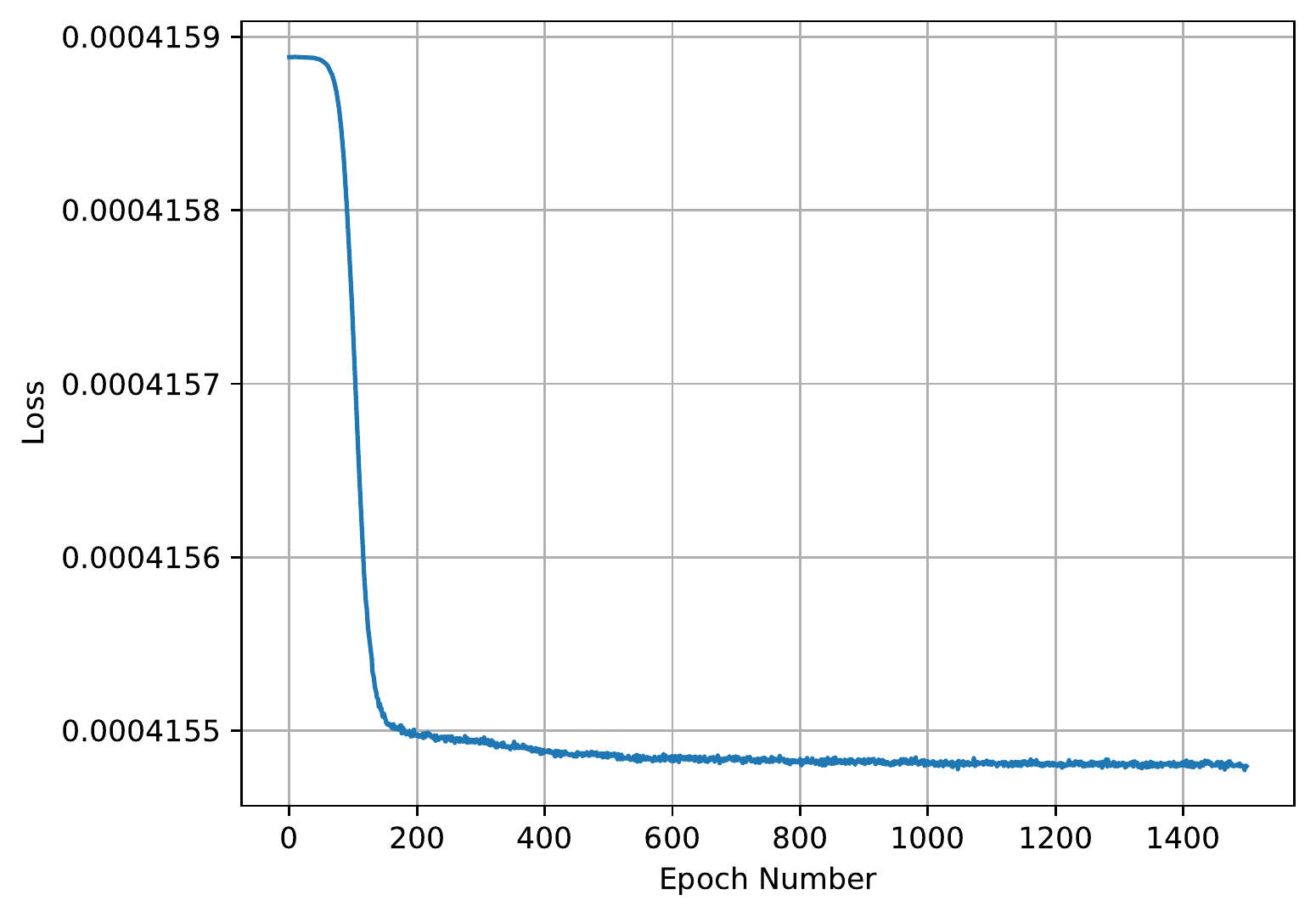}
        \caption{\small Loss function value as a function of number of epochs.}
    \end{subfigure}
    
    \begin{subfigure}{0.9\linewidth}
		\includegraphics[width=0.9\linewidth]{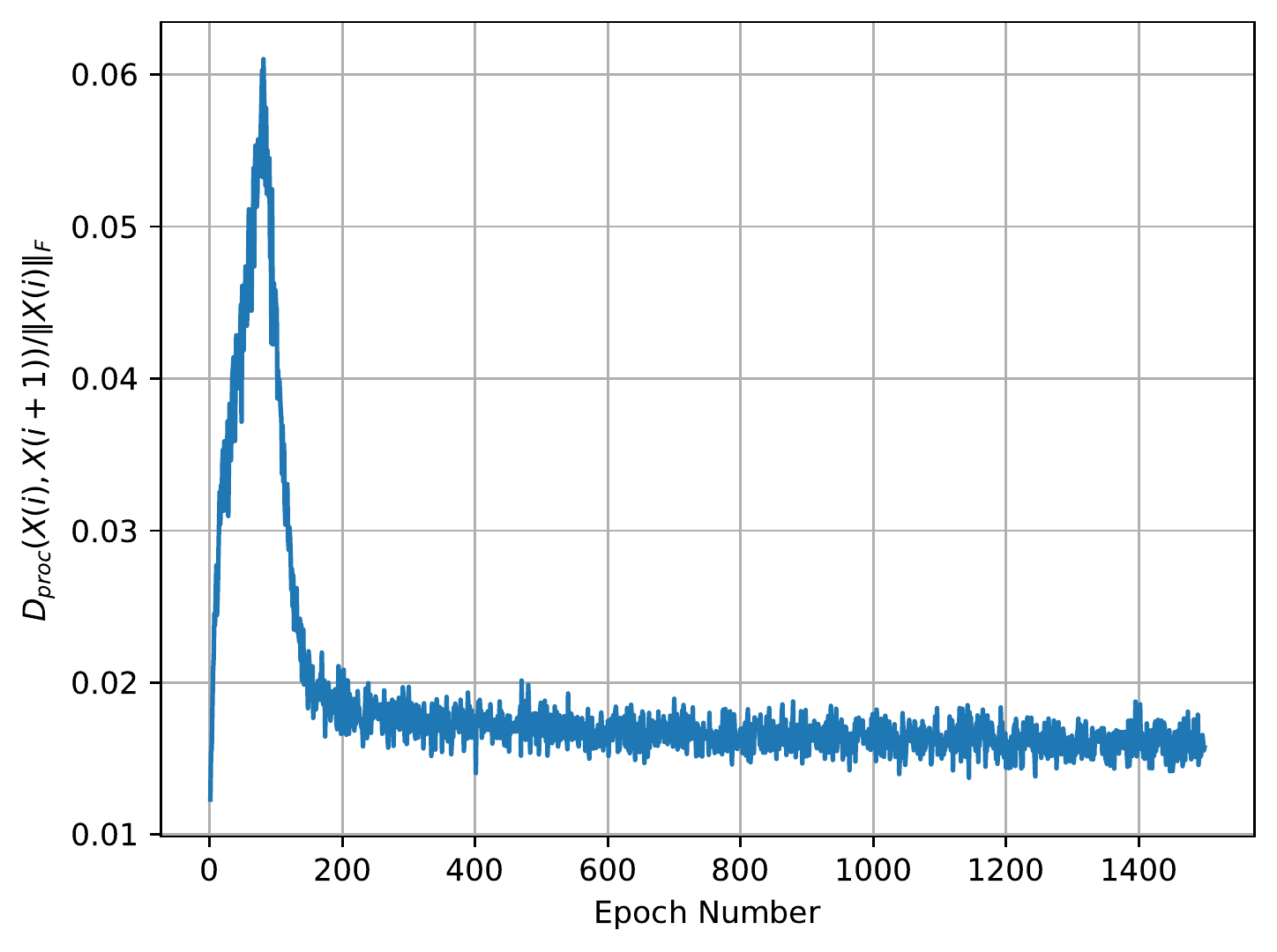}
        \caption{\small Change in embedding vectors versus epochs. The change is computed as the ratio of the Procrustes distance between embedding vectors in epoch $i$ and $i+1$ and the Frobenius norm of the embedding vectors in epoch $i$.}
    \end{subfigure}
\caption{\small Illustrating potential convergence issues associated with neural-network-based optimization of node embedding objectives.}
\label{fig:app_neural_network_keras_convergence}
\end{figure}

\ifCLASSOPTIONcompsoc
  \section*{Acknowledgments}
\else
  \section*{Acknowledgment}
\fi

This work was supported in part by the U.S.~National Science Foundation under grant 1527618, the Department of Electrical and Computer Engineering, and the Division of Systems Engineering at Boston University. Any opinions, findings, and conclusions or recommendations expressed in this material are those of the author(s) and do not necessarily reflect the views of the supporting institutions.

\ifCLASSOPTIONcaptionsoff
  \newpage
\fi

\bibliographystyle{IEEEtran}

\bibliography{references.bib}

\end{document}